\theoremstyle{thmstyleone}%
\newtheorem{theorem}{Theorem}
\newtheorem{proposition}[theorem]{Proposition}%
\newtheorem{assumption}[theorem]{Assumption}%
\newtheorem{lemma}[theorem]{Lemma}%
\theoremstyle{thmstyletwo}%
\theoremstyle{thmstylethree}%
\newcommand{\norm}[1]{\left\lVert#1\right\rVert}
\DeclareMathOperator*{\argmin}{arg\,min}
\newcommand\restr[2]{{
  \left.\kern-\nulldelimiterspace 
  #1 
  \vphantom{\big|} 
  \right|_{#2} 
  }}
\newcommand{\dist}[0]{\mathrm{dist}} 
\newcommand{\dif}[0]{\mathrm{d}} 
\begin{document}

\title[Simultaneous Optimization of Geodesics and Fr\'echet Means]{Simultaneous Optimization of Geodesics and Fr\'echet Means}


\author*[1]{\fnm{Frederik} \sur{Möbius Rygaard}}\email{fmry@dtu.dk}
\author*[1]{\fnm{Søren} \sur{Hauberg}}\email{sohau@dtu.dk}
\author*[1]{\fnm{Steen} \sur{Markvorsen}}\email{stema@dtu.dk}

\affil*[1]{\orgdiv{DTU Compute}, \orgname{Technical University of Denmark (DTU)}, \orgaddress{\street{Anker Engelundsvej 1}, \city{Kongens Lyngby}, \postcode{2800}, \country{Denmark}}}



%
%
%
\abstract{A central part of geometric statistics is to compute the Fréchet mean. This is a well-known intrinsic mean on a Riemannian manifold that minimizes the sum of squared Riemannian distances from the mean point to all other data points. The Fréchet mean is simple to define and generalizes the Euclidean mean, but for most manifolds even minimizing the Riemannian distance involves solving an optimization problem. Therefore, numerical computations of the Fréchet mean require solving an embedded optimization problem in each iteration. We introduce the \textit{GEORCE-FM} algorithm to simultaneously compute the Fréchet mean and Riemannian distances in each iteration in a local chart, making it faster than previous methods. We extend the algorithm to Finsler manifolds and introduce an adaptive extension such that \textit{GEORCE-FM} scales to a large number of data points. Theoretically, we show that \textit{GEORCE-FM} has global convergence and local quadratic convergence and prove that the adaptive extension converges in expectation to the Fr\'echet mean. We further empirically demonstrate that \textit{GEORCE-FM} outperforms existing baseline methods to estimate the Fr\'echet mean in terms of both accuracy and runtime.}

\keywords{Riemannian Manifolds, Finsler Manifolds, Fréchet Mean, Control Problem , Adaptive Optimization \\
\textbf{Mathematics Subject Classification} 53C22, 49Q99, 65D15}



\maketitle

\section{Introduction}
Computing means on Riemannian manifolds is hard. In Euclidean space, computing means is taken for granted, due to its simple closed-form expression, which is not the case for general Riemannian manifolds. At the same time, Riemannian manifolds play an increasing role in data analysis with application to, e.g., generative models \citep{debortoli2022riemannianscorebasedgenerativemodelling, jo2024generativemodelingmanifoldsmixture, mathieu2020riemanniancontinuousnormalizingflows, moser_flow}, robotics \citep{beikmohammadi2023reactive, simeonov2021neuraldescriptorfieldsse3equivariant, feiten_robotics}, and protein modeling \citep{Watson2022.12.09.519842, deftelsen_proetin, Shapovalov_protein}.

At the center of Riemannian statistics lies the concept of the \emph{Fr\'echet mean} \citep{frechet1948} as the building block for more elaborate statistical models \citep{fletcher_geo_reg, fletcher_pga, pennec2006statriemann, pcurves, kernel_density_rm, nonlinear_rm_regression}. The Fr\'echet mean naturally generalizes the Euclidean mean, which, in the discrete case, minimizes the sum of squared distances to all data points \citep{frechet1948},
\begin{equation} \label{eq:frechet_mean}
    \mu = \argmin_{y \in \mathcal{M}}\sum_{i=1}^{N} \dist^{2}(y, a_{i}),
\end{equation}
where $\{a_{i}\}\subset \mathcal{M}$ are the given data points on a Riemannian manifold, $(\mathcal{M},g)$, with metric tensor $g$ and induced distance function, $\dist: \mathcal{M} \times \mathcal{M} \rightarrow \mathbb{R}_{+}$. Only for a few selected families of Riemannian manifolds is the distance known in closed form, and the Fr\'echet mean can be computed in such cases using, for instance, Riemannian gradient descent \citep{pennec2006statriemann, udriste2013convex}. For general Riemannian manifolds, the shortest curves, known as \textit{geodesics}, have to be computed by either solving a system of (typically non-linear) ordinary differential equations (\textsc{ode}) \citep[Page 62]{do1992riemannian} or by minimizing the energy functional with respect to curves $\gamma$ \citep[Page 194]{do1992riemannian}
\begin{equation} \label{eq:energy}
    \mathcal{E}_{\gamma}(a,b) = \frac{1}{2}\int_{0}^{1}\dot{\gamma}^{\top}(t)G(\gamma(t))\dot{\gamma}(t)\,\dif t, \quad \gamma(0)=a, \gamma(1)=b,
\end{equation}
where $a,b \in \mathcal{U}$ are the fixed start and endpoint of the curve $\gamma$, respectively, where $\phi: U \rightarrow \mathcal{M}$ is a local chart for an open subset $U \subseteq \mathbb{R}^{d}$. Thus, for general Riemannian manifolds, the Fr\'echet mean is iteratively updated by constructing geodesics minimizing Eq.~\ref{eq:energy} with respect to a curve $\gamma$ connecting each data point and the candidate of the Fr\'echet mean. This makes computations of the Fr\'echet mean time consuming and possibly intractable in practice. 

Even computing geodesics inside a geodesically strongly convex set by minimizing Eq.~\ref{eq:energy} can in itself be complicated and time consuming. Many off-the-shelf algorithms, such as \textit{ADAM} \citep{kingma2017adam} and \textit{BFGS} \citep{shanno_bfgs, broyden_bfgs, fletcher_bfgs, Goldfarb1970AFO}, either scale poorly or are not sufficiently accurate due to slow convergence \citep{georce}. Recently, the \textit{GEORCE}-algorithm \citep{georce} has shown promising results in computing geodesics on general Riemannian manifolds with fast convergence and high accuracy. 

\textbf{In this paper}, we show how to rewrite the distance-based notion of the Fr\'echet mean in Eq.~\ref{eq:frechet_mean} for both Riemannian and Finslerian manifolds to jointly minimize the Fr\'echet mean and geodesics. Using this, we introduce \textit{GEORCE-FM} (\textit{GEORCE for Fr\'echet Mean}) by extending \textit{GEORCE} to simultaneously update the geodesics and the Fréchet mean in each iteration in a local chart, as illustrated in Fig.~\ref{fig:conceptual_riemannian_frechet}. Not only does this significantly reduce runtime; we also prove that the extended version converges to a local minimum (global convergence) and local quadratic convergence similar to the original \textit{GEORCE} algorithm. We further introduce an adaptive extension to estimate the Fréchet mean to make it scalable to a high number of data points. We prove that the adaptive extension converges in expectation to the Fr\'echet mean. Empirically, we show that \textit{GEORCE-FM} outperforms baseline methods to compute the Fréchet mean in terms of accuracy and runtime for various manifolds and dimensions.
\begin{figure}[t!]
    \centering
    \includegraphics[width=1.0\textwidth]{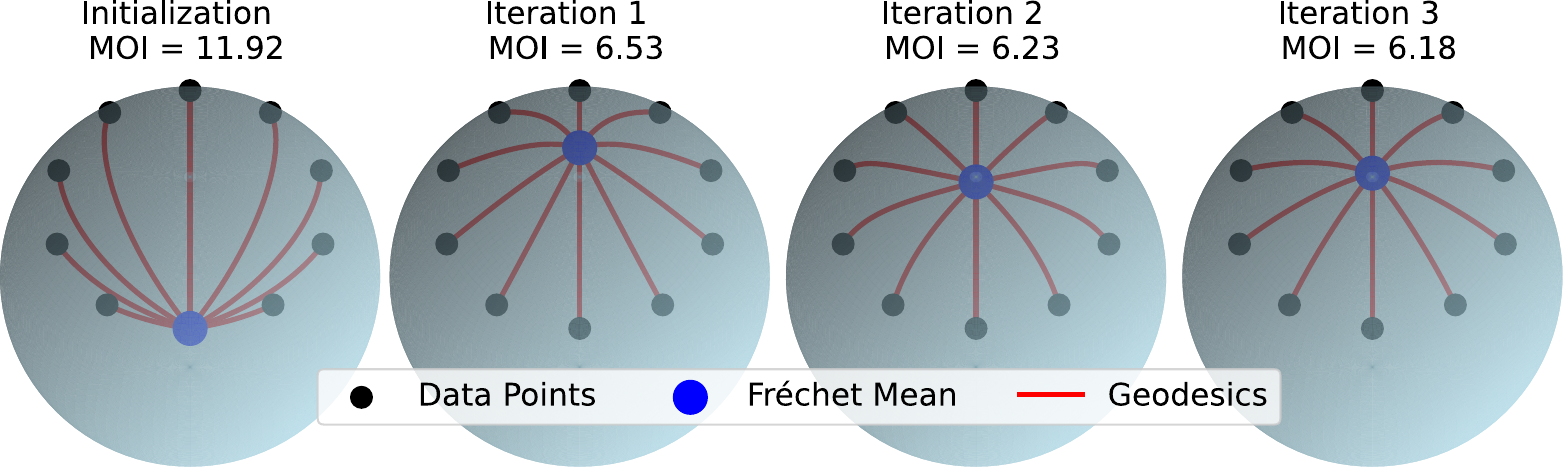}
    \caption{Iterative computation for 10 data points on a circle within the convexity radius of the north pole, and hence guaranteeing the existence and uniqueness of the Fr\'echet mean of the data points around the north pole on a unit sphere, $\mathbb{S}^{2}$, using \textit{GEORCE-FM}. After only 3 iterations the gradient of the discretized geodesic points and Fr\'echet mean is less than $10^{-4}$. ``MOI'' refers to moment of inertia given by the sum of squared distances in Eq.~\ref{eq:frechet_mean}.
    }
    \label{fig:conceptual_riemannian_frechet}
    \vspace{-1.em}
\end{figure}


\section{Background and related work}
\textbf{A Riemannian manifold} is a differentiable manifold, $\mathcal{M}$, equipped with a Riemannian metric \citep[Page 38]{do1992riemannian}. The Riemannian metric, $g$, defines a smoothly varying inner product in the tangent bundle $T\mathcal{M}$. For each point $x \in \mathcal{M}$, the tangent space $T_{x}\mathcal{M}$ consists of tangent vectors to all smooth curves passing through $x \in \mathcal{M}$, and the tangent space itself is a vector space \citep[Page 7 Definition 2.6]{do1992riemannian}. The inner product induces the length of curves, $\gamma: [0,1] \rightarrow U$ connecting points.
\begin{equation*}
    \mathcal{L}_{\gamma}(a_{i},a_{j}) = \int_{0}^{1}\sqrt{\dot{\gamma}^{\top}(t)G(\gamma(t))\dot{\gamma}(t)}\,\dif t, \quad \gamma(0)=a_{i}, \gamma(1)=a_{j},
\end{equation*}
where $U \subseteq \mathbb{R}^{d}$ denotes an open set in a local chart $\phi: U \rightarrow \mathcal{M}$. A geodesic is a curve, $\gamma$ that minimizes the length between any points $x,y \in \mathcal{M}$ over all curves connecting $x$ and $y$, thus giving rise to the distance function, $\dist: \mathcal{M} \times \mathcal{M} \rightarrow \mathbb{R}$ \citep[Page 146]{do1992riemannian}. We assume that all manifolds are \textit{geodesically complete} inside a strongly convex set containing the data points such that for any two datapoints $x,y \in \mathcal{M}$ there exists at least one connecting geodesic. Geodesics can be found by minimizing the energy functional in Eq.~\ref{eq:energy} or by solving a system of \textsc{ode} [Lemma 2.3 in \cite{do1992riemannian}]
\begin{equation} \label{eq:bvp_ode}
    \ddot{\gamma}^{k}(t) + \Gamma_{ij}^{k}\left(\gamma(t)\right)\dot{\gamma}^{i}(t)\dot{\gamma}^{j}(t) = 0, \quad \gamma(0)=x,\gamma(1)=y,
\end{equation}
where $\{\Gamma_{ij}^{k}\}$ denote the Christoffel symbols and the multiplication of the upper and lower indices implies a sum following the Einstein convention. The \emph{exponential map} is constructed explicitly with geodesics, $\mathrm{Exp}_{x}: T_{x}\mathcal{M} \rightarrow \mathcal{M}$ defined by $\mathrm{Exp}_{x}(v) = \gamma(1)$ with $\gamma$ a geodesic such that $\gamma(0)=x$ and $\dot{\gamma}(0)=v$ \citep[Page 70]{do1992riemannian}. The \emph{exponential map} is locally a diffeomorphism in a sufficiently small star-shaped neighborhood $U \subset T_{x}\mathcal{M}$ for $x \in \mathcal{X}$ and within this neighborhood, the \emph{logarithmic map} is defined as $\mathrm{Log}_{x}: \mathcal{M} \rightarrow T_{x}\mathcal{M}$ as $\mathrm{Log}_{x} = \mathrm{Exp}_{x}^{-1}$ \citep[Page 70]{do1992riemannian}. 

\begin{wrapfigure}{r}{0.5\textwidth}
    \vspace{-3.em}
    \centering
    \includegraphics[width=0.5\textwidth]{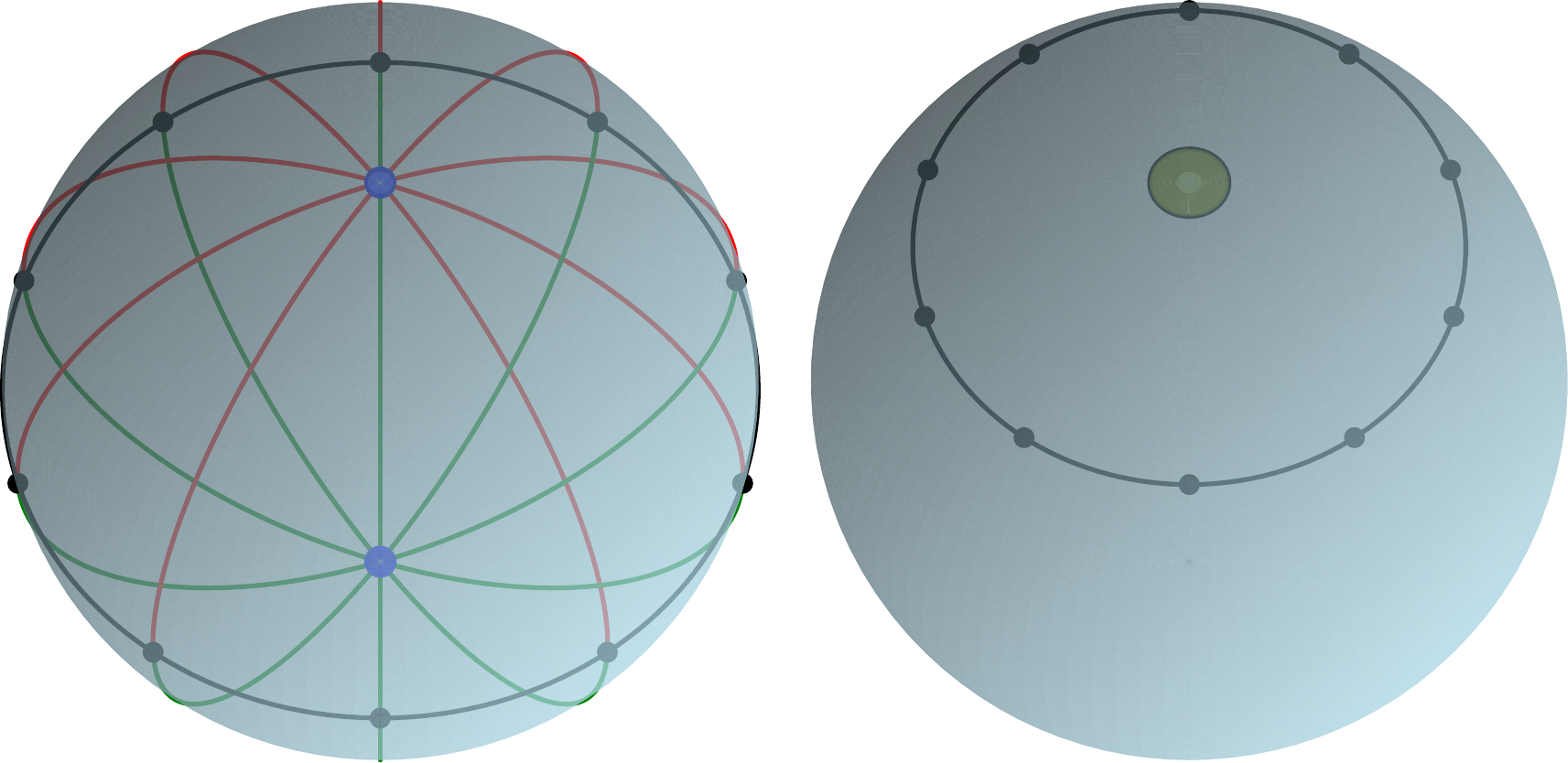}
    \caption{The left figure shows a distribution of data points (black) on the equator, where both the north and south pole are Fr\'echet means (blue). The right figure shows a distribution of data points on a circle around the north pole, but where a closed set represented by the green area is removed from the unit sphere. In this case, the Fr\'echet mean does not exist.}
    \label{fig:riemannian_non_uniqueness}
    \vspace{-6.5em}
\end{wrapfigure}

\textbf{A Finsler metric} is a generalization of Riemannian manifolds, where the tangent spaces are equipped with a Finsler metric rather than an inner product. A Finsler metric is a Minkowski norm, $F: \mathcal{M} \times T\mathcal{M}\setminus\{0\} \rightarrow \mathbb{R}_{+}$, in the sense that it is a smooth function and satisfies the following requirements \citep[Page 11]{ohta2021comparison}
\begin{itemize}
    \item $F(x, cv) = cF(x, v)$ for all $v \in T_{x}\mathcal{M} \setminus \{0\}$ and $c>0$ for $x \in \mathcal{M}$. 
    \item The fundamental tensor, $G := \frac{\partial^{2}F}{\partial v^{i}\partial v^{j}}(x,v)$, is symmetric and positive definite.
\end{itemize}
Geodesics for a Finsler manifold are curves that minimize the integrated squared Finsler length over a set of curves $\gamma$, 
\begin{equation} \label{eq:finsler_energy}
    \min_{\gamma} \frac{1}{2}\int_{0}^{1}F^{2}\left(\gamma(t), \dot{\gamma}(t)\right) \,\dif t.
\end{equation}
which gives rise to the following \textsc{ode} system generalizing Eq.~\ref{eq:bvp_ode}
\begin{equation} \label{eq:finsler_ode}
    \ddot{\gamma}^{k}(t) + \eta_{ij}^{k}\left(\gamma(t), \dot{\gamma}(t)\right)\dot{\gamma}^{i}(t)\dot{\gamma}^{j}(t) = 0, \quad \gamma(0)=x,\gamma(1)=y,
\end{equation}
where $\{\eta_{ij}^{k}\}$ generalizes the Riemannian Christoffel symbols to the Finslerian case and also depends on the velocity \citep[Page 27]{ohta2021comparison}.

\textbf{The Fr\'echet Mean} is defined in the continuous case as \citep{frechet1948, pennec2006statriemann}
\begin{equation} \label{eq:frechet_cont}
    \mu = \argmin_{y \in \mathcal{M}}\mathbb{E}\left[\dist^{2}\left(y, X\right)\right] = \argmin_{y \in \mathcal{M}}\int_{\mathcal{M}}\dist^{2}\left(y, z\right)p_{X}(z) \,\dif \mathcal{M}(z),
\end{equation}
for a random variable $X$ defined on a probability space $\left(\Omega, \mathcal{F}, \mathbb{P}\right)$, where $\dif \mathcal{M}(z)$ refers to the volume measure of the manifold $\mathcal{M}$ \citep{pennec2006statriemann}. The discretized estimator of the Fr\'echet mean in Eq.~\ref{eq:frechet_cont} is given by Eq.~\ref{eq:frechet_mean}. For symmetric manifolds, the Fr\'echet mean is a maximum likelihood estimator \citep{fletcher_geo_reg}. Despite the fact that the Fr\'echet mean naturally generalizes the Euclidean mean, it is not necessarily unique, nor does it exist, as illustrated in Fig.~\ref{fig:riemannian_non_uniqueness} for two simple data distributions on the unit sphere. If the data distribution for a complete Riemannian manifold is within a convex open ball of radius $\rho$ around a point $m \in \mathcal{M}$, then the Fr\'echet mean is unique and exists \citep{kendall_1990, karcher_1977}, while it has also been shown that the estimator of the Fr\'echet mean has strong consistency \cite{Ziezold1977, bpc2003, huckemann2016backwardnesteddescriptorsasymptotics}. In this paper, we will implicitly assume that there exists at least one Fr\'echet mean for any data distribution on a Riemannian manifold $\mathcal{M}$. The gradient of the sum of squared distances in Eq.~\ref{eq:frechet_mean} is the sum of logarithmic maps \citep{karcher_1977, pennec2006statriemann}
\begin{equation} \label{eq:frechet_gradient}
    \nabla_{y}\sum_{i=1}^{N}\dist^{2}(y,a_{i}) = \sum_{i=1}^{N}2\mathrm{Log}(y,a_{i}).
\end{equation}
If the logarithmic map is available in closed-form, the Fr\'echet mean can be computed using gradient descent \citep{pennec2006statriemann, udriste2013convex} or Karcher flows \citep{karcher_1977}. However, if the logarithmic map is not available, then using gradient descent will require estimating the logarithmic map numerically by either solving the \textsc{ode} system in Eq.~\ref{eq:bvp_ode} or minimizing the energy in Eq.~\ref{eq:energy}.

\paragraph{Computation of the Fr\'echet mean.} Different algorithms have been proposed to estimate the Fr\'echet mean such as Karcher-flows \citep{karcher_1977}, which has been used for computing the Fr\'echet mean for the space of symmetric positive definite matrices \cite{brooks2019riemannianbatchnormalizationspd}, and Riemannian gradient descent \citep{udriste2013convex}. However, both algorithms assume access to the logarithmic map, which requires computing geodesics. Different algorithms achieve better convergence than Karcher flows and Riemannian gradient descent, but are specifically designed for subclasses of Riemannian manifolds such as hyperbolic spaces \citep{lou2021differentiatingfrechetmean} and n-dimensional spheres \citep{eichfelder2018algorithmcomputingfrechetmeans}. Without access to a closed-form solution to the logarithmic map, the computation of the Fr\'echet mean is based on geodesic computations. Various methods for computing geodesics have been proposed by solving an \textsc{ode} system as a boundary value problem \citep{scipy_bvp, leapfrog_noakes, leapfrog_optimal_control, noakes_optimal_control}, as a fixed point problem \citep{hennig2014probabilisticsolutionsdifferentialequations, arvanitidis2019fastrobustshortestpaths} or by directly minimizing discretized geodesics \citep{georce, shao2017riemannian, arvanitidis2021latent}. However, the Fr\'echet mean requires computing a potentially high number of geodesics in each iteration of the Fr\'echet mean making it intractable for a high number of data points. Although competing definitions of a mean value on a manifold have been proposed \citep{eltzner2022diffusionmeansgeometricspaces, nielsen2024fastproxycentersjeffreys, Fletcher2009GeometricMedian}, the Fr\'echet mean remains the go-to standard for geometric computations on manifolds. Estimation of alternative definitions of the mean on a manifold is also computationally expensive, where, for example, the \emph{diffusion mean} \citep{eltzner2022diffusionmeansgeometricspaces} involves Monte Carlo estimation of diffusion bridges \citep{sommer2017bridge} or gradient descent methods using neural networks \citep{rygaard2025scorematchingriemanniandiffusion}.

\section{GEORCE-FM}
Rather than minimizing the sum of squared distance in Eq.~\ref{eq:frechet_mean}, we will instead formulate the Fr\'echet mean as minimizing the sum of weighted energies in local coordinates.
\begin{equation} \label{eq:frechet_energy}
    \mu, \{\gamma_{i}\}_{i=1}^{N} = \argmin_{\substack{y \in \mathcal{M} \\ \{\gamma_{i}\}_{i=1}^{N}}}\sum_{i=1}^{N}w_{i}\mathcal{E}_{\gamma_{i}}\left(a_{i}, y\right),
\end{equation}
where $\mathcal{E}_{\gamma}(a,b)$ is the energy corresponding to Eq.~\ref{eq:energy} for a curve, $\gamma$, with starting point $a \in U$ and end point $b \in I$ for a local coordinate system $\phi: U \subseteq \mathbb{R}^{d} \rightarrow \mathcal{M}$, while $\{w_{i}\}_{i=1}^{N} \subset \mathbb{R}_{+}$ are positive weights. We assume that $\{a_{i}\}_{i=1}^{N} \subset U$ are observed data in a local coordinate system within a geodesically convex set such that there exists at least one Fr\'echet mean. For the Fr\'echet mean in Eq.~\ref{eq:frechet_mean} the weights are one. We show in the following proposition that the minima of Eq.~\ref{eq:frechet_energy} are equivalent to the Fr\'echet means.
\begin{lemma} \label{lemma:energy_frechet}
    Let $\mu^{\mathrm{FM}}$ denote the set of minima of Eq.~\ref{eq:frechet_mean}, and let $\mu^{\mathrm{Energy}}$ denote the set of minima of Eq.~\ref{eq:frechet_energy} with $w_{i}=1$ for all $i \in \{1,\dots,N\}$. Then
    \begin{equation*}
        \mu^{\mathrm{Energy}} = \mu^{\mathrm{FM}}.
    \end{equation*}
\end{lemma}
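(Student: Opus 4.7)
The plan is to reduce the joint minimization in Eq.~\ref{eq:frechet_energy} to the distance minimization in Eq.~\ref{eq:frechet_mean} using the classical energy--length relationship. Specifically, I will use the Cauchy--Schwarz inequality to show that for fixed endpoints $a_i, y$, the minimum of $\mathcal{E}_{\gamma}(a_i,y)$ over all smooth curves $\gamma$ from $a_i$ to $y$ equals $\tfrac{1}{2}\dist^2(a_i,y)$, attained by the constant-speed minimizing geodesic. This lets me ``partial-minimize'' the curves away and be left with a problem in $y$ alone that is proportional to the Fr\'echet-mean objective.

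First I would establish the pointwise bound. For any curve $\gamma$ from $a_i$ to $y$,
\begin{equation*}
    \mathcal{L}_{\gamma}(a_i,y)^2 = \left(\int_0^1 \sqrt{\dot\gamma^\top G(\gamma)\dot\gamma}\,\dif t\right)^{2} \leq \int_0^1 \dot\gamma^\top G(\gamma)\dot\gamma\,\dif t = 2\,\mathcal{E}_{\gamma}(a_i,y),
\end{equation*}
with equality if and only if the speed $\sqrt{\dot\gamma^\top G(\gamma)\dot\gamma}$ is constant in $t$. Since the data lie in a geodesically (strongly) convex set, a constant-speed minimizing geodesic $\tilde\gamma_i$ from $a_i$ to $y$ exists and realizes $\mathcal{L}_{\tilde\gamma_i}(a_i,y) = \dist(a_i,y)$. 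Hence
\begin{equation*}
    \min_{\gamma}\mathcal{E}_{\gamma}(a_i,y) = \tfrac{1}{2}\dist^2(a_i,y).
\end{equation*}

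Next, I would split the joint infimum. Since the curves $\{\gamma_i\}$ enter the objective independently for fixed $y$,
\begin{equation*}
    \min_{y,\{\gamma_i\}}\sum_{i=1}^{N}\mathcal{E}_{\gamma_i}(a_i,y) = \min_{y}\sum_{i=1}^{N}\min_{\gamma_i}\mathcal{E}_{\gamma_i}(a_i,y) = \min_{y}\tfrac{1}{2}\sum_{i=1}^{N}\dist^{2}(a_i,y),
\end{equation*}
so the $y$-component of any energy minimizer must minimize $\sum_i \dist^2(\cdot,a_i)$, giving $\mu^{\mathrm{Energy}} \subseteq \mu^{\mathrm{FM}}$. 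Conversely, given any $\mu \in \mu^{\mathrm{FM}}$, I pair it with the constant-speed geodesics $\tilde\gamma_i$ from $a_i$ to $\mu$; this pair attains the lower bound above and therefore lies in $\mu^{\mathrm{Energy}}$, establishing $\mu^{\mathrm{FM}} \subseteq \mu^{\mathrm{Energy}}$.

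There is essentially no obstacle beyond bookkeeping: the only subtle point is confirming that the partial minimizer over curves is actually realized (not merely an infimum), which is where the geodesic convexity assumption on the data set is used; this guarantees the existence of the constant-speed minimizing geodesic between $a_i$ and every candidate $y$ in the set, and also makes the two sides of the equality between $\mu^{\mathrm{Energy}}$ and $\mu^{\mathrm{FM}}$ match at the level of minimizers, not just infima.
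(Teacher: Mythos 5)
Your proposal is correct and follows essentially the same route as the paper's proof: both rest on the Cauchy--Schwarz energy--length relationship (the paper cites the proof of Lemma~2.3 in do Carmo, which is exactly this inequality) followed by splitting the joint minimization over $(y,\{\gamma_i\})$ into an inner minimization over curves and an outer one over $y$. Your version is somewhat more explicit about attainment of the inner infimum via geodesic convexity, but the substance is the same.
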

\begin{proof}
    See Appendix~\ref{ap:frechet_equivalence}.
\end{proof}

\begin{wrapfigure}{r}{0.3\textwidth}
    \vspace{-2.em}
    \centering
    \includegraphics[width=0.3\textwidth]{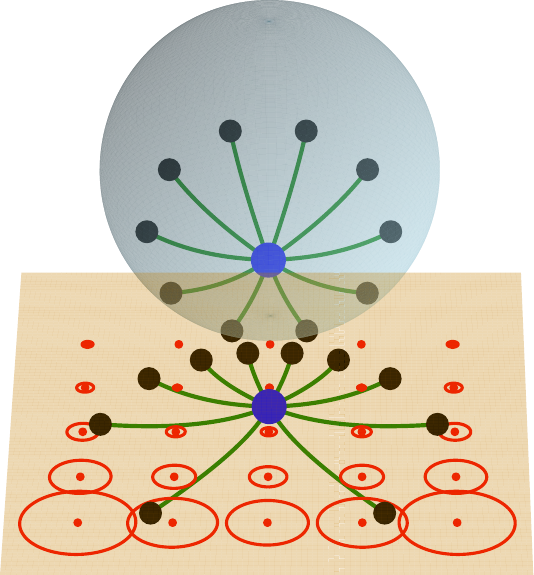}
    \caption{We compute the geodesics and Fr\'echet mean simultaneously in a local chart of a Riemannian manifold. Green curves are the geodesics, black points are the datapoints, while red is the indicatrix field.}
    \label{fig:riemannian_chart_conceptual}
    \vspace{-7em}
\end{wrapfigure}

We can therefore compute geodesics and the Fr\'echet mean simultaneously by minimizing Eq.~\ref{eq:frechet_energy} rather than minimizing Eq.~\ref{eq:frechet_mean}. We discretize the geodesics in Eq.~\ref{eq:frechet_energy} similarly to \cite{shao2017riemannian} in a local chart as
\begin{equation} \label{eq:frechet_energy_disc}
    \begin{split}
        \min_{\left(x_{t,i}, y\right)} \quad &\sum_{i=1}^{N}w_{i}\sum_{t=0}^{T-1} \left(x_{t,i+1}-x_{t,i}\right)^{\top}G(x_{t,i})\left(x_{t,i+1}-x_{t,i}\right), \\
        \text{s.t.} \quad &x_{0,i}= a_{i}, x_{T,i}=y, \quad \forall i \in \{1,\dots,N\},
    \end{split}
\end{equation}
where $\{x_{ti}\}_{t,i}$ corresponds to the discretized geodesics connecting the data points, $\{a_{i}\}_{i=1}^{N}$, and the candidate mean point, $y$. In this way, the estimation of the Fr\'echet mean corresponds to estimating $N$ geodesics, where the Fr\'echet mean is the end point of the estimation. 

Similarly to \cite{georce}, we formulate the above as a control problem in a local chart extending it to the computation of multiple geodesics, where all geodesics have the same non-fixed endpoint corresponding to the Fr\'echet mean
\begin{equation} \label{eq:energy_control}
    \begin{split}
        \min_{(x_{t,i},u_{t,i})} E(x) &:= \min_{(x_{t,i},u_{t,i})}\left\{\sum_{i=1}^{N}w_{i}\sum_{t=0}^{T-1}u_{t,i}^{\top}G(x_{t,i})u_{t,i}\right\} \\
        x_{t+1,i} &= x_{t,i}+u_{t,i}, \quad t=0,\dots,T-1, \, i=1,\dots,N, \\
        x_{0,i}&=a_{i},x_{T,i}=y, \quad i=1,\dots,N.
    \end{split}
\end{equation}
Unlike \cite{georce}, we consider the end points of the geodesics as a variable to be estimated. We illustrate our approach in Fig.~\ref{fig:riemannian_chart_conceptual}. The advantage of formulating the optimization problem in Eq.~\ref{eq:frechet_energy_disc} as a control problem is that we are able to decompose the optimization problem in Eq.~\ref{eq:energy_control} into convex subproblems, resulting in the following necessary conditions for a minimum.
\begin{proposition} \label{prop:riemann_cond}
    The necessary conditions for a minimum in Eq.~\ref{eq:energy_control} are
    \begin{equation} \label{eq:energy_opt_condtions}
        \begin{split}
            &2w_{i}G(x_{t,i})u_{t,i}+\mu_{t,i}=0, \quad t=0,\dots, T-1, \, i=1,\dots,N, \\
            &x_{t+1,i}=x_{t,i}+u_{t,i}, \quad t=0,\dots,T-1, \, i=1,\dots,N, \\
            &\restr{\nabla_{x}\left[w_{i}u_{t,i}^{\top}G(x)u_{t,i}\right]}{x=x_{t,i}}+\mu_{t,i}=\mu_{t-1,i}, \quad t=1,\dots,T-1, \, i=1,\dots,N ,\\
            &0 = \sum_{i=1}^{N}\mu_{T-1,i}, \\
            &x_{0,i}=a_{i}, x_{T,i}=y, \quad i=1,\dots,N.
        \end{split}
    \end{equation}
    where $\mu_{t,i} \in \mathbb{R}^{d}$ denotes the dual prices for the control problem for $t=0,\dots,T-1$ and $i=1,\dots,N$.
\end{proposition}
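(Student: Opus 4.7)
The plan is to derive the necessary conditions via a Lagrangian (KKT) approach for the equality-constrained problem in Eq.~\ref{eq:energy_control}. I would introduce dual multipliers $\mu_{t,i} \in \mathbb{R}^{d}$ for each transition constraint $x_{t+1,i} = x_{t,i} + u_{t,i}$, substitute the fixed initial condition $x_{0,i} = a_i$ and the shared (but free) terminal condition $x_{T,i} = y$ directly into the objective, and form the Lagrangian
\begin{equation*}
\mathcal{L} = \sum_{i=1}^{N} w_i \sum_{t=0}^{T-1} u_{t,i}^{\top} G(x_{t,i}) u_{t,i} + \sum_{i=1}^{N} \sum_{t=0}^{T-1} \mu_{t,i}^{\top} \left(x_{t,i} + u_{t,i} - x_{t+1,i}\right),
\end{equation*}
from which I would derive stationarity conditions with respect to each free variable.

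First, I would differentiate with respect to $u_{t,i}$ for $t \in \{0,\dots,T-1\}$ and $i \in \{1,\dots,N\}$. Using symmetry of $G$, this gives $2 w_i G(x_{t,i}) u_{t,i} + \mu_{t,i} = 0$, which is the first necessary condition. Next, I would differentiate with respect to each interior state $x_{t,i}$ for $t \in \{1,\dots,T-1\}$. Such a state appears in the cost only via the quadratic form $u_{t,i}^{\top} G(x_{t,i}) u_{t,i}$ and in the dynamics constraints twice: with coefficient $+\mu_{t,i}$ at time $t$ and with coefficient $-\mu_{t-1,i}$ at time $t-1$ (via the identification $x_{t,i} = x_{(t-1)+1,i}$). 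Collecting terms yields the discrete adjoint recursion $\restr{\nabla_x\left[w_i u_{t,i}^{\top} G(x) u_{t,i}\right]}{x=x_{t,i}} + \mu_{t,i} = \mu_{t-1,i}$.

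The key step, and the only one that is not a direct translation of the single-geodesic argument of \citet{georce}, is the stationarity condition with respect to the shared endpoint $y$. Since $y$ replaces $x_{T,i}$ for every $i$, and $x_{T,i}$ enters the Lagrangian only through the constraint at time $T-1$ with coefficient $-\mu_{T-1,i}$, differentiating with respect to $y$ produces the coupling identity $\sum_{i=1}^{N} \mu_{T-1,i} = 0$, which is exactly the fourth equation. The remaining two equations in Eq.~\ref{eq:energy_opt_condtions} are just primal feasibility of the dynamics and boundary conditions.

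The main obstacle is essentially bookkeeping: correctly identifying the free variables (all $u_{t,i}$; only the interior $x_{t,i}$; and the single vector $y$), and carefully tracking how the shared terminal $y$ glues the $N$ otherwise independent per-geodesic adjoint systems into the coupled transversality condition via the sum over $i$. Since the equality constraints are affine in $(x,u,y)$, the linear-independence constraint qualification holds automatically, so no further regularity arguments are needed to justify the existence of the multipliers.
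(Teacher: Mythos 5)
Your proposal is correct and is essentially the same argument as the paper's: the paper forms the per-stage Hamiltonian $H_{t,i}=w_{i}u_{t,i}^{\top}G(x_{t,i})u_{t,i}+\mu_{t,i}^{\top}(x_{t,i}+u_{t,i})$ and invokes the time-discrete Pontryagin conditions (using strict convexity of $H_{t,i}$ in $u_{t,i}$ to turn the minimization over controls into the stationarity equation $2w_{i}G(x_{t,i})u_{t,i}+\mu_{t,i}=0$), which is exactly your Lagrangian/KKT derivation in Hamiltonian notation. The one genuinely new ingredient relative to \citep{georce} --- the transversality condition $\sum_{i=1}^{N}\mu_{T-1,i}=0$ obtained from stationarity with respect to the shared free endpoint $y$ --- is identified and derived the same way in both.
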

\begin{proof}
    See Appendix~\ref{ap:riemann_cond}.
\end{proof}
If $u^{\top}G(x)u$ is convex in $x$, then the necessary conditions are also sufficient for a global minimum point, that is, there exists only one Fr\'echet mean, which is a global minimum point. If not, a local minimum point may be determined \cite{karcher_1977}. Note that compared to \textit{GEORCE} \citep{georce}, there is an additional condition related to the dual prices, $\{\mu_{t,i}\}_{t,i}$, since we consider the end point of the geodesics as a variable. The system of equations in Eq.~\ref{eq:energy_opt_condtions} cannot be solved for general $G(\cdot)$. We aim to derive an iterative scheme to solve Eq.~\ref{eq:energy_opt_condtions} and consider the variables $\{x_{t,i}^{(k)}\}_{t,i}$ and $\{u_{t,i}^{(k)}\}_{t,i}$ in iteration $k$. We apply a similar ``trick'' as \cite{georce} and fix the following variables.
\begin{equation*}
    \begin{split}
        \nu_{t,i} &:= \restr{\nabla_{x}\left(w_{i}u_{t,i}^{\top}G(x)u_{t,i}\right)}{x=x_{t,i}^{(k)},u_{t,i}=u_{t,i}^{(k)}}, \quad t=1,\dots,T-1, \, i=1,\dots,N, \\
        G_{t,i} &:= G\left(x_{t,i}^{(k)}\right), \quad t=0,\dots,T-1, \, i=1,\dots,N. \\
    \end{split}
\end{equation*}
Using this, Eq.~\ref{eq:energy_opt_condtions} reduces to
\begin{equation} \label{eq:energy_zero_point_problem}
    \begin{split}
        &\nu_{t,i}+\mu_{t,i} = \mu_{t-1,i}, \quad t=1,\dots,T-1, \, i=1,\dots,N, \\
        &2w_{i}G_{t,i}u_{t,i}+\mu_{t,i} = 0, \quad t=0,\dots,T-1, \, i=1,\dots,N, \\
        &\sum_{t=0}^{T-1}u_{t,i}=y-a_{i}, \quad i=1,\dots,N, \\
        &\sum_{i=1}^{N}\mu_{T-1,i} = 0. \\
    \end{split}
\end{equation}
The subproblems are tied together by the equation at time $T-1$ for the dual variables $\mu_{T-1,i}$, and the boundary conditions for the end point. In the following, this is circumvented by deriving a closed-form formula for $y$ that allows a decomposition into $N$ geodesic problems.
\begin{proposition} \label{prop:update_scheme}
    The solution to Eq.~\ref{eq:energy_zero_point_problem} for $u_{t},\mu_{t}$ and $x_{t}$ is
    \begin{equation} \label{eq:energy_update_schem}
        \begin{split}
            &y = W^{-1}V, \\
            &\mu_{T-1,i} = \left(\sum_{t=0}^{T-1}G_{t,i}^{-1}\right)^{-1}\left(2w_{i}(a_{i}-y)-\sum_{t=0}^{T-1}G_{t,i}^{-1}\sum_{t>j}^{T-1}\nu_{j,i}\right), \quad i=1,\dots,N, \\
            &u_{t,i} = -\frac{1}{2w_{i}}G_{t,i}^{-1}\left(\mu_{T-1,i}+\sum_{j>t}^{T-1}\nu_{j,i}\right), \quad t=0,\dots,T-1, \, i=1,\dots,N \\
            &x_{t+1,i} = x_{t,i}+u_{t,i}, \quad t=0,\dots,T-2, \, i=1,\dots,N, \\
            &x_{0,i}=a_{i} \quad i=1,\dots,N,
        \end{split}
    \end{equation}
    where
    \begin{equation} \label{eq:w_v_def}
        \begin{split}
            W &= \sum_{i=1}^{N}w_{i}\left(\sum_{t=0}^{T-1}G_{t,i}^{-1}\right)^{-1}, \\
            V &= \sum_{i=1}^{N}w_{i}\left(\sum_{t=0}^{T-1}G_{t,i}^{-1}\right)^{-1}a_{i}-\frac{1}{2}\sum_{i=1}^{N}\left(\sum_{t=0}^{T-1}G_{t,i}^{-1}\right)^{-1}\sum_{t=0}^{T-1}G_{t,i}^{-1}\sum_{j>t}^{T-1}\nu_{j,i}.
        \end{split}
    \end{equation}
\end{proposition}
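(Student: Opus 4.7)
The plan is to exploit the essentially block-diagonal structure of the system in Eq.~\ref{eq:energy_zero_point_problem}: once the common endpoint $y$ is fixed, the equations split across the index $i$, and only the final condition $\sum_{i}\mu_{T-1,i}=0$ couples the $N$ subproblems. So I would solve each per-$i$ subsystem symbolically in terms of the (still unknown) Lagrange multiplier $\mu_{T-1,i}$ and $y$, and then use the two remaining scalar constraints (the endpoint constraint for each $i$, and the sum-to-zero constraint) to pin down $\mu_{T-1,i}$ and $y$.

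First I would telescope the co-state recursion $\mu_{t-1,i}-\mu_{t,i}=\nu_{t,i}$: summing from $t+1$ up to $T-1$ yields
\begin{equation*}
    \mu_{t,i}=\mu_{T-1,i}+\sum_{j>t}^{T-1}\nu_{j,i},\qquad t=0,\dots,T-1,\; i=1,\dots,N.
\end{equation*}
Substituting into the stationarity condition $2w_iG_{t,i}u_{t,i}+\mu_{t,i}=0$ and using the positive definiteness of $G_{t,i}$ immediately gives
\begin{equation*}
    u_{t,i}=-\frac{1}{2w_i}G_{t,i}^{-1}\left(\mu_{T-1,i}+\sum_{j>t}^{T-1}\nu_{j,i}\right),
\end{equation*}
which is the third line of Eq.~\ref{eq:energy_update_schem}.

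Next I would plug this expression for $u_{t,i}$ into the endpoint identity $\sum_{t=0}^{T-1}u_{t,i}=y-a_i$. Grouping terms, the factor multiplying $\mu_{T-1,i}$ becomes $-\tfrac{1}{2w_i}\sum_{t=0}^{T-1}G_{t,i}^{-1}$, and inverting this matrix (legitimate since each $G_{t,i}$ is positive definite, hence so is the sum) yields the closed form for $\mu_{T-1,i}$ as a function of $y$ given by the second line of Eq.~\ref{eq:energy_update_schem}. Finally, summing this expression over $i$ and imposing $\sum_{i}\mu_{T-1,i}=0$ produces a single linear equation in the unknown $y$, namely $Wy=V$ with $W,V$ as in Eq.~\ref{eq:w_v_def}; inverting $W$ (which is a positive linear combination of positive definite matrices, hence invertible) gives the first line $y=W^{-1}V$. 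The updates for $x_{t,i}$ then follow from the discrete dynamics $x_{t+1,i}=x_{t,i}+u_{t,i}$ with $x_{0,i}=a_i$.

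The main bookkeeping hazard is the index convention in the telescoping sum (the shift $j>t$ versus $j\ge t$, and the fact that the recursion for $\mu_{t,i}$ only holds for $t\ge 1$, so one must treat $\mu_{T-1,i}$ as the free parameter rather than $\mu_{0,i}$). Once this is set up consistently, the remainder is purely linear algebra; there is no existence obstruction because the matrices $\sum_{t}G_{t,i}^{-1}$ and $W$ are positive definite, so every inversion in Eq.~\ref{eq:energy_update_schem} is well defined.
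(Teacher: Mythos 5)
Your proposal is correct and follows essentially the same route as the paper's proof: telescope the co-state recursion to express $\mu_{t,i}$ in terms of $\mu_{T-1,i}$, substitute into the stationarity condition to get $u_{t,i}$, use the endpoint constraint $\sum_{t}u_{t,i}=y-a_{i}$ to solve for $\mu_{T-1,i}$ as a function of $y$, and finally impose $\sum_{i}\mu_{T-1,i}=0$ to obtain $y=W^{-1}V$, with all inversions justified by positive definiteness. Your remark about treating $\mu_{T-1,i}$ as the free parameter and watching the $j>t$ index convention matches the paper's bookkeeping exactly.
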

\begin{proof}
    See Appendix~\ref{ap:update_scheme}.
\end{proof}
\begin{figure}[t!]
    \centering
    \includegraphics[width=1.0\textwidth]{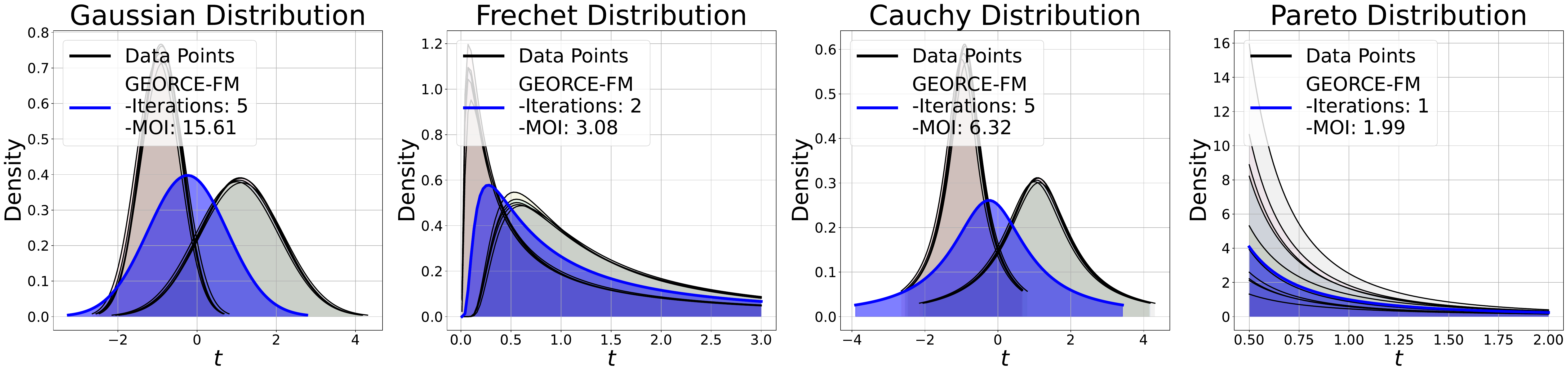}
    \vspace{-1.em}
    \caption{The application of \textit{GEORCE-FM} to Information geometry for 4 different distributions equipped with the Fisher-Rao metric \citep{miyamoto2024closedformexpressionsfisherraodistance} with synthetic data. From left to right we have: Gaussian distribution, Fr\'echet distribution, Cauchy distribution and Pareto distribution. The black outlined distributions are the data, where the blue outlined distribution is the estimated Fr\'echet mean using \textit{GEORCE-FM} and ``MOI'' refers to moment of inertia given by the sum of squared distances in Eq.~\ref{eq:frechet_mean}. The details of the manifolds and data points can be found in Appendix~\ref{ap:manifold_description}.}
    \label{fig:information_geometry_frechet_mean}
    \vspace{-1.em}
\end{figure}
Using the update scheme in Proposition~\ref{prop:update_scheme}, we show \textit{GEORCE-FM} in pseudo-code in Algorithm~\ref{al:georce_fm}, where we apply line-search for the sum of discretized energy using backtracking with Armijo condition \citep{armijo1966minimization} with decay rate $\rho=0.5$ \cite{georce}. We stop \textit{GEORCE-FM} when the average gradient norm is below a threshold. Note that in Algorithm~\ref{al:georce_fm}, then $x_{T,i}=y$ for all $i=1,\dots,N$ in each iteration. Furthermore, when $y$ has been determined, the remaining update formulas can be run in parallel for $i=1,\dots,N$. Note that similar to \textit{GEORCE} \citep{georce}, all computations are within a local chart, and it is implicitly assumed that the initial curve and updates are defined within this chart. This can be ensured using line-search, or by changing the chart if the computations are close to the boundary of the domain. For a detailed discussion on this, we refer to \citep{georce}. For the examples in this paper, we will assume that the computations are well-defined within the chart, and we will therefore not change the chart doing computations, although this can easily be added to the algorithm.

Similarly to \textit{GEORCE} \citep{georce}, the algorithm, as will be proven below, will converge to a local minimum. To check whether the found solution is actually a local minimum or a global minimum, the algorithm can be run for different initialization curves and compare the corresponding sum of squared lengths in Eq.~\ref{eq:frechet_energy_disc}.

We estimate in Fig.~\ref{fig:information_geometry_frechet_mean} the Fr\'echet mean using \textit{GEORCE-FM} for four different examples of distributions in information geometry: normal distribution, Fr\'echet distribution, Cauchy distribution and Pareto distribution for $T=100$ grid points, which are equipped with the Fisher-Rao metric \citep{miyamoto2024closedformexpressionsfisherraodistance}. We see that \textit{GEORCE-FM} converges after only a few iterations.
\begin{algorithm}[!ht]
    \caption{GEORCE-FM for Riemannian Manifolds}
    \label{al:georce_fm}
    \begin{algorithmic}[1]
        \State \textbf{Input}: $\mathrm{tol}$, $a_{1:N}$, $T$    \State \textbf{Output}: Geodesic estimate $x_{0:T}$        Set $y^{(0)} \leftarrow a_{0}$, $x_{t,i}^{(0)} \leftarrow a_{i}+\frac{y^{(0)}-a_{i}}{T}t$ and  $u_{t,i}^{(0)} \leftarrow \frac{y^{(0)}-a_{i}}{T}$ for $t=0.,\dots,T$ and $i=1,\dots,N$.
        \While $\frac{1}{N}\norm{\restr{\nabla_{y}E(y)}{y=x_{t,i}^{(k)}}}_{2} > \mathrm{tol}$ 
        \State $G_{t,i} \leftarrow G\left(x_{t,i}^{(k)}\right)$ for $t=0,\dots,T-1$ and $i=1,\dots,N$.
        \State $\nu_{t,i} \leftarrow \restr{\nabla_{x}\left(u_{t,i}^{(k)}G\left(x\right)u_{t,i}^{(k)}\right)}{x=x_{t,i}^{(k)}}$ for $t=1,\dots,T-1$ and $i=1,\dots,N$.
        \State $y \leftarrow W^{-1}V$ with $W,V$ given by Eq.~\ref{eq:w_v_def}.
        \State $\mu_{T-1,i} \leftarrow \left(\sum_{t=0}^{T-1}G_{t,i}^{-1}\right)^{-1}\left(2w_{i}(a_{i}-y)-\sum_{t=0}^{T-1}G_{t,i}^{-1}\sum_{t>j}^{T-1}\nu_{j,i}\right)$ for $i=1,\dots,N$ and $t=1,\dots,T-1$.
        \State $u_{t,i} \leftarrow -\frac{1}{2w_{i}}G_{t,i}^{-1}\left(\mu_{T-1,i}+\sum_{j>t}^{T-1}\nu_{j,i}\right)$ for $t=0,\dots,T-1$ and  $i=1,\dots,N$.
        \State $x_{t+1,i} \leftarrow x_{t,i}+u_{t,i}$ for $t=0,\dots,T-2$ and $i=1,\dots,N$.
        \State Using line search find $\alpha^{*}$ for the following optimization problem for the discrete sum of energy $E$
        \begin{equation*}
            \begin{split}
                \min_{\alpha}\quad &E\left(x_{0:T,1:N}\right) \quad \text{(exact line search)} \\
                \text{s.t.} \quad &x_{t+1,i}=x_{t,i}+\alpha \tilde{u}_{t,i}+(1-\alpha)u_{t,i}^{(k)}, \quad t=0,\dots,T-1, \, i=1,\dots,N. \\
                &\tilde{u}_{t,i} = \alpha u_{t,i}+(1-\alpha)u_{t,i}^{(k)}, \quad t=0,\dots,T-1, \, i=1,\dots,N. \\
                &x_{0,i}=a_{i}.
            \end{split}
        \end{equation*}
        \State Set $u_{t,i}^{(k+1)} \leftarrow \alpha^{*}u_{t,i}+(1-\alpha^{*})u_{t,i}^{(k)}$ for $t=0,\dots,T-1$ and $i=1,\dots,N$.
        \State Set $x_{t+1,i}^{(k+1)} \leftarrow x_{t,i}^{(k+1)}+u_{t,i}^{(k+1)}$ for $t=0,\dots,T-1$ and $i=1,\dots,N$.
        \EndWhile
        \State return $x_{t,i}$ for $t=0,\dots,T-1$ for $i=1,\dots,N$.
    \end{algorithmic}
\end{algorithm}
\paragraph{Complexity.} \textit{GEORCE-FM} in Algorithm~\ref{al:georce_fm} requires matrix inversions for $N$ geodesics and therefore scales $\mathcal{O}\left(NTd^{3}\right)$ in each iteration, where $d$ is the manifold dimension. The advantage of \textit{GEORCE-FM} is that we can simultaneously compute geodesics and Fr\'echet mean rather than solving the geodesic subproblem in each iteration of the Fr\'echet mean.

\paragraph{Convergence results.} We prove that \textit{GEORCE-FM} has global convergence and local quadratic convergence by generalizing the original proofs in the \textit{GEORCE}-algorithm \citep{georce}.
\begin{proposition}[Convergence Properties]\label{prop:global_local_convergence}
    Let $E^{(k)}$ be the value of the sum of discretized energy functional for the solution after iteration $k$ (with line search) in \textit{GEORCE-FM}. The \textit{GEORCE-FM} algorithm has the following properties
    \begin{itemize}
        \item Global convergence: If the starting point $\left(x^{(0)}, u^{(0)}\right)$ is feasible in the sense that the geodesic candidates have the correct start and share the same end point, then the series $\left\{E^{(k)}\right\}_{i>0}$ will converge to a (local) minimum. 
        \item Local quadratic convergence: Under sufficient regularity of the energy functional (see Appendix~\ref{ap:assumptions}) it holds that if the energy functional has a strongly unique (local) minimum point $z^{*}=\left(x^{*}, u^{*}, y^{*}\right)$ and locally the step-size $\alpha^{*}=1$, i.e. no line search, then \textit{GEORCE-FM} has locally quadratic convergence, i.e.
        \begin{equation*}
                \exists \epsilon>0:\quad \exists c>0: \quad \forall z^{(k)} \in B_{\epsilon}(z^{*}): \quad \norm{z^{(k+1)}-z^{*})} \leq c \norm{z^{(k)}-z^{*}}^{2},
        \end{equation*}
        where $z^{(k)}=(x^{(k)}, u^{(k)}, y^{(k)})$ is the solution from \textit{GEORCE-FM} in iteration $k$, and $B_{\epsilon}\left(z^{*}\right)$ is the ball with radius $\epsilon>0$ and center $z^{*}$.
    \end{itemize}
\end{proposition}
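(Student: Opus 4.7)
My plan is to follow the two-part template of the convergence analysis for the original \textit{GEORCE} algorithm \citep{georce}, with modifications to accommodate the free endpoint $y$. The central observation is that the update scheme of Proposition~\ref{prop:update_scheme} is the closed-form solution of the linearized KKT system obtained by freezing $G_{t,i}$ and $\nu_{t,i}$ at the current iterate $z^{(k)} = (x^{(k)}, u^{(k)}, y^{(k)})$ in Eq.~\ref{eq:energy_opt_condtions}. This linearization is precisely the first-order condition of a strictly convex quadratic model of $E$ that still enforces the shared-endpoint constraint and the boundary conditions. Thus each inner step can be interpreted both as the minimizer of a convex model (useful for descent) and as a Newton step on the KKT map (useful for the quadratic rate).

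For the \textbf{global convergence} part, I would first show that the direction $\Delta z := \tilde z - z^{(k)}$ produced by Proposition~\ref{prop:update_scheme} is a strict descent direction for $E$ whenever $z^{(k)}$ is not a KKT point. This follows from the strict convexity of the quadratic surrogate (guaranteed by $G_{t,i} \succ 0$ and $w_i > 0$) together with the fact that its gradient at $z^{(k)}$ agrees with $\nabla E(z^{(k)})$ restricted to the linear feasible set. The Armijo backtracking step in Algorithm~\ref{al:georce_fm} then yields sufficient decrease of the form $E^{(k+1)} \le E^{(k)} - c\,\alpha^{(k)} \langle \nabla E(z^{(k)}), \Delta z\rangle$, so $\{E^{(k)}\}$ is monotone. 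Since $E \ge 0$, the sequence converges; a standard Zoutendijk-type argument, combined with the equivalence between $\Delta z = 0$ and the KKT conditions of Proposition~\ref{prop:riemann_cond}, then forces every cluster point to be a KKT point and hence (under the convexity remark following Proposition~\ref{prop:riemann_cond}) a local minimum.

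For the \textbf{local quadratic convergence} part, I would identify one iteration of \textit{GEORCE-FM} (with $\alpha^* = 1$) as a single Newton step on the full KKT map $F(z,\mu)=0$ of Eq.~\ref{eq:energy_opt_condtions}, treating both the primal variable $z=(x,u,y)$ and the duals $\mu$ jointly. The freezing of $G_{t,i}$ and $\nu_{t,i}$ is exactly the linearization of the only nonlinear pieces of $F$, and Proposition~\ref{prop:update_scheme} expresses the resulting linear solve in closed form. The regularity hypotheses of Appendix~\ref{ap:assumptions} (twice-continuously differentiable $G$ with Lipschitz Hessian) supply Lipschitz continuity of $\nabla F$, while strong uniqueness of the minimum $z^*$ translates into invertibility of the KKT Jacobian at $z^*$. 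The classical Newton/Kantorovich local convergence theorem then yields $\|z^{(k+1)} - z^*\| \le c\, \|z^{(k)} - z^*\|^2$ on a sufficiently small ball $B_\epsilon(z^*)$.

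The \textbf{main obstacle} I expect is verifying the invertibility of the KKT Jacobian at $z^*$, which here amounts to checking that the reduced linear system of Proposition~\ref{prop:update_scheme} is well-posed: in particular that the $d \times d$ matrix $W = \sum_{i=1}^{N} w_i \bigl(\sum_{t=0}^{T-1} G_{t,i}^{-1}\bigr)^{-1}$ is non-singular and uniformly conditioned near $z^*$, so that eliminating $y$ preserves invertibility. This is the step where the additional coupling introduced by treating $y$ as a free variable --- encoded in the new KKT condition $\sum_{i=1}^{N} \mu_{T-1,i} = 0$ --- must be handled carefully; once this conditioning is established, both the descent guarantee in the global phase and the quadratic rate in the local phase follow from the classical results cited above.
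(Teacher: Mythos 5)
Your global-convergence argument follows essentially the same route as the paper. The paper's Appendix~\ref{ap:global_convergence} shows that the linearized system has a unique solution, that convex combinations of feasible iterates remain feasible, and that the first-order change of $E$ along the GEORCE-FM direction reduces (after a telescoping manipulation of the dual prices $\mu_{s,i}$) to $\sum_{s,i}\langle -2w_i G(x^{(k)}_{s,i})\Delta u_{s,i},\Delta u_{s,i}\rangle$ plus higher-order terms, which is strictly negative by positive definiteness of $G$; monotonicity plus the lower bound $E\ge 0$ then gives convergence, and a contradiction argument rules out non-stationary limit points. Your ``convex surrogate whose gradient matches $\nabla E$ on the feasible set'' is a repackaging of the same computation and would work once you actually carry out the telescoping identity that the paper uses to turn the $x$-gradient terms into $u$-increments.

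The local quadratic convergence part contains a genuine gap. You identify one GEORCE-FM iteration (with $\alpha^*=1$) with a Newton step on the KKT map of Eq.~\ref{eq:energy_opt_condtions} and then invoke Newton--Kantorovich. But the update of Proposition~\ref{prop:update_scheme} does not linearize the KKT map: it \emph{freezes} $G_{t,i}=G(x^{(k)}_{t,i})$ and $\nu_{t,i}=\restr{\nabla_x(w_i u_{t,i}^\top G(x)u_{t,i})}{(x^{(k)},u^{(k)})}$ at their old values and then solves the resulting system exactly in the new $(u,\mu,y)$. A genuine Newton step on the stationarity equation $2w_iG(x)u+\mu=0$ would carry the Jacobian block $2w_i\,\partial_x\left[G(x)u^{(k)}\right]\Delta x$, and the co-state equation would carry the $x$- and $u$-derivatives of $\nu_{t,i}$; none of these blocks appear in the GEORCE-FM linear solve. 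Since the omitted blocks do not vanish at $z^*$ in general (the optimal controls $u^*_{t,i}$ are nonzero), the iteration is a lagged or simplified Newton method with an inexact Jacobian, for which the classical theory yields at best linear convergence; Newton--Kantorovich cannot be invoked as stated. The paper instead obtains the quadratic rate by an energy-difference argument: it decomposes $E(z^{(j+1)})-E(z^*)$ as $\left(E(z^{(j+1)})-E(z^{(j)})\right)+\left(E(z^{(j)})-E(z^*)\right)$, expands both terms with the first-order formula established in the global-convergence lemma, and uses the local strict convexity and Lipschitz assumptions of Appendix~\ref{ap:assumptions} to bound $\norm{z^{(j+1)}-z^*}$ by $c\norm{z^{(j)}-z^*}^2$. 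Your concern about the invertibility and conditioning of $W$ is legitimate but secondary; the missing piece is a substitute for the Newton identification, which does not hold for this update.
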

\begin{proof}
    See Appendix~\ref{ap:global_convergence} for the proof of global convergence and Appendix~\ref{ap:local_convergence} for the proof of local convergence.
\end{proof}
With Proposition~\ref{prop:global_local_convergence} we see that not only does \textit{GEORCE-FM} allow simultaneous optimization of the Fr\'echet mean geodesics, but it also has global convergence similar to gradient descent methods and local quadratic convergence similar to the Newton method.

\section{Adaptive GEORCE-FM}
In the previous section, we have seen that \textit{GEORCE-FM} can compute geodesics simultaneously with the Fr\'echet mean. However, \textit{GEORCE-FM} scales by $\mathcal{O}\left(NTd^{3}\right)$, making it computationally expensive to use for a large number of datapoints. In this section, we introduce an adaptive extension of \textit{GEORCE-FM} by considering a stochastic estimator of the minimization problem in Eq.~\ref{eq:frechet_energy} with mini-batches applying subsets of data points.
\begin{equation} \label{eq:frechet_energy_disc_stoch}
   \min_{y \in \mathcal{M}} \sum_{i \in \mathcal{I}}w_{i}\mathcal{E}\left(a_{i}, y\right),
\end{equation}
where $\mathcal{I} = \{i_{1}, \dots, i_{n}\}$ is an index set with $n$ distinct random integer variables of the form $1 \leq i_{1} < i_{2} < \dots i_{n} \leq N$. The idea is to adaptively update the estimator of the Fr\'echet mean using only the stochastic estimates from random mini-batches of data. Inspired by the update scheme in Proposition~\ref{prop:update_scheme} and Eq.~\ref{eq:w_v_def}, we define the following stochastic values using only mini-batches of the data,
\begin{equation} \label{eq:w_v_adaptive}
    \begin{split}
        \tilde{W} &= \sum_{i \in \mathcal{I}}w_{i}\left(\sum_{t=0}^{T-1}G_{t,i}^{-1}\right)^{-1}, \\
        \tilde{V} &= \sum_{i \in \mathcal{I}}w_{i}\left(\sum_{t=0}^{T-1}G_{t,i}^{-1}\right)^{-1}a_{i}-\frac{1}{2}\sum_{i \in \mathcal{I}}\left(\sum_{t=0}^{T-1}G_{t,i}^{-1}\right)^{-1}\sum_{t=0}^{T-1}G_{t,i}^{-1}\sum_{j>t}^{T-1}g_{j,i}
    \end{split}
\end{equation}
such that the stochastic estimate, $\tilde{y}$, of the Fr\'echet mean in the \textit{GEORCE-FM} algorithm is
\begin{equation} \label{eq:adaptive_update}
    \tilde{y} = \tilde{W}^{-1}\tilde{V},
\end{equation}
We propose to adaptively estimate $\tilde{W}$ and $\tilde{V}$ for different mini-batches of data and then update the Fr\'echet mean by Eq.~\ref{eq:adaptive_update}. We show the algorithm in pseudo-code in Algorithm~\ref{al:adaptive_georce_fm}. Note that to have a proper estimate of $V$ and $W$ in Eq.~\ref{eq:w_v_adaptive}, we run \textit{GEORCE-FM} for a fixed number of iterations on the mini-batch of the dataset.
\begin{algorithm}[H]
    \caption{Adaptive GEORCE-FM for Riemannian Manifolds}
    \label{al:adaptive_georce_fm}
    \begin{algorithmic}[1]
        \State \textbf{Input}: $\mathrm{tol}$, $a_{1:N,i}$, $T$, $\text{sub\_iters}$.
        \State \textbf{Output}: Geodesic estimate $x_{0:T}$.
        \State Compute random index set, $\mathcal{I}$.
        \State Compute $\text{sub\_iters}$ iterations using $\left(\hat{W}^{(0)}, \hat{V}^{(0)}\right) \leftarrow $\textit{GEORCE\_FM}$\left(\{a_{i}\}_{i \in \mathcal{I}}\right)$ using algorithm~\ref{al:georce_fm} and Eq.~\ref{eq:w_v_adaptive}.
        \State $k \leftarrow 1$.
        \While $\norm{y^{(k)}-y^{(k-1)}}_{2} > \mathrm{tol}$
        \State Compute random index set, $\mathcal{I}$.
        \State Compute $\text{sub\_iters}$ iterations using $\left(\tilde{W}, \tilde{V}\right) \leftarrow $\textit{GEORCE\_FM}$\left(\{a_{i}\}_{i \in \mathcal{I}}\right)$ using algorithm~\ref{al:georce_fm} and Eq.~\ref{eq:w_v_adaptive}.
        \If {convergence}
        \State $\alpha^{*} \leftarrow \frac{1}{k+1}$ 
        \Else
        \State $\alpha^{*} \leftarrow \lambda$
        \EndIf
        \State $\hat{W}^{(k)} \leftarrow \alpha^{*} \tilde{W}+(1-\alpha^{*}) \hat{W}^{(k-1)}$.
        \State $\hat{V}^{(k)} \leftarrow \alpha^{*} \tilde{V}+(1-\alpha^{*}) \hat{V}^{(k-1)}$.
        \State $y^{(k)} \leftarrow \left(\hat{W}^{(k)}\right)^{-1}\hat{V}^{(k)}$.
        \State $k \leftarrow k +1$.
        \EndWhile
        \State return $y^{(k)}$.
    \end{algorithmic}
\end{algorithm}

\paragraph{Convergence results.} We prove in the following that the adaptive extension of \textit{GEORCE-FM} converges in expectation assuming sufficient regularity of the index set size and adaptive scheme.
\begin{proposition}[Adaptive Convergence]\label{prop:adaptive_convergence}
    Assume sufficient regularity of the index set and adaptive scheme (see Appendix~\ref{ap:adaptive_convergence}). Then the adaptive version of \textit{GEORCE-FM} will converge to a local minimum point in expectation.
\end{proposition}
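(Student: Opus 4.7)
The plan is to view the adaptive scheme as a stochastic approximation of the fixed-point equation $y = T(y) := W(y)^{-1}V(y)$, where $W(y),V(y)$ are the full-data quantities of Eq.~\ref{eq:w_v_def} evaluated at the geodesic configuration with shared endpoint $y$. By Proposition~\ref{prop:update_scheme} and Proposition~\ref{prop:riemann_cond}, any fixed point of $T$ is a stationary point of the discretized energy in Eq.~\ref{eq:energy_control}, and hence, under the regularity postulated in Appendix~\ref{ap:adaptive_convergence}, a local Fr\'echet mean $y^{*}$. Thus it suffices to prove $\mathbb{E}[y^{(k)}]\to y^{*}$.

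The first step is to observe that if $\mathcal{I}$ of size $n$ is drawn uniformly without replacement, then by linearity of Eq.~\ref{eq:w_v_adaptive} in $i$ the conditional expectations satisfy $\mathbb{E}[\tilde{W}\mid\mathcal{F}_{k-1}]=\tfrac{n}{N}W(y^{(k-1)})$ and analogously for $\tilde{V}$, so the mini-batch estimators are conditionally unbiased for the current-iterate full-batch quantities up to a common factor $n/N$ which cancels in the ratio $\hat{W}^{-1}\hat{V}$. Unrolling the recursion in Algorithm~\ref{al:adaptive_georce_fm} with $\alpha_{k}=1/(k+1)$ gives $\hat{W}^{(k)}=\frac{1}{k+1}\sum_{j=0}^{k}\tilde{W}^{(j)}$ (and similarly for $\hat{V}$), so the outer iterate $y^{(k)}=(\hat{W}^{(k)})^{-1}\hat{V}^{(k)}$ is a ratio of running empirical means; by continuity of matrix inversion, convergence of these averages transfers to convergence of $y^{(k)}$.

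The convergence itself follows from a Robbins--Monro / Robbins--Siegmund style analysis. Under the Appendix~\ref{ap:adaptive_convergence} hypotheses (Lipschitz continuity of $y\mapsto(W(y),V(y))$ on the chart, uniformly bounded second moments of $(\tilde{W},\tilde{V})$, $\sum_{k}\alpha_{k}=\infty$ and $\sum_{k}\alpha_{k}^{2}<\infty$, and local contractivity of $T$ near $y^{*}$), I decompose
\begin{equation*}
\tilde{W}^{(k)}-\tfrac{n}{N}W(y^{*}) = \bigl(\tilde{W}^{(k)}-\mathbb{E}[\tilde{W}^{(k)}\mid\mathcal{F}_{k-1}]\bigr)+\tfrac{n}{N}\bigl(W(y^{(k-1)})-W(y^{*})\bigr),
\end{equation*}
where the first bracket is a square-integrable martingale difference whose weighted sum satisfies $\sum\alpha_{k}^{2}\,\mathbb{E}\|\cdot\|^{2}<\infty$, and the second is a bias term controlled by $\|y^{(k-1)}-y^{*}\|$ via Lipschitz continuity. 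Applying the almost-supermartingale convergence theorem to the Lyapunov functional $\|\hat{W}^{(k)}-\tfrac{n}{N}W(y^{*})\|^{2}+\|\hat{V}^{(k)}-\tfrac{n}{N}V(y^{*})\|^{2}$ yields convergence to zero in expectation, and passing through the continuous mapping $(\hat{W},\hat{V})\mapsto\hat{W}^{-1}\hat{V}$ gives $\mathbb{E}[y^{(k)}]\to y^{*}$.

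The main obstacle is that $\tilde{W}^{(k)}$ is not an i.i.d.\ sample: each mini-batch sub-problem is warm-started from the current outer estimate, so after only $\text{sub\_iters}$ inner steps of \textit{GEORCE-FM} the returned $(\tilde{W},\tilde{V})$ carries Markov-chain noise with a non-vanishing bias whenever $y^{(k-1)}\neq y^{*}$. Handling this requires two mechanisms working in concert: the constant-step branch $\alpha_{k}=\lambda$ is used to drive the iterates into a contraction neighborhood of $y^{*}$, after which the diminishing-step branch $\alpha_{k}=1/(k+1)$ takes over, and the local quadratic convergence of the inner \textit{GEORCE-FM} (Proposition~\ref{prop:global_local_convergence}) ensures that the inner residual stays of order $\|y^{(k-1)}-y^{*}\|$, so the bias remains summable against the Robbins--Monro schedule. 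The regularity conditions in Appendix~\ref{ap:adaptive_convergence} are precisely what make these two phases compatible and justify the supermartingale argument.
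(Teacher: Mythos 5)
Your proposal takes a genuinely different route from the paper, but as written it has gaps that would need to be closed before it constitutes a proof. The paper's own argument (Appendix~\ref{ap:adaptive_convergence}) is far more elementary: it never invokes stochastic approximation. It proves a descent lemma via a first-order Taylor expansion of the discretized energy along a \textit{GEORCE-FM} step, decomposes the expected mini-batch step into the mean step plus a zero-mean fluctuation (Eq.~\ref{eq:adaptive_proof_eq2}), and then uses the single structural hypothesis of Eq.~\ref{eq:i_property} --- that the mini-batch steps are, on average, positively aligned with the full-batch \textit{GEORCE-FM} step with respect to the metric tensor --- to conclude that the full energy strictly decreases along the averaged step. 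Convergence then follows from monotonicity and the lower bound $E \geq 0$, with a contradiction argument ruling out non-minimal limit points. There are no step-size summability conditions, no contractivity of a fixed-point map, and no martingale machinery. Since the proposition explicitly defers its hypotheses to the appendix, your substitution of standard Robbins--Monro conditions ($\sum_k \alpha_k = \infty$, $\sum_k \alpha_k^2 < \infty$, local contractivity of $T$, bounded second moments) changes the statement being proven; in particular the algorithm's constant-step branch $\alpha^{*}=\lambda$ is not even consistent with $\sum_k\alpha_k^2<\infty$ unless you argue it is only active finitely often, which you assert but do not establish.

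The most concrete technical gap is the unbiasedness claim $\mathbb{E}[\tilde{W}\mid\mathcal{F}_{k-1}]=\tfrac{n}{N}W(y^{(k-1)})$. The quantities $\tilde{W},\tilde{V}$ in Eq.~\ref{eq:w_v_adaptive} are evaluated \emph{after} $\text{sub\_iters}$ inner iterations of \textit{GEORCE-FM} run on the mini-batch alone, so the metric evaluations $G_{t,i}$ and gradients entering the $i$-th summand depend on the entire realized index set $\mathcal{I}$ (through the jointly updated candidate mean and geodesics), not only on the index $i$ and the warm start. The sum is therefore not linear in indicator variables of membership in $\mathcal{I}$, and sampling without replacement does not give you the $n/N$ scaling. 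You acknowledge this as ``Markov-chain noise with a non-vanishing bias'' in your final paragraph, but you then assert rather than prove that the bias is of order $\lVert y^{(k-1)}-y^{*}\rVert$ and summable against the schedule; this is precisely the step that would need a quantitative argument (and the inner quadratic convergence of Proposition~\ref{prop:global_local_convergence} only holds in a neighborhood of the \emph{mini-batch} optimum, which differs from $y^{*}$). Finally, the last inference --- passing from convergence in expectation of $(\hat{W}^{(k)},\hat{V}^{(k)})$ to $\mathbb{E}[y^{(k)}]\to y^{*}$ through the nonlinear map $(\hat{W},\hat{V})\mapsto\hat{W}^{-1}\hat{V}$ --- is not valid as stated; you would need almost-sure convergence together with uniform integrability, or a uniform lower bound on the eigenvalues of $\hat{W}^{(k)}$, neither of which you establish.
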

\begin{proof}
    See Appendix~\ref{ap:adaptive_convergence} for the proof and assumptions.
\end{proof}
In Proposition~\ref{prop_adaptive_convergence} we only assume sufficient regularity of the adaptive update scheme, and therefore the current update scheme in Algorithm~\ref{al:adaptive_georce_fm} can easily be replaced by other adaptive updating schemes. Note that the ``momentum'' type of estimation for $V$ and $W$ is not assumed in the convergence proof, but is added in order to increase convergence speed similar to e.g. \textit{ADAM} \citep{kingma2017adam}.

\section{Extension to Finsler manifolds}
%
%
\begin{figure}[t!]
    \centering
    \includegraphics[width=1.0\textwidth]{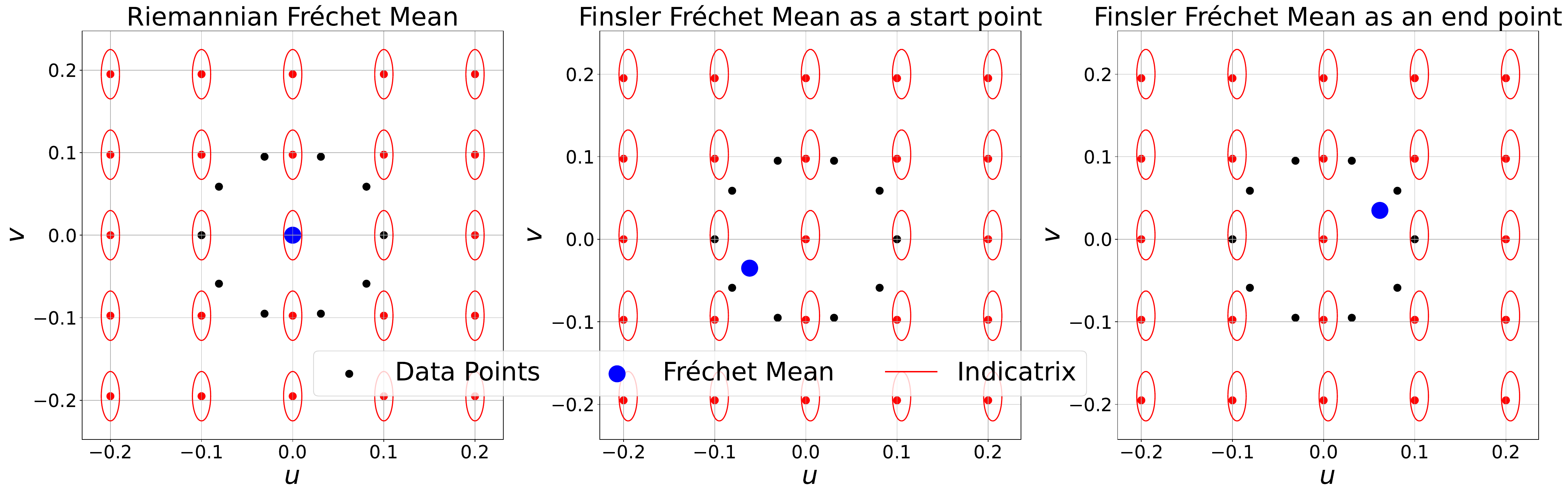}
    \vspace{-3.0mm}
    \caption{We consider a constant indicatrix field of centered ellipsis in the left-most figure, and the corresponding Riemannian Fr\'echet mean. The center figures shows displaced ellipses such that they are non-centered and the corresponding Fr\'echet mean as a starting point. The right-most figure shows the Finslerian Fr\'echet mean as an end point.}
    \label{fig:finsler_frechet_conceptual}
    \vspace{-1.em}
\end{figure}
%
In this section, we generalize the previous results to the Finslerian manifolds. Let $\dist_{F}$ denote the distance function on a Finsler manifold and define the Fr\'echet mean similar to the Riemannian case as
\begin{equation} \label{eq:finsler_frechet_mean}
    \mu = \argmin_{y \in \mathcal{M}}\sum_{i=1}^{N} \dist_{F}^{2}(y, a_{i}).
\end{equation}
In Eq.~\ref{eq:finsler_frechet_energy} we define the Fr\'echet mean as the start point of the distance. Since the Finsler metric is not symmetric, this is not equivalent to having the Fr\'echet mean as an end point as illustrated in Fig.~\ref{fig:finsler_frechet_conceptual}. Note that the Fr\'echet mean as a starting point or end point is also denoted the \textit{forward p-mean} and \textit{backward p-mean} \citep{Arnaudon_Nielsen_2012}. Analogously to the Riemannian case we prove in Appendix~\ref{ap:finsler_energy_frechet} that we can re-write the optimization problem in Eq.~\ref{eq:finsler_frechet_mean} to jointly minimize the energy with respect to the energy and candiate Fr\'echet mean, that is,
\begin{equation} \label{eq:finsler_frechet_energy}
    \mu^{\mathrm{Energy}}, \{\gamma_{i}\}_{i=1}^{N} = \argmin_{\substack{y \in \mathcal{M} \\ \{\gamma_{i\}_{i=1}^{N}}}}\sum_{i=1}^{N} \mathcal{E}^{(F)}_{\gamma_{i}}(y, a_{i}),
\end{equation}
where $\mathcal{E}^{(F)}_{\gamma}(y, x) = \int_{0}^{1}g\left(\gamma(t),\dot{\gamma}(t)\right)\,\dif t$ denotes the energy of the curve $\gamma$ with $\gamma(0)=y$ and $\gamma(1)=x$. In a local coordinate system, the fundamental tensor can be written as $g\left(\gamma(t),\dot{\gamma}(t)\right) = \dot{\gamma}(t)^{\top}G\left(\gamma(t), \dot{\gamma}(t)\right)\dot{\gamma}(t)$, and therefore the discretized version Eq.~\ref{eq:finsler_frechet_energy} becomes
\begin{equation} \label{eq:finsler_energy_control}
    \begin{split}
        &\min_{(x_{t,i},u_{t,i})}\left\{\sum_{i=1}^{N}w_{i}\sum_{t=0}^{T-1}u_{t,i}^{\top}G(x_{t,i}, u_{t,i})u_{t,i}\right\} \\
        &x_{t+1,i} = x_{t,i}+u_{t,i}, \quad t=0,\dots,T-1, \, i=1,\dots,N, \\
        &x_{0,i}=y,\, x_{T,i}=a_{i}, \quad i=1,\dots,N,
    \end{split}
\end{equation}
The only difference compared to the Riemannian case is that $G$ now also depends on the velocity and position. However, we cannot directly apply the same approach as in the Riemannian case. In the Riemannian case we had implicitly changed the order of start and end point such that the Fr\'echet mean was the end point of the connecting geodesics. Unlike Riemannian manifolds, the distance is not symmetric on Finsler manifolds and we can therefore not apply a similar ``trick'' in the Finslerian case. To circumvent this, consider the following transformations.
\begin{equation*}
    \begin{split}
        u_{t,i} &= -\tilde{u}_{T-t-1,i}, \quad t=0,\dots,T-1, \, i=1,\dots,N, \\
        x_{t,i} &= \tilde{x}_{T-t,i}, \quad t=0,\dots,T-1, \, i=1,\dots,N. \\
    \end{split}
\end{equation*}
With the change of variable, Eq.~\ref{eq:finsler_energy_control} can be formulated as
\begin{equation*}
    \begin{split}
        &\min_{(\tilde{x}_{t,i},\tilde{u}_{t,i})}\left\{\sum_{i=1}^{N}w_{i}\sum_{t=0}^{T-1}\tilde{u}_{T-t-1,i}^{\top}G\left(\tilde{x}_{T-t,i}, -\tilde{u}_{T-t-1,i}\right)\tilde{u}_{T-t-1,i}\right\} \\
        &\tilde{x}_{T-t-1,i} = \tilde{x}_{T-t,i}-\tilde{u}_{T-t-1,i}, \quad t=0,\dots,T-1, \, i=1,\dots,N, \\
        &\tilde{x}_{0,i}=a_{i},\, \tilde{x}_{T,i}=y, \quad i=1,\dots,N.
    \end{split}
\end{equation*}
Consider the change of index by $s:=T-1-t$ for $t=0,\dots,T-1$. When $T$ is sufficiently large, $G\left(\tilde{x}_{T-t,i}, \tilde{u}_{T-1-1, i}\right)$ can be approximated by $G\left(\tilde{x}_{T-t-1,i}, \tilde{u}_{T-1-1, i}\right)$. With this modification, we get the following.
\begin{equation} \label{eq:finsler_reverse_energy_control}
    \begin{split}
        \min_{(\tilde{x}_{s,i},\tilde{u}_{s,i})} E^{(F)}(x) &:= \min_{(x_{t,i},u_{t,i})}\left\{\sum_{i=1}^{N}w_{i}\sum_{s=0}^{T-1}\tilde{u}_{s,i}^{\top}\tilde{G}\left(\tilde{x}_{s,i},\tilde{u}_{s,i}\right)\tilde{u}_{s,i}\right\} \\
        \tilde{x}_{s+1,i} &= \tilde{x}_{s,i}+\tilde{u}_{s,i}, \quad s=0,\dots,T-1, \, i=1,\dots,N, \\
        \tilde{x}_{0,i}&=a_{i},\, \tilde{x}_{T,i}=y, \quad i=1,\dots,N,
    \end{split}
\end{equation}
where $\tilde{G}\left(x, u\right) := G\left(x, -u\right)$. In Eq.~\ref{eq:finsler_reverse_energy_control} the Fr\'echet mean can be treated as the end point. By following a completely similar approach as in the Riemannian case with some small modifications, we get the following iterative scheme to estimate the geodesics and Fr\'echet mean for Finsler manifolds.
\begin{proposition} \label{prop:finsler_update_scheme}
    The update scheme for $u_{t},\mu_{t}$ and $x_{t}$ is
    \begin{equation} \label{eq:finsler_energy_update_schem}
        \begin{split}
            &y = W^{-1}V, \\
            &\mu_{T-1,i} = \left(\sum_{t=0}^{T-1}\tilde{G}_{t,i}^{-1}\right)^{-1}\left(2w_{i}(a_{i}-y)-\sum_{t=0}^{T-1}\tilde{G}_{t,i}^{-1}\left(\zeta_{t,i}+\sum_{t>j}^{T-1}\nu_{j,i}\right)\right), \quad i=1,\dots,N, \\
            &u_{t,i} = -\frac{1}{2w_{i}}\tilde{G}_{t,i}^{-1}\left(\mu_{T-1,i}+\zeta_{t,i}+\sum_{j>t}^{T-1}\nu_{j,i}\right), \quad t=0,\dots,T-1, \, i=1,\dots,N \\
            &x_{t+1,i} = x_{t,i}+u_{t,i}, \quad t=0,\dots,T-2, \, i=1,\dots,N, \\
            &x_{0,i}=a_{i} \quad i=1,\dots,N,
        \end{split}
    \end{equation}
    where
    \begin{equation*}
        \begin{split}
            W &= \sum_{i=1}^{N}w_{i}\left(\sum_{t=0}^{T-1}\tilde{G}_{t,i}^{-1}\right)^{-1}, \\
            V &= \sum_{i=1}^{N}w_{i}\left(\sum_{t=0}^{T-1}\tilde{G}_{ti}^{-1}\right)^{-1}a_{i}-\frac{1}{2}\sum_{i=1}^{N}\left(\sum_{t=0}^{T-1}\tilde{G}_{t,i}^{-1}\right)^{-1}\sum_{t=0}^{T-1}\tilde{G_{ti}}^{-1}\left(\zeta_{t,i}+\sum_{j>t}^{T-1}\nu_{j,i}\right).
        \end{split}
    \end{equation*}
    Here $\nu_{t,i} := \restr{\nabla_{y}u_{t,i}^{\top}\tilde{G}(y,u_{t,i})u_{t,i}}{y=x_{t,i}}$ and $\zeta_{t,i} := \restr{\nabla_{v}u_{t,i}^{\top}\tilde{G}(x_{t,i},v)u_{t,i}}{v=u_{t,i}}$.
\end{proposition}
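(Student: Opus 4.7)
My plan is to mirror the Riemannian derivation behind Proposition~\ref{prop:update_scheme}, carefully tracking the extra term that arises from the velocity-dependence of $\tilde{G}$. First I would form the Lagrangian for the transformed control problem in Eq.~\ref{eq:finsler_reverse_energy_control},
\begin{equation*}
\mathcal{L} = \sum_{i=1}^{N}w_{i}\sum_{t=0}^{T-1}\tilde{u}_{t,i}^{\top}\tilde{G}(\tilde{x}_{t,i},\tilde{u}_{t,i})\tilde{u}_{t,i} + \sum_{i=1}^{N}\sum_{t=0}^{T-1}\mu_{t,i}^{\top}\!\left(\tilde{x}_{t,i}+\tilde{u}_{t,i}-\tilde{x}_{t+1,i}\right),
\end{equation*}
together with boundary multipliers enforcing $\tilde{x}_{0,i}=a_{i}$ and the common endpoint $\tilde{x}_{T,i}=y$. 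Differentiating in $\tilde{u}_{t,i}$ produces the stationarity condition $2w_{i}\tilde{G}_{t,i}u_{t,i}+\zeta_{t,i}+\mu_{t,i}=0$ (absorbing the $w_{i}$ into the gradient convention used for $\nu,\zeta$ as in Proposition~\ref{prop:riemann_cond}), where the new term $\zeta_{t,i}$ is exactly the contribution from the $u$-dependence of $\tilde{G}$; differentiating in $\tilde{x}_{t,i}$ gives the costate recursion $\mu_{t-1,i}-\mu_{t,i}=\nu_{t,i}$; and the fact that $y$ is a free variable shared by all $i$ produces the transversality condition $\sum_{i=1}^{N}\mu_{T-1,i}=0$. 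These are the direct Finslerian analogues of Eq.~\ref{eq:energy_zero_point_problem}, with a single additional $\zeta_{t,i}$ correction.

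Next I would apply the \textit{GEORCE} linearization step: freeze $\tilde{G}_{t,i}$, $\nu_{t,i}$ and $\zeta_{t,i}$ at the current iterate $\bigl(x^{(k)},u^{(k)}\bigr)$, so the costate equation becomes a plain backward recursion with the closed-form solution $\mu_{t,i}=\mu_{T-1,i}+\sum_{j>t}^{T-1}\nu_{j,i}$. Substituting into the $u$-stationarity equation immediately yields
\begin{equation*}
u_{t,i} = -\frac{1}{2w_{i}}\tilde{G}_{t,i}^{-1}\!\left(\mu_{T-1,i}+\zeta_{t,i}+\sum_{j>t}^{T-1}\nu_{j,i}\right),
\end{equation*}
which is the third line of Eq.~\ref{eq:finsler_energy_update_schem}.

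With $u_{t,i}$ in closed form, I would then enforce the endpoint constraint $\sum_{t=0}^{T-1}u_{t,i}=y-a_{i}$ for each $i$. Summing the previous display over $t$ gives a linear equation in the single unknown $\mu_{T-1,i}$ whose coefficient is $\sum_{t}\tilde{G}_{t,i}^{-1}$; inverting this coefficient produces the second line of Eq.~\ref{eq:finsler_energy_update_schem}. Finally, plugging this expression for $\mu_{T-1,i}$ into the transversality condition $\sum_{i}\mu_{T-1,i}=0$ yields a single linear equation for $y$ of the form $Wy=V$, with $W$ and $V$ as stated; solving gives $y=W^{-1}V$, after which the $u_{t,i}$ are recomputed and the state update $x_{t+1,i}=x_{t,i}+u_{t,i}$ closes the scheme.

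The step I expect to be the main obstacle is bookkeeping, not novelty: the $u$-gradient $\nabla_{u}[u^{\top}\tilde{G}(x,u)u]$ has two contributions, the ``algebraic'' $2\tilde{G}_{t,i}u_{t,i}$ and the velocity-sensitivity term $\zeta_{t,i}$, and one must verify that $\zeta_{t,i}$ enters \emph{only} inside the parenthesis multiplied by $\tilde{G}_{t,i}^{-1}$ and never propagates into the costate recursion (which is driven purely by the $x$-derivative $\nu_{t,i}$). Provided this split is handled cleanly, the rest of the derivation is just the Riemannian argument of Proposition~\ref{prop:update_scheme} with $G_{t,i}\mapsto\tilde{G}_{t,i}$ and the additional $\zeta_{t,i}$ term carried through each summation; the approximation $G(\tilde{x}_{T-t,i},\cdot)\approx G(\tilde{x}_{T-t-1,i},\cdot)$ justified before Eq.~\ref{eq:finsler_reverse_energy_control} is what makes this direct transfer valid.
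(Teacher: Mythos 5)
Your proposal is correct and follows essentially the same route as the paper: the paper likewise writes down the frozen (linearized) optimality system with the costate recursion driven only by $\nu_{t,i}$ and the extra $\zeta_{t,i}$ appearing solely in the $u$-stationarity equation, then solves backward for $\mu_{t,i}$, enforces $\sum_{t}u_{t,i}=y-a_{i}$ to isolate $\mu_{T-1,i}$, and uses $\sum_{i}\mu_{T-1,i}=0$ to obtain $y=W^{-1}V$. The one bookkeeping point you flag — that $\zeta_{t,i}$ never enters the costate recursion — is exactly how the paper's system Eq.~\ref{eq:finsler_eq_system} is set up, so your derivation matches.
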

\begin{proof}
    See Appendix~\ref{ap:finsler_frechet}.
\end{proof}
\begin{figure}[t!]
    \centering
    \includegraphics[width=1.0\textwidth]{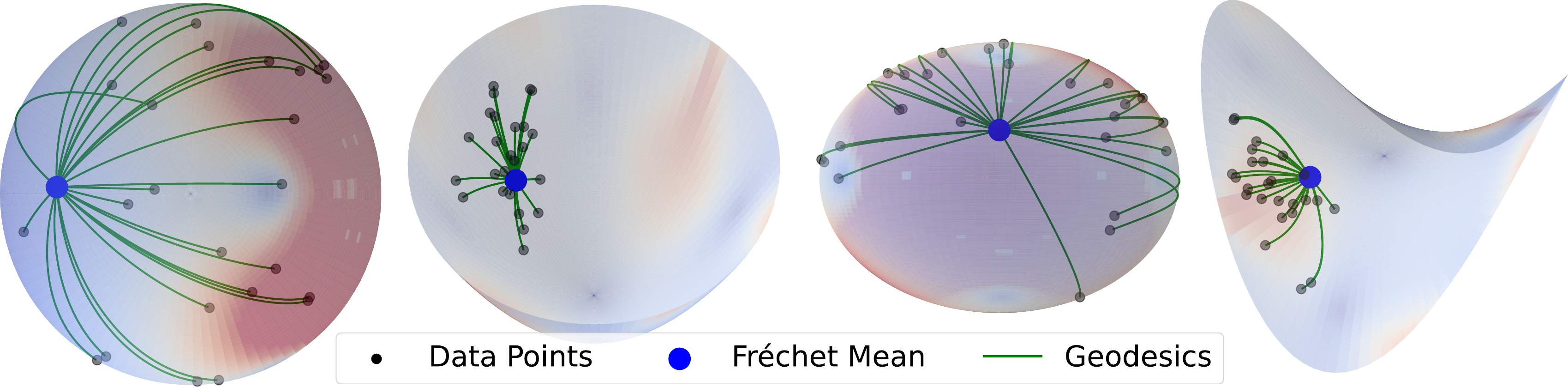}
    \caption{The application of \textit{GEORCE-FM} to Finsler manifolds, where we show the estimated Fr\'echet mean and geodesics using \textit{GEORCE-FM} for (from left to right) a sphere, paraboloid, ellipsoid and hyperbolic paraboloid all equipped with a wind field. The details of the manifolds and data points can be found in Appendix~\ref{ap:manifold_description}.}
    \label{fig:finsler_frechet}
    \vspace{-1.em}
\end{figure}
In Fig.~\ref{fig:finsler_frechet} we show the estimated Fr\'echet mean as well as the estimated connecting geodesics for four different Finsler manifolds. We also see that the adaptive extension of \textit{GEORCE-FM} in algorithm~\ref{al:adaptive_georce_fm} can be directly extended to Finsler manifolds, where $\tilde{V}$ and $\tilde{W}$ are given by
\begin{equation} \label{eq:w_v_adaptive_finlser}
    \begin{split}
        \tilde{W} &= \sum_{i \in \mathcal{I}}w_{i}\left(\sum_{t=0}^{T-1}G_{t,i}^{-1}\right)^{-1}, \\
        \tilde{V} &= \sum_{i \in \mathcal{I}}w_{i}\left(\sum_{t=0}^{T-1}G_{t,i}^{-1}\right)^{-1}a_{i}-\frac{1}{2}\sum_{i \in \mathcal{I}}\left(\sum_{t=0}^{T-1}G_{t,i}^{-1}\right)^{-1}\sum_{t=0}^{T-1}G_{t,i}^{-1}\left(\zeta_{t,i}+\sum_{j>t}^{T-1}g_{j,i}\right)
    \end{split}
\end{equation}
In the above, we have shown an iterative scheme to estimate the Fr\'echet mean as a starting point on a Finsler manifold. The Fr\'echet mean can also trivially be found as an end point by not applying the transformation of the variables $x$ and $u$.

\section{Experiments}
\rowcolors{2}{gray!10}{white}
\begin{figure}[t!]
    \centering
    \includegraphics[width=1.0\textwidth]{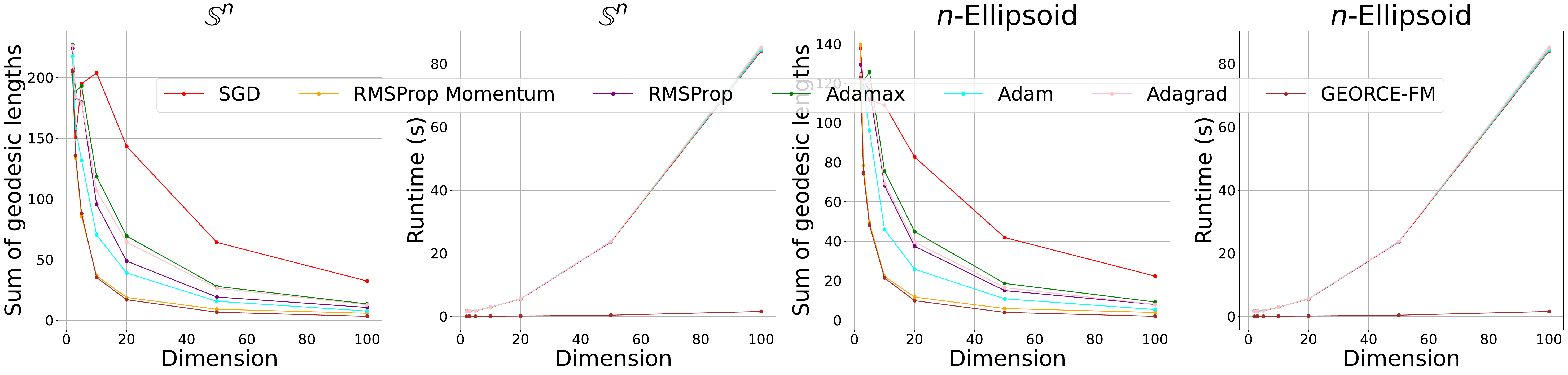}
    \caption{Estimated squared geodesic length and runtime for the sphere and ellipsoid for different dimensions and different optimization schemes.}
    \label{fig:sphere_ellipsoid_runtime}
    \vspace{-1.em}
\end{figure}

\paragraph{Riemannian.} To compare \textit{GEORCE-FM} to alternative similar methods, we use different optimizers for minimizing Eq.~\ref{eq:frechet_energy_disc} directly. Since multiple geodesics have to be computed, we use methods that are designed to handle large number of parameters. \cite{georce} observed that methods such as \textit{BFGS} and trust-region methods were slow, and therefore we will instead apply stochastic methods that are designed to handle a large number of parameters. Table~\ref{tab:riemmannian_comparison_table} shows the runtime estimates for \textit{ADAM} \citep{kingma2017adam}, \textit{RMSprop Momentum} \citep{ruder2017overviewgradientdescentoptimization} and \textit{GEORCE-FM}. To estimate the solution, we use \textit{GEORCE} \citep{georce} with $10$ iterations to estimate the geodesics between the data points and the estimated Fr\'echet mean to compute the sum of squared geodesics distances in Eq.~\ref{eq:frechet_mean}. We summarize the results for the $n$-sphere and $n$-ellipsoids in Fig.~\ref{fig:sphere_ellipsoid_runtime}. We see that \textit{GEORCE-FM} is significantly faster than the alternative methods and generally achieves a better solution. For details on the manifolds and data, we refer to Appendix~\ref{ap:manifold_description} and Appendix~\ref{ap:data_methods}, respectively. We also provide additional experiments in Appendix~\ref{ap:additional_experiments} for other methods.
\begin{sidewaystable}
    \centering
    \scriptsize
    \begin{tabular*}{\textheight}{@{\extracolsep\fill}lcccccc}
        \toprule%
        & \multicolumn{2}{c}{\textbf{ADAM (T=100)}} & \multicolumn{2}{c}{\textbf{RMSprop Momentum  (T=100)}} & \multicolumn{2}{c}{\textbf{GEORCE-FM  (T=100)}} \\\cmidrule{2-3}\cmidrule{4-5}\cmidrule{6-7}%
        Finsler Manifold & Length & Runtime & Length & Runtime & Length & Runtime \\
        \midrule
        $\mathbb{S}^{2}$ & $217.60$ & $1.6748 \pm 0.0010$ & $\pmb{202.28}$ & $1.6695 \pm 0.0013$ & $204.99$ & $\pmb{0.0611} \pm \pmb{ 0.0017 }$ \\ 
        $\mathbb{S}^{3}$ & $158.24$ & $1.7086 \pm 0.0007$ & $\pmb{133.92}$ & $1.7062 \pm 0.0012$ & $135.98$ & $\pmb{0.0534} \pm \pmb{ 0.0002 }$ \\ 
        $\mathbb{S}^{5}$ & $131.77$ & $1.8670 \pm 0.0009$ & $\pmb{85.48}$ & $1.8372 \pm 0.0016$ & $88.01$ & $\pmb{0.0537} \pm \pmb{ 0.0020 }$ \\ 
        $\mathbb{S}^{10}$ & $70.37$ & $2.9435 \pm 0.0015$ & $36.79$ & $2.9340 \pm 0.0011$ & $\pmb{35.20}$ & $\pmb{0.0808} \pm \pmb{ 0.0002 }$ \\ 
        $\mathbb{S}^{20}$ & $39.04$ & $5.5679 \pm 0.0009$ & $18.83$ & $5.5143 \pm 0.0010$ & $\pmb{16.97}$ & $\pmb{0.1250} \pm \pmb{ 0.0002 }$ \\ 
        $\mathbb{S}^{50}$ & $15.70$ & $25.1048 \pm 0.6575$ & $9.14$ & $24.1128 \pm 0.3806$ & $\pmb{6.67}$ & $\pmb{0.3963} \pm \pmb{ 0.0002 }$ \\ 
        $\mathbb{S}^{100}$ & $7.64$ & $85.3785 \pm 0.4844$ & $5.80$ & $84.8019 \pm 0.0659$ & $\pmb{3.27}$ & $\pmb{1.5691} \pm \pmb{ 0.0003 }$ \\ 
        \hline
        $\mathrm{E}\left( 2 \right)$ & $123.22$ & $1.6598 \pm 0.0020$ & $139.62$ & $1.6661 \pm 0.0008$ & $\pmb{122.68}$ & $\pmb{0.0357} \pm \pmb{ 0.0002 }$ \\ 
        $\mathrm{E}\left( 3 \right)$ & $117.67$ & $1.7159 \pm 0.0008$ & $78.37$ & $1.7002 \pm 0.0008$ & $\pmb{74.58}$ & $\pmb{0.0653} \pm \pmb{ 0.0022 }$ \\ 
        $\mathrm{E}\left( 5 \right)$ & $96.16$ & $1.8809 \pm 0.0010$ & $49.45$ & $1.8445 \pm 0.0017$ & $\pmb{48.14}$ & $\pmb{0.0529} \pm \pmb{ 0.0017 }$ \\ 
        $\mathrm{E}\left( 10 \right)$ & $45.91$ & $2.9558 \pm 0.0017$ & $22.22$ & $2.9349 \pm 0.0012$ & $\pmb{21.42}$ & $\pmb{0.0715} \pm \pmb{ 0.0002 }$ \\ 
        $\mathrm{E}\left( 20 \right)$ & $25.82$ & $5.6296 \pm 0.0029$ & $11.68$ & $5.5802 \pm 0.0024$ & $\pmb{9.89}$ & $\pmb{0.1300} \pm \pmb{ 0.0002 }$ \\ 
        $\mathrm{E}\left( 50 \right)$ & $10.82$ & $23.7450 \pm 0.0019$ & $5.91$ & $23.6992 \pm 0.0069$ & $\pmb{3.94}$ & $\pmb{0.3960} \pm \pmb{ 0.0001 }$ \\ 
        $\mathrm{E}\left( 100 \right)$ & $5.39$ & $84.4791 \pm 0.1039$ & $3.90$ & $84.3799 \pm 0.0973$ & $\pmb{1.94}$ & $\pmb{1.5685} \pm \pmb{ 0.0002 }$ \\ 
        \hline
        $\mathbb{T}^{2}$ & $1234.12$ & $1.6483 \pm 0.0011$ & $1227.92$ & $1.6429 \pm 0.0015$ & $\pmb{1225.65}$ & $\pmb{0.0743} \pm \pmb{ 0.0028 }$ \\ 
        \hline
        $\mathbb{H}^{2}$ & $\pmb{174.60}$ & $1.6616 \pm 0.0017$ & $174.61$ & $1.6564 \pm 0.0025$ & $174.60$ & $\pmb{0.1594} \pm \pmb{ 0.0015 }$ \\ 
        \hline
        Paraboloid & $\pmb{103.95}$ & $1.6080 \pm 0.0012$ & $123.29$ & $1.6086 \pm 0.0013$ & $103.95$ & $\pmb{0.0182} \pm \pmb{ 0.0002 }$ \\ 
        Hyperbolic Paraboloid & $\pmb{114.39}$ & $1.6085 \pm 0.0012$ & $129.02$ & $1.6069 \pm 0.0012$ & $114.40$ & $\pmb{0.0082} \pm \pmb{ 0.0001 }$ \\ 
        \hline
        Gaussian Distribution & $148.97$ & $0.2134 \pm 0.0028$ & $154.36$ & $0.2120 \pm 0.0126$ & $\pmb{148.52}$ & $\pmb{0.0154} \pm \pmb{ 0.0017 }$ \\ 
        Fr\'echet Distribution & $31.22$ & $0.2060 \pm 0.0123$ & $36.74$ & $0.2116 \pm 0.0136$ & $\pmb{31.15}$ & $\pmb{0.0065} \pm \pmb{ 0.0002 }$ \\ 
        Cauchy Distribution & $60.60$ & $0.2388 \pm 0.0003$ & $-/-$ & $-$ & $\pmb{60.27}$ & $\pmb{0.0138} \pm \pmb{ 0.0002 }$ \\ 
        Pareto Distribution & $24.03$ & $0.2054 \pm 0.0083$ & $1690.17$ & $0.2002 \pm 0.0009$ & $\pmb{24.02}$ & $\pmb{0.0062} \pm \pmb{ 0.0002 }$ \\ 
        \hline
        VAE MNIST & $858.06$ & $39.0780 \pm 0.0026$ & $813.19$ & $39.2018 \pm 0.0039$ & $\pmb{771.59}$ & $\pmb{7.1326} \pm \pmb{ 0.0027 }$ \\ 
        VAE CelebA & $9420.09$ & $832.8971 \pm 0.0761$ & $43617996.00$ & $862.4910 \pm 0.2627$ & $\pmb{9414.38}$ & $\pmb{79.2793} \pm \pmb{ 0.0061 }$ \\  
        \bottomrule
    \end{tabular*}
    \caption{The table shows the sum of squared geodesic length for the estimated Fr\'echet mean for \textit{ADAM}, \textit{RMSprop Momentum} and \textit{GEORCE-FM} on a GPU for Riemannian manifolds. The methods were terminated if the $\ell^{2}$-norm of the average gradient across the data was less than $10^{-4}$ or $1,000$ iterations have been reached. $\mathrm{E}(n)$ denotes an Ellipsoid of dimension $n$, while $\mathcal{P}(n)$ denotes the space of $n \times n$ symmetric positive definite matrices. When the computational time was longer than $24$ hours, or if the method returns $\mathrm{nan}$, the value is set to $-$. Since the \textsc{vae}'s are learned manifolds based on data, and therefore can be unstable, the methods are terminated if the average gradient across the data was less than $10^{-3}$. Further, to avoid memory complexity for the learned manifolds in storing the neural network, the geodesics are updated sequentially for VAE MNIST and VAE CelebA, while for the other manifolds the geodesics are updated in parallel. The \textsc{vae}'s use $10$ datapoints, while the remaining manifolds use $100$ synthetically generated datapoints. The data is described in Appendix~\ref{ap:data_methods}.}
    \label{tab:riemmannian_comparison_table}
    \vspace{-2.5em}
\end{sidewaystable}
\begin{figure}[h!]
    \centering
    \includegraphics[width=1.0\textwidth]{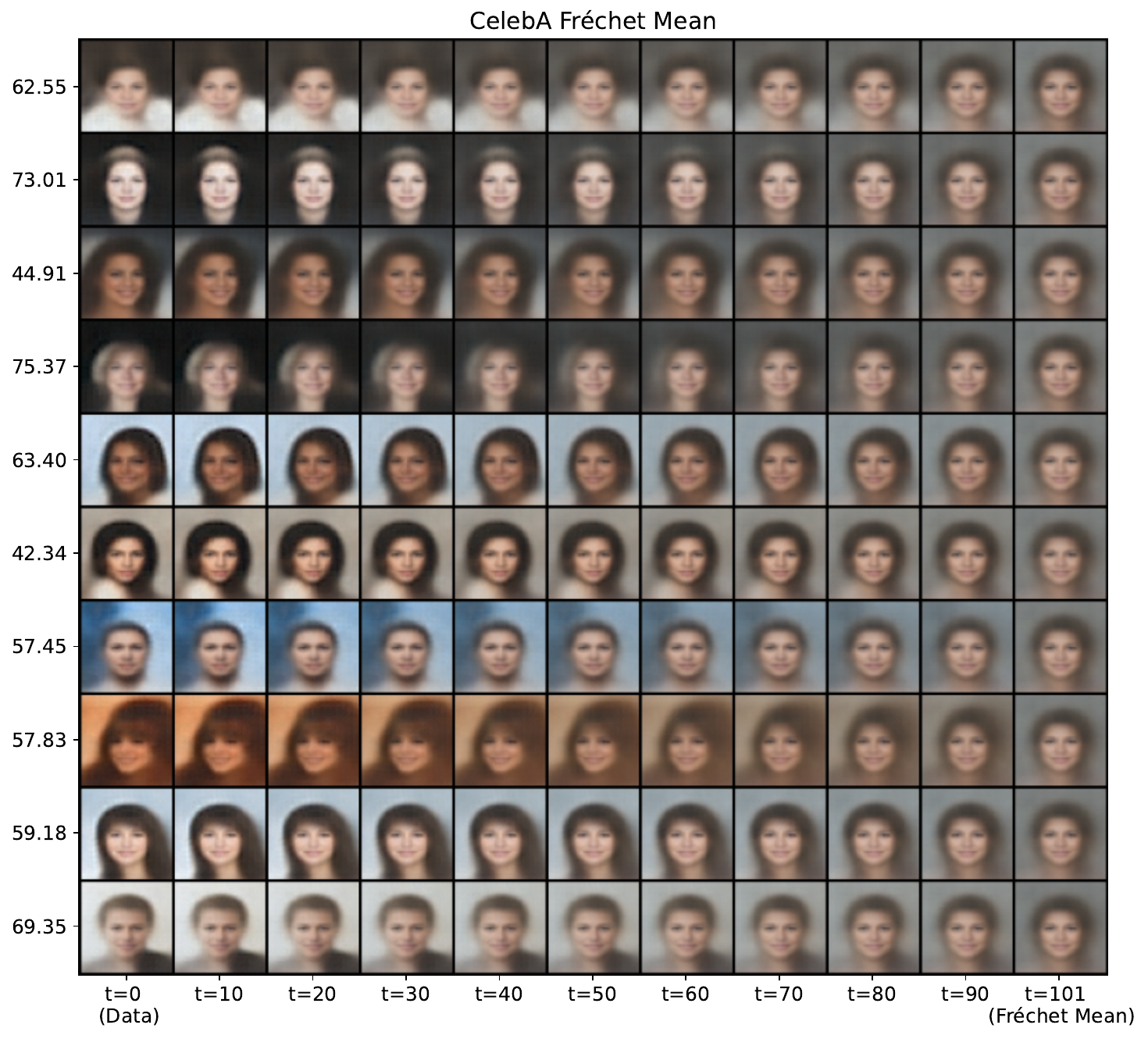}
    \caption{The application of \textit{GEORCE-FM} to manifold learning for a \textsc{vae} estimating the Fr\'echet mean for $10$ images of the CelebA dataset \citep{liu2015faceattributes} reconstructed using a \textsc{vae} similar to \citep{shao2017riemannian} Each row shows $10$ points on the geodesic, while the left most image is the data and the right most image is the estimated Fr\'echet mean using \textit{GEORCE-FM}. The length of the estimated geodesic using \textit{GEORCE-FM} is shown to the left, while the grid point number is shown at the bottom.}
    \label{fig:celeba_riemannian_32}
    \vspace{-1.em}
\end{figure}
To further show the application of \textit{GEORCE-FM} to real-world data, we consider a manifold learned using a Variational-Autoencoder (\textsc{vae}) equipped with the pull-back metric \citep{shao2017riemannian, arvanitidis2021latent}
\begin{equation*}
    G(z) = J_{f_{\theta}}(z)^{\top}J_{f_{\theta}}(z),
\end{equation*}
where $J_{f_{\theta}}$ is the Jacobian of the decoder, $f_{\theta}$, which is a neural network with parameters $\theta$. We provide additional details on the \textsc{vae} in Appendix~\ref{ap:manifold_description}. In Fig.~\ref{fig:celeba_riemannian_32} we show the estimated geodesics and Fr\'echet mean using $T=100$ grid points for $10$ images reconstructed using the \textsc{vae} corresponding to the data used in Table~\ref{tab:riemmannian_comparison_table}.
\paragraph{Finslerian.} To illustrate our method across different Finsler manifolds, we consider a Riemannian background metric, which is affected by a wind field similar to the construction in \cite{Piro_2021}. We consider the same generic wind field as in \citep{georce}
\begin{equation}
    f(x) = \frac{\sin x \odot \cos x}{(\cos x)^{\top} G(x) \cos x},
    \label{eq:generic_forcefield}
\end{equation}
We provide additional details on the construction in Appendix~\ref{ap:manifold_description}. Table~\ref{tab:finsler_comparison_table} shows the runtime and sum of squared geodesic length for different Finsler manifolds constructed using the wind field in Eq.~\ref{eq:generic_forcefield} on a Riemannian background metric. The sum of squared geodesic lengths is computed using \textit{GEORCE} \citep{georce} between the data points and the estimated Fr\'echet mean for the different methods. We see that \textit{GEORCE-FM} generally is faster and achieves a better estimate of the Fr\'echet mean. Additional experiments can be found in Appendix~\ref{ap:additional_experiments}.
\begin{sidewaystable}
    \centering
    \scriptsize
    \begin{tabular*}{\textheight}{@{\extracolsep\fill}lcccccc}
        \toprule%
        & \multicolumn{2}{c}{\textbf{ADAM (T=100)}} & \multicolumn{2}{c}{\textbf{RMSprop Momentum  (T=100)}} & \multicolumn{2}{c}{\textbf{GEORCE-FM  (T=100)}} \\\cmidrule{2-3}\cmidrule{4-5}\cmidrule{6-7}%
        Finsler Manifold & Length & Runtime & Length & Runtime & Length & Runtime \\
        \midrule
        $\mathbb{S}^{2}$ & $83.64$ & $2.3388 \pm 0.0020$ & $85.91$ & $2.3085 \pm 0.0029$ & $\pmb{83.63}$ & $\pmb{0.0230} \pm \pmb{ 0.0002 }$ \\ 
        $\mathbb{S}^{3}$ & $76.33$ & $2.4284 \pm 0.0013$ & $75.97$ & $2.4215 \pm 0.0016$ & $\pmb{75.16}$ & $\pmb{0.0330} \pm \pmb{ 0.0001 }$ \\ 
        $\mathbb{S}^{5}$ & $70.40$ & $3.1670 \pm 0.0022$ & $46.81$ & $3.1471 \pm 0.0021$ & $\pmb{46.79}$ & $\pmb{0.0763} \pm \pmb{ 0.0028 }$ \\ 
        $\mathbb{S}^{10}$ & $37.68$ & $5.5112 \pm 0.0047$ & $19.95$ & $5.4865 \pm 0.0047$ & $\pmb{18.99}$ & $\pmb{0.0938} \pm \pmb{ 0.0002 }$ \\ 
        $\mathbb{S}^{20}$ & $20.44$ & $12.4084 \pm 0.0008$ & $10.45$ & $12.3972 \pm 0.0021$ & $\pmb{9.44}$ & $\pmb{0.2542} \pm \pmb{ 0.0001 }$ \\ 
        $\mathbb{S}^{50}$ & $8.82$ & $55.1251 \pm 0.0010$ & $8.31$ & $55.0981 \pm 0.0017$ & $\pmb{4.32}$ & $\pmb{0.7822} \pm \pmb{ 0.0002 }$ \\ 
        $\mathbb{S}^{100}$ & $7.30$ & $180.1550 \pm 0.0409$ & $6.57$ & $180.1470 \pm 0.0476$ & $\pmb{2.95}$ & $\pmb{1.2838} \pm \pmb{ 0.0002 }$ \\ 
        \hline
        $\mathrm{E}\left( 2 \right)$ & $45.17$ & $2.3167 \pm 0.0006$ & $56.48$ & $2.3075 \pm 0.0002$ & $\pmb{44.75}$ & $\pmb{0.0508} \pm \pmb{ 0.0002 }$ \\ 
        $\mathrm{E}\left( 3 \right)$ & $\pmb{48.35}$ & $2.4337 \pm 0.0007$ & $50.36$ & $2.4048 \pm 0.0020$ & $48.68$ & $\pmb{0.0403} \pm \pmb{ 0.0003 }$ \\ 
        $\mathrm{E}\left( 5 \right)$ & $48.36$ & $3.1432 \pm 0.0015$ & $27.05$ & $3.1245 \pm 0.0025$ & $\pmb{24.85}$ & $\pmb{0.0530} \pm \pmb{ 0.0003 }$ \\ 
        $\mathrm{E}\left( 10 \right)$ & $23.57$ & $5.1367 \pm 0.0019$ & $11.86$ & $5.0912 \pm 0.0015$ & $\pmb{10.86}$ & $\pmb{0.0617} \pm \pmb{ 0.0002 }$ \\ 
        $\mathrm{E}\left( 20 \right)$ & $13.59$ & $12.5121 \pm 0.0026$ & $7.85$ & $12.4686 \pm 0.0023$ & $\pmb{5.60}$ & $\pmb{0.2530} \pm \pmb{ 0.0002 }$ \\ 
        $\mathrm{E}\left( 50 \right)$ & $6.46$ & $54.7758 \pm 0.0019$ & $6.87$ & $54.7446 \pm 0.0015$ & $\pmb{2.52}$ & $\pmb{0.7813} \pm \pmb{ 0.0001 }$ \\ 
        $\mathrm{E}\left( 100 \right)$ & $5.29$ & $177.0563 \pm 0.1030$ & $4.87$ & $177.0098 \pm 0.0526$ & $\pmb{1.95}$ & $\pmb{1.2813} \pm \pmb{ 0.0002 }$ \\ 
        \hline
        $\mathbb{T}^{2}$ & $\pmb{437.49}$ & $2.4922 \pm 0.0021$ & $467.28$ & $2.4912 \pm 0.0021$ & $467.19$ & $\pmb{0.1128} \pm \pmb{ 0.0008 }$ \\ 
        \hline
        $\mathbb{H}^{2}$ & $74.70$ & $2.5148 \pm 0.0015$ & $\pmb{74.69}$ & $2.5139 \pm 0.0022$ & $74.72$ & $\pmb{0.0831} \pm \pmb{ 0.0001 }$ \\ 
        \hline
        Paraboloid & $58.80$ & $2.2218 \pm 0.0008$ & $56.74$ & $2.2359 \pm 0.0009$ & $\pmb{56.68}$ & $\pmb{0.1344} \pm \pmb{ 0.0001 }$ \\ 
        Hyperbolic Paraboloid & $54.21$ & $2.2172 \pm 0.0010$ & $58.32$ & $2.2114 \pm 0.0010$ & $\pmb{46.69}$ & $\pmb{0.0730} \pm \pmb{ 0.0004 }$ \\ 
        \hline
        Gaussian Distribution & $62.73$ & $0.7512 \pm 0.0021$ & $75.61$ & $0.7504 \pm 0.0016$ & $\pmb{62.43}$ & $\pmb{0.0285} \pm \pmb{ 0.0006 }$ \\ 
        Fr\'echet Distribution & $11.58$ & $0.6400 \pm 0.0021$ & $50.27$ & $0.6389 \pm 0.0056$ & $\pmb{11.15}$ & $\pmb{0.0120} \pm \pmb{ 0.0004 }$ \\ 
        Cauchy Distribution & $24.96$ & $0.7524 \pm 0.0008$ & $26.46$ & $0.7438 \pm 0.0079$ & $\pmb{24.91}$ & $\pmb{0.0167} \pm \pmb{ 0.0004 }$ \\ 
        \hline
        Pareto Distribution & $11.63$ & $0.6514 \pm 0.0004$ & $31.71$ & $0.6544 \pm 0.0026$ & $\pmb{11.03}$ & $\pmb{0.0134} \pm \pmb{ 0.0004 }$ \\ 
        \hline
        VAE MNIST & $304.26$ & $463.7502 \pm 0.0112$ & $296.61$ & $463.6756 \pm 0.4144$ & $\pmb{287.44}$ & $\pmb{18.3540} \pm \pmb{ 0.0084 }$ \\ 
        VAE CelebA & $3173.79$ & $1596.7343 \pm 0.0697$ & $3227.14$ & $1445.5969 \pm 106.3969$ & $\pmb{3170.45}$ & $\pmb{40.1097} \pm \pmb{ 0.0064 }$ \\ 
        \bottomrule
    \end{tabular*}
    \caption{The table shows the sum of squared geodesic length for the estimated Fr\'echet mean for \textit{ADAM}, \textit{RMSprop Momentum} and \textit{GEORCE-FM} on a GPU for Finsler manifolds constructed with a Riemannian background metric equipped with the wind field in Eq.~\ref{eq:generic_forcefield}. The methods were terminated if the $\ell^{2}$-norm of the average gradient across the data was less than $10^{-4}$ or $1,000$ iterations have been reached. $\mathrm{E}(n)$ denotes an Ellipsoid of dimension $n$, while $\mathcal{P}(n)$ denotes the space of $n \times n$ symmetric positive definite matrices. When the computational time was longer than $24$ hours, or if the method returns $\mathrm{nan}$, the value is set to $-$. Since the \textsc{vae}'s are learned manifolds based on data, and therefore can be unstable, the methods are terminated if the average gradient across the data was less than $10^{-3}$. Further, to avoid memory complexity for the learned manifolds in storing the neural network, the geodesics are updated sequentially for VAE MNIST and VAE CelebA, while for the other manifolds the geodesics are updated in parallel. The \textsc{vae}'s use $10$ datapoints, while the remaining manifolds use $100$ synthetically generated datapoints. The data is described in Appendix~\ref{ap:data_methods}.}
    \label{tab:finsler_comparison_table}
    \vspace{-2.5em}
\end{sidewaystable}
\paragraph{Adaptive Estimation} To compare the adaptive estimation of the Fr\'echet mean with alternative methods, we consider alternative stochastic methods estimating the Fr\'echet using mini-batches of data as in Eq.~\ref{eq:frechet_energy_disc_stoch}. For alternative methods, we estimate stochastically the Fr\'echet mean by minimizing
\begin{equation} \label{eq:stoch_disc_frechet}
    \min_{\left(x_{t,i}, y\right)} \sum_{i \in \mathcal{I}}w_{i}\sum_{t=0}^{T-1} \left(x_{t,i+1}-x_{t,i}\right)^{\top}G(x_{t,i})\left(x_{t,i+1}-x_{t,i}\right), \quad x_{0,i}= a_{i}, x_{T,i}=y,
\end{equation}
where $\mathcal{I} = \{i_{1},i_{2},\dots,i_{n}\}$ is an index set consisting of $n$ distinct random integers $i_{1} < i_{2} < \dots <i_{n}$. Thus, for standard stochastic optimization solvers, we estimate only mini-batches of the geodesics as well as the Fr\'echet mean in each iteration.

We show the result in Table~\ref{tab:riemannian_adaptive}, where we consider datasets of $1,000$ data points and use $10\%$ of the data in the estimation. For the adaptive estimation of \textit{GEORCE-FM} we use 5 sub-iterations and terminate the algorithm if the change in the estimated Fr\'echet mean is less than $10^{-4}$. For the other methods, we terminate the algorithm if the mean stochastic gradient of Eq.~\ref{eq:stoch_disc_frechet} over the number of data points is less than $10^{-4}$. Since it is difficult to have similar stopping criteria for \textit{GEORCE-FM} and the alternative methods, we do not report the runtime in Table~\ref{tab:riemannian_adaptive}, but only the sum of the squared geodesic length for all data using \textit{GEORCE} \citep{georce} for the estimated Fr\'echet mean for each algorithm. In general, we see that the adaptive extension of \textit{GEORCE-FM} is more accurate than the alternative methods.

\begin{sidewaystable}
    \centering
    \scriptsize
    \begin{tabular*}{\textheight}{@{\extracolsep\fill}lccc}
        \hline
        & \multicolumn{3}{c}{\textbf{Adaptive Estimation of the Fr\'echet Mean for Riemannian Manifolds}} \\
        \cmidrule{1-4}
        & \multicolumn{3}{c|}{\textbf{Batch 10\%}} \\
        \cmidrule{1-4}
        \textbf{Manifold} & ADAM & RMSprop Momentum &
        GEORCE-FM \\
        \hline
        $\mathbb{S}^{2}$ & $2370.92$ & $\pmb{2033.23}$ & $2179.35$ \\ 
        $\mathbb{S}^{3}$ & $2339.52$ & $\pmb{1378.82}$ & $1430.56$ \\ 
        $\mathbb{S}^{5}$ & $2025.96$ & $851.12$ & $\pmb{821.47}$ \\ 
        $\mathbb{S}^{10}$ & $1277.13$ & $394.19$ & $\pmb{374.00}$ \\ 
        $\mathbb{S}^{20}$ & $810.52$ & $188.29$ & $\pmb{173.39}$ \\ 
        $\mathbb{S}^{50}$ & $378.83$ & $83.76$ & $\pmb{68.65}$ \\ 
        $\mathbb{S}^{100}$ & $192.71$ & $51.20$ & $\pmb{33.50}$ \\ 
        \hline
        $\mathrm{E}\left( 2 \right)$ & $1364.47$ & $1482.42$ & $\pmb{1308.92}$ \\ 
        $\mathrm{E}\left( 3 \right)$ & $1446.98$ & $\pmb{846.17}$ & $1163.30$ \\ 
        $\mathrm{E}\left( 5 \right)$ & $1333.88$ & $504.87$ & $\pmb{439.33}$ \\ 
        $\mathrm{E}\left( 10 \right)$ & $846.91$ & $237.06$ & $\pmb{211.01}$ \\ 
        $\mathrm{E}\left( 20 \right)$ & $556.93$ & $116.52$ & $\pmb{101.55}$ \\ 
        $\mathrm{E}\left( 50 \right)$ & $267.66$ & $52.28$ & $\pmb{40.59}$ \\ 
        $\mathrm{E}\left( 100 \right)$ & $-/-$ & $-/-$ & $\pmb{19.83}$ \\ 
        \hline
        $\mathbb{T}^{2}$ & $14986.02$ & $\pmb{14839.74}$ & $14953.47$ \\ 
        \hline
        $\mathbb{H}^{2}$ & $\pmb{1255.85}$ & $1256.54$ & $1256.70$ \\ 
        \hline
        Paraboloid & $\pmb{1040.85}$ & $1142.48$ & $1041.53$ \\ 
        Hyperbolic Paraboloid & $\pmb{1109.81}$ & $1117.19$ & $1110.46$ \\ 
        \hline
        Gaussian Distribution & $1509.58$ & $1667.65$ & $\pmb{1437.59}$ \\ 
        Fr\'echet Distribution & $308.39$ & $26604758.00$ & $\pmb{306.09}$ \\ 
        Cauchy Distribution & $648.87$ & $645.29$ & $\pmb{585.35}$ \\ 
        Pareto Distribution & $273.26$ & $798.23$ & $\pmb{269.40}$ \\ 
        \hline
        VAE MNIST & $7886.67$ & $6890.60$ & $\pmb{6462.85}$ \\ 
        VAE CelebA & $-/-$ & $-/-$ & $\pmb{80255.43}$ \\ 
        \hline
    \end{tabular*}
    \caption{The table shows the sum of squared geodesic length for the estimated Fr\'echet mean for \textit{ADAM}, \textit{RMSprop Momentum} and \textit{GEORCE-FM} on a GPU using only mini-batches of data. $\mathrm{E}(n)$ denotes an Ellipsoid of dimension $n$, while $\mathcal{P}(n)$ denotes the space of $n \times n$ symmetric positive definite matrices. When the computational time was longer than $24$ hours, or if the method returns $\mathrm{nan}$, the value is set to $-$. The \textsc{vae}'s use $100$ datapoints, while the remaining manifolds use $1,000$ synthetically generated datapoints. The data is described in Appendix~\ref{ap:data_methods}.}
    \label{tab:riemannian_adaptive}
    \vspace{-2.5em}
\end{sidewaystable}

\begin{figure}[h!]
    \centering
    \includegraphics[width=1.0\textwidth]{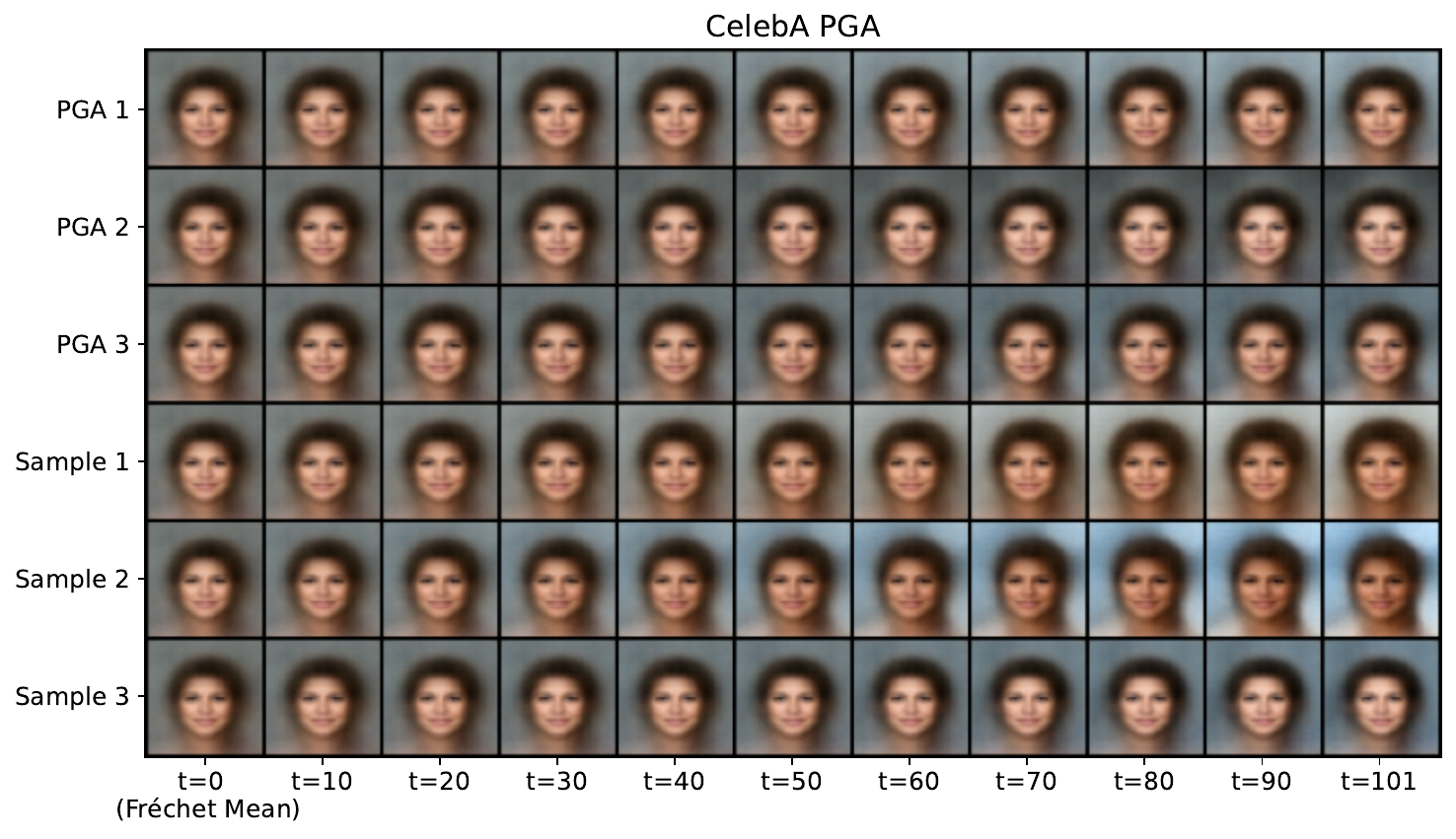}
    \caption{The first three rows shows the three-most variance explaing principal geodesics for the 10 images in Fig.~\ref{fig:celeba_riemannian_32}. The bottom three rows shows samples using the three principal geodesics by $\mathrm{Exp}\left(\mu, \sum_{k=1}^{3}\alpha_{k}v_{k}\right)$, where $\mu$ is the Fr\'echet mean, $\{v_{k}\}_{k=1}^{3}$ are the initial direction of the principal geodesics and $\alpha_{k} \sim \mathcal{N}(0,1)$. We compute the exponential map using the \textsc{ode} in Eq.~\ref{eq:bvp_ode} with the explicit Runge-Kutta method of order 5 (4) \citep{rk45}.}
    \label{fig:celeba_pga_riemannian_32}
    \vspace{-1.em}
\end{figure}

\paragraph{Application to geometric statistics} In this section, we illustrate how geometric statistics can be easily computed using \textit{GEORCE-FM}. To estimate the Fr\'echet mean, we compute both the Fr\'echet mean and geodesics as well as computing $u_{0,i}$ for each data point $a_{i}$ for $i=1,\dots,N$, which can be interpreted as the logarithmic map modulo scaling. These three ``ingredients'' often provide the basis for more elaborate statistics. To illustrate this, consider principal geodesic analysis \citep{fletcher_pga}, where data is projected onto geodesics that describe the most of the variance in the data. In this case, the principal geodesics are computed as the eigenvectors of
\begin{equation*}
    S = \frac{1}{N}\sum_{i=1}^{N}\mathrm{Log}_{\mu}(a_{i})\mathrm{Log}_{\mu}(a_{i})^{\top},
\end{equation*}
where $\mu$ is the Fr\'echet mean for the data $\{a_{i}\}_{i=1}^{N}$. By computing the Fr\'echet mean using \textit{GEORCE-FM}, the eigenvectors of $S$ can be directly computed approximating the logarithmic map as $\mathrm{Log}_{\mu}(a_{i}) \approx u_{0,i}$. In Fig.~\ref{fig:celeba_pga_riemannian_32}, we illustrate the principal geodesics and samples for the \textsc{vae} on the CelebA dataset \citep{liu2015faceattributes} for the images in Fig.~\ref{fig:celeba_riemannian_32}.

\section{Conclusion}
In this paper, we have introduced \textit{GEORCE-FM}; an algorithm to compute geodesics and the Fr\'echet mean simultaneously, which significantly reduces runtime for Riemannian and Finslerian manifolds. We have shown that the algorithm has global convergence and local quadratic convergence. In addition, we have derived an adaptive extension to make it scalable for a large number of data and have proven that it converges in expectation to the Fr\'echet mean. Empirically, we have shown that \textit{GEORCE-FM} exhibits superior runtime and accuracy of the Fr\'echet mean compared to other optimization methods. 

\paragraph{Limitations} Estimating geodesics and the Fr\'echet mean at the same time can be numerically expensive, as it requires storing multiple matrices for each grid point, which makes it less efficient in memory. Although this can be solved by only computing the metric matrix function when needed, this will increase the runtime as the metric matrix function would have to be computed multiple times again in each iteration. Thus, the method provides a trade-off between memory efficiency and runtime efficiency, where the optimal choice depends on the number of data and dimensionality. In the adaptive update scheme for \textit{GEORCE-FM} we have proposed rather simple update formulas. In future research, it should be studied whether the adaptive scheme can be improved by incorporating, e.g., momentum of the update similar to other stochastic optimizers. Our algorithm also assumes access to a local coordinate system, which can be a limitation for manifolds, where this is not practically possible.

\backmatter

\bmhead{Supplementary information}

The supplementary information in the paper consists of the appendix for proofs, details on experiments and manifolds as well as additional experiments.

\bmhead{Acknowledgements}
We acknowledge the Python library JAX \citep{jax2018github}, from which our implementation has been built.

\section*{Declarations}

\begin{itemize}
\item Funding: Research grant 42062 from VILLUM FONDEN;  funding from the European Research Council (ERC) under the European Union’s Horizon research and innovation programme (grant agreement 101125993); funding from the Novo Nordisk Foundation through the Center for Basic Machine Learning Research in Life Science (NNF20OC0062606).
\item Conflict of interest/Competing interests (check journal-specific guidelines for which heading to use): NA
\item Ethics approval and consent to participate: NA
\item Consent for publication: Currently under review.
\item Data availability: All data used are described in the appendix.
\item Materials availability: NA
\item Code availability: The code can be found at \url{https://github.com/FrederikMR/georce_fm}
\item Author contribution: FMR derived the algorithm and implemented the method as well as benchmarks. SH and SM supervised the project with comments and corrections. SM and FMR proved Lemma 1. All authors reviewed the manuscript.
\end{itemize}

\bibliography{sn-article}


\clearpage
\section*{Appendix}
\begin{appendix}
    \section{Proofs and derivations} \label{ap:proofs}
    \subsection{Equivalence between Fr\'echet means and energy formulation for Riemannian manifolds} \label{ap:frechet_equivalence}
    \begin{lemma}
    Let $\mu^{\mathrm{FM}}$ denote the set of minima of eq.~\ref{eq:frechet_mean}, and let $\mu^{\mathrm{Energy}}$ denote the set of minima of eq.~\ref{eq:frechet_energy} with $w_{i}=1$ for all $i \in \{1,\dots,N\}$. Then
    \begin{equation*}
        \mu^{\mathrm{Energy}} = \mu^{\mathrm{FM}}.
    \end{equation*}
\end{lemma}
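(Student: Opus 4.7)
The plan is to exploit the classical relation between length and energy of a curve. For any absolutely continuous curve $\gamma:[0,1]\to \mathcal{M}$, the Cauchy--Schwarz inequality yields
\begin{equation*}
    \mathcal{L}_{\gamma}(a,b)^{2} \;=\; \left(\int_{0}^{1}\sqrt{\dot{\gamma}^{\top}G(\gamma)\dot{\gamma}}\,\dif t\right)^{2} \;\leq\; \int_{0}^{1}\dot{\gamma}^{\top}G(\gamma)\dot{\gamma}\,\dif t \;=\; 2\,\mathcal{E}_{\gamma}(a,b),
\end{equation*}
with equality if and only if $\sqrt{\dot{\gamma}^{\top}G(\gamma)\dot{\gamma}}$ is constant in $t$, i.e.\ $\gamma$ has constant speed. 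First I would use this to show that for any fixed $y \in \mathcal{M}$,
\begin{equation*}
    \inf_{\gamma_{i}}\mathcal{E}_{\gamma_{i}}(a_{i},y) \;=\; \tfrac{1}{2}\,\dist^{2}(a_{i},y),
\end{equation*}
the infimum being attained by a constant-speed minimizing geodesic (which exists by the standing assumption that the data lie in a strongly geodesically convex set). Indeed, any length-minimizing curve can be reparameterized to constant speed without changing its length, and the constant-speed reparameterization saturates the Cauchy--Schwarz bound.

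Next I would sum over $i$ and exchange the infimum with the finite sum:
\begin{equation*}
    \inf_{\{\gamma_{i}\}}\sum_{i=1}^{N}\mathcal{E}_{\gamma_{i}}(a_{i},y) \;=\; \tfrac{1}{2}\sum_{i=1}^{N}\dist^{2}(a_{i},y).
\end{equation*}
Since the right-hand side differs from the Fréchet-mean objective only by the positive constant $1/2$, the outer minimization over $y$ yields the same set of minimizing points, so $\mu^{\mathrm{Energy}} \subseteq \mu^{\mathrm{FM}}$ and $\mu^{\mathrm{FM}} \subseteq \mu^{\mathrm{Energy}}$ (where on the energy side one interprets $\mu^{\mathrm{Energy}}$ as the projection onto the $y$-coordinate of the joint minimizer). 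For the converse containment one observes that if $y^{*} \in \mu^{\mathrm{FM}}$, then choosing each $\gamma_{i}^{*}$ to be a constant-speed minimizing geodesic from $a_{i}$ to $y^{*}$ gives a pair $(y^{*}, \{\gamma_{i}^{*}\})$ whose joint objective matches the infimum.

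The main obstacle is ensuring that the infimum in the first step is actually attained and that minimizing geodesics exist. This is where the hypothesis of a geodesically convex (and complete) containing set does the work: it guarantees existence of a constant-speed minimizing geodesic between any $a_{i}$ and any candidate $y$ in the set, so the Cauchy--Schwarz equality case is realized. A secondary, purely bookkeeping point is fixing the convention for $\mu^{\mathrm{Energy}}$: since Eq.~\ref{eq:frechet_energy} produces a minimizer in the pair $(y, \{\gamma_{i}\})$, I would explicitly state that $\mu^{\mathrm{Energy}}$ denotes the set of $y$-components of such minimizers, and note that for every $y^{*} \in \mu^{\mathrm{FM}}$ the curve tuple is uniquely determined (up to reparameterization) as constant-speed geodesics from $a_{i}$ to $y^{*}$.
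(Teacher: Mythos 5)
Your argument is correct and follows essentially the same route as the paper's proof: both rest on the Cauchy--Schwarz relation $\mathcal{L}_{\gamma}^{2}\leq 2\mathcal{E}_{\gamma}$ with equality exactly for constant-speed curves (the paper cites this as Lemma 2.3 of do Carmo), conclude $\min_{\gamma}\mathcal{E}_{\gamma}(a_{i},y)=\tfrac{1}{2}\dist^{2}(a_{i},y)$ on the convex set, and then pull the curve minimization into a joint minimization with $y$. Your added care about attainment of the infimum and about reading $\mu^{\mathrm{Energy}}$ as the $y$-projection of the joint minimizer is a welcome tightening of the same argument, not a different one.
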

\begin{proof}
    Let $\mathcal{L}_{\gamma}(a,y)$ denote the length for a curve $\gamma$ between points $a,y \in \mathcal{M}$. Similarly, let $\mathcal{E}_{\gamma}(a,y)$ denote the energy for a curve $\gamma$ between points $a,y \in \mathcal{M}$. By the proof of Lemma 2.3 page 194 in \citep{do1992riemannian} it follows that
    \begin{equation*}
        \mathcal{E}_{\gamma}(a,y) \propto \left(\mathcal{L}_{\gamma}(a,y)\right)^{2},
    \end{equation*}
    if and only if $\gamma$ is a geodesic. This implies that
    \begin{equation*}
        \argmin_{y \in \mathcal{M}}\sum_{i=1}^{N} \dist^{2}(y, a_{i}) = \argmin_{y \in \mathcal{M}}\sum_{i=1}^{N} \min_{\gamma_{i}}\left(\mathcal{L}_{\gamma_{i}}(y,a_{i})\right)^{2} = \argmin_{y \in \mathcal{M}}\sum_{i=1}^{N} \min_{\gamma_{i}}\mathcal{E}_{\gamma_{i}}(y,a_{i}),
    \end{equation*}
    since minimizing the energy is a geodesic \citep{gallot2004riemannian}. In the latter equation, we minimize the starting point of the minimizing curves that connect the start and data points. This can equivalently be minimized jointly as
    \begin{equation*}
        \argmin_{\substack{y \in \mathcal{M} \\ \left\{\gamma_{i}\right\}_{i=1}^{N}}}\sum_{i=1}^{N} \mathcal{E}_{\gamma}(a_{i},y),
    \end{equation*}
    where we have changed the order of the start and end point, since the Riemannian energy is symmetric.
\end{proof}
    \subsection{Equivalence between Fr\'echet means and energy formulation for Finslerian manifolds} \label{ap:finsler_energy_frechet}
    \begin{lemma}
    Let $\mu^{\mathrm{FM}}$ denote the set of minima of eq.~\ref{eq:finsler_frechet_mean}, and let $\mu^{\mathrm{Energy}}$ denote the set of minima given by
    \begin{equation}
        \mu^{\mathrm{Energy}}, \{\gamma_{i}\}_{i=1}^{N} = \argmin_{\substack{y \in \mathcal{M} \\ \{\gamma_{i}\}_{i=1}^{N}}}\sum_{i=1}^{N} \mathcal{E}^{(F)}_{\gamma_{i}}(y, a_{i}),
    \end{equation}
    where the energy $\mathcal{E}^{(F)}$ is defined using the fundamental tensor $g$, i.e.
    \begin{equation*}
        \mathcal{E}^{(F)}_{\gamma}(y, a_{i}) = \int_{0}^{1}g\left(\gamma(t),\dot{\gamma}(t)\right)\,\dif t,
    \end{equation*}
    with $\gamma(0)=y$ and $\gamma(1)=a_{i}$, Then 
    \begin{equation*}
        \mu^{\mathrm{Energy}} = \mu^{\mathrm{FM}}.
    \end{equation*}
\end{lemma}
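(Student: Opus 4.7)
The plan is to reduce the Finslerian statement to an energy-length identity analogous to the Riemannian proof in Appendix~\ref{ap:frechet_equivalence}, with the asymmetry of the Finsler distance handled by fixing the orientation of each connecting curve from $y$ to $a_i$. First, I would rewrite the integrand of $\mathcal{E}^{(F)}_{\gamma}$ in terms of the Finsler norm. By 1-homogeneity of $F$, Euler's identity applied to $F^{2}$ gives $v^{i}v^{j}G_{ij}(x,v)=F^{2}(x,v)$, so the fundamental tensor contraction satisfies $g(\gamma(t),\dot{\gamma}(t))=\dot{\gamma}^{\top}G(\gamma,\dot{\gamma})\dot{\gamma}=F^{2}(\gamma,\dot{\gamma})$, and hence
\begin{equation*}
    \mathcal{E}^{(F)}_{\gamma}(y,a_{i})=\int_{0}^{1}F^{2}(\gamma(t),\dot{\gamma}(t))\,\dif t.
\end{equation*}

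Next, the key technical step is the Finsler Cauchy--Schwarz inequality: for any piecewise smooth $\gamma$ connecting $y$ to $a_{i}$,
\begin{equation*}
    \mathcal{L}^{(F)}_{\gamma}(y,a_{i})^{2}=\left(\int_{0}^{1}F(\gamma,\dot{\gamma})\,\dif t\right)^{2}\leq \int_{0}^{1}F^{2}(\gamma,\dot{\gamma})\,\dif t=\mathcal{E}^{(F)}_{\gamma}(y,a_{i}),
\end{equation*}
with equality if and only if $F(\gamma(t),\dot{\gamma}(t))$ is constant in $t$. Since Finsler length is reparametrization invariant, any length-minimizing curve from $y$ to $a_{i}$ admits a constant-$F$-speed parametrization, which then simultaneously realizes the infimum of the energy and yields $\min_{\gamma}\mathcal{E}^{(F)}_{\gamma}(y,a_{i})=\dist_{F}^{2}(y,a_{i})$. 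Conversely, any minimizer of the energy is a geodesic with constant $F$-speed and thus achieves the squared distance. This exactly mirrors the Riemannian identity, but crucially respects orientation: the curves are taken from $y$ to $a_{i}$ and are compared against $\dist_{F}(y,a_{i})$ in that order.

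Finally, since the sum over $i$ decouples over the independent variables $\gamma_{i}$, I would interchange the minima:
\begin{equation*}
    \min_{\substack{y\in\mathcal{M}\\\{\gamma_{i}\}}}\sum_{i=1}^{N}\mathcal{E}^{(F)}_{\gamma_{i}}(y,a_{i})=\min_{y\in\mathcal{M}}\sum_{i=1}^{N}\min_{\gamma_{i}}\mathcal{E}^{(F)}_{\gamma_{i}}(y,a_{i})=\min_{y\in\mathcal{M}}\sum_{i=1}^{N}\dist_{F}^{2}(y,a_{i}),
\end{equation*}
so the set of optimal $y$ coincides with $\mu^{\mathrm{FM}}$, with the optimal $\gamma_{i}$ being constant-$F$-speed minimizing geodesics from $y$ to $a_{i}$.

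The main obstacle compared to the Riemannian proof is precisely the asymmetry: one cannot, as in Appendix~\ref{ap:frechet_equivalence}, freely swap the roles of the start and end points of each $\gamma_{i}$, because $\mathcal{E}^{(F)}_{\gamma}(y,a_{i})\neq \mathcal{E}^{(F)}_{\bar{\gamma}}(a_{i},y)$ in general ($G$ depends on the velocity, so reversing $\gamma$ changes the integrand). The proof therefore hinges on formulating the equality in a direction-consistent way, and on verifying the energy-length identity directly from $F$ rather than borrowing the Riemannian statement as a black box; everything else is a straightforward transposition of the Riemannian argument.
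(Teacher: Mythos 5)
Your proposal is correct and follows essentially the same route as the paper's proof: the Cauchy--Schwarz energy--length inequality with equality precisely for constant-speed curves, the fact that energy minimizers are constant-speed length-minimizing geodesics, and the decoupling of the joint minimization over $y$ and the $\gamma_{i}$. Your explicit use of Euler's identity to justify $g(\gamma,\dot{\gamma})=F^{2}(\gamma,\dot{\gamma})$ and your care with the orientation of the curves are points the paper leaves implicit, but they do not change the argument.
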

\begin{proof}
    Let $\mathcal{L}^{(F)}_{\gamma}(a,y)$ denote the length for a curve $\gamma$ between points $a,y \in \mathcal{M}$. Similarly, let $\mathcal{E}^{(F)}_{\gamma}(a,y)$ denote the energy for a curve $\gamma$ between points $a,y \in \mathcal{M}$. By Cauchy-Schwarz inequality it follows competently similar to Lemma 2.3 in \cite{do1992riemannian}
    \begin{equation*}
        \left(\mathcal{L}^{(F)}_{\gamma}(a,y)\right)^{2} \leq c\mathcal{E}^{(F)}_{\gamma}(a,y),
    \end{equation*}
    where it is assumed that $\gamma$ is paraemetrized by the interval $[0,c]$. Equality holds if and only if the $g(\dot{\gamma}(t),\dot{\gamma}(t))$ is constant. By \cite{ohta2021comparison} 3.3 page 24 then critical points of the energy functional are both locally length minimizing and of constant speed. Thus if and only if $\gamma$ is a geodesic, then
    \begin{equation*}
        \argmin_{y \in \mathcal{M}}\sum_{i=1}^{N} \dist_{F}^{2}(y, a_{i}) = \argmin_{y \in \mathcal{M}}\sum_{i=1}^{N} \min_{\gamma_{i}}\left(\mathcal{L}_{\gamma_{i}}(y,a_{i})\right)^{2} = \argmin_{y \in \mathcal{M}}\sum_{i=1}^{N} \min_{\gamma_{i}}\mathcal{E}^{(F)}_{\gamma_{i}}(y,a_{i}),
    \end{equation*}
    since minimizing the energy is a geodesic \citep{gallot2004riemannian}. In the latter equation, we minimize the starting point of the minimizing curves that connect the start and data points. This can equivalently be minimized jointly as
    \begin{equation*}
        \argmin_{\substack{y \in \mathcal{M} \\ \left\{\gamma_{i}\right\}_{i=1}^{N}}}\sum_{i=1}^{N} \mathcal{E}^{(F)}_{\gamma}(y, a_{i}).
    \end{equation*}
\end{proof}
    \subsection{Assumptions} \label{ap:assumptions}
For the proof of local quadratic convergence we will assume the same regularity of the discretized energy functional as in \citep{georce}. We re-state the assumptions below
\begin{assumption}[\citep{georce}] \label{assum:quad_conv_assumptions}
    We assume the following regarding the discretized energy functional.
    \begin{itemize}
        \item We assume that the discretized energy functional $E(z)$ is locally strictly convex in the (local) minimum point $z^{*}=\left(x^{*},u^{*}\right)$ in the sense that
        \begin{equation*}
            \exists \epsilon>0: \, \forall z \in B_{\epsilon}\left(z^{*}\right), z \neq z^{*}: \, \forall \alpha ]0,1[: \, E\left(\left(1-\alpha\right)z+ \alpha z^{*}\right) < \left(1-\alpha\right)E(z)+\alpha E\left(z^{*}\right),
        \end{equation*}
        where $B_{\epsilon}=\left\{z \, |\, \norm{z-z^{*}} < \epsilon\right\}$.
        \item Assume that the discretized energy functional $E(z)$ is smooth and thus locally Lipschitz, and consider the first order Taylor approximation of the discretized energy functional
        \begin{equation*}
            \Delta E = \langle \nabla E(z_{0}), \Delta z \rangle + \mathcal{O}\left(\Delta z\right)\norm{\Delta z},
        \end{equation*}
        Then the term $\mathcal{O}\left(\Delta z\right)\norm{\Delta z}$ can be re-written as $\mathcal{\tilde{O}}\left(\norm{\Delta z}^{2}\right)$ locally.
    \end{itemize}
\end{assumption}
In the proofs for global and local quadratic convergence, we let $\norm{\cdot}$, $\langle \cdot,\cdot \angle$ and $\nabla$ denote the Euclidean norm, inner product and gradient, respectively.
    \subsection{Necessary conditions} \label{ap:riemann_cond}
    \begin{proposition} 
    The necessary conditions for a minimum in Eq.~\ref{eq:energy_control} is
    \begin{equation} 
        \begin{split}
            &2w_{i}G(x_{t,i})u_{t,i}+\mu_{t,i}=0, \quad t=0,\dots, T-1, \, i=1,\dots,N, \\
            &x_{t+1,i}=x_{t,i}+u_{t,i}, \quad t=0,\dots,T-1, \, i=1,\dots,N, \\
            &\restr{\nabla_{x}\left[w_{i}u_{t,i}^{\top}G(x)u_{t,i}\right]}{x=x_{t,i}}+\mu_{t}=\mu_{t-1}, \quad t=1,\dots,T-1, \\
            &0 = \sum_{i=1}^{N}\mu_{T-1,i}, \\
            &x_{0,i}=a_{i}, x_{T,i}=y, \quad i=1,\dots,N.
        \end{split}
    \end{equation}
    where $\mu_{t,i} \in \mathbb{R}^{d}$ for $t=0,\dots,T-1$ and $i=1,\dots,N$.
\end{proposition}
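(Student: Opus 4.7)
The strategy is a direct application of the KKT first-order conditions for equality-constrained optimization, extending the derivation in \citep{georce} to the setting where the common endpoint $y$ is a free variable rather than fixed data. The only structural novelty over \textit{GEORCE} is the coupling of the $N$ subproblems through the shared endpoint, which will produce exactly the extra dual condition $\sum_{i=1}^{N}\mu_{T-1,i}=0$.

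Concretely, I would introduce multipliers $\mu_{t,i}\in\mathbb{R}^{d}$ for the dynamics $x_{t+1,i}-x_{t,i}-u_{t,i}=0$ ($t=0,\dots,T-1$, $i=1,\dots,N$), and auxiliary multipliers $\lambda_{i}\in\mathbb{R}^{d}$ for the terminal coupling $x_{T,i}-y=0$. The Lagrangian is
\begin{equation*}
\mathcal{L} = \sum_{i=1}^{N}w_{i}\sum_{t=0}^{T-1}u_{t,i}^{\top}G(x_{t,i})u_{t,i}
 + \sum_{i=1}^{N}\sum_{t=0}^{T-1}\mu_{t,i}^{\top}\bigl(x_{t,i}+u_{t,i}-x_{t+1,i}\bigr)
 + \sum_{i=1}^{N}\lambda_{i}^{\top}\bigl(x_{T,i}-y\bigr).
\end{equation*}
The constraints $x_{0,i}=a_{i}$ are substituted directly since $x_{0,i}$ is fixed.

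The proof then proceeds by taking stationarity conditions variable by variable. Differentiating in $u_{t,i}$ gives $2w_{i}G(x_{t,i})u_{t,i}+\mu_{t,i}=0$, which is the first listed condition. Differentiating in $x_{t,i}$ for $t=1,\dots,T-1$ yields $\restr{\nabla_{x}[w_{i}u_{t,i}^{\top}G(x)u_{t,i}]}{x=x_{t,i}} + \mu_{t,i} - \mu_{t-1,i} = 0$, i.e.\ the stated costate recursion. Differentiating in $x_{T,i}$ gives $-\mu_{T-1,i}+\lambda_{i}=0$, so $\lambda_{i}=\mu_{T-1,i}$, which eliminates the auxiliary multipliers. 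Finally, differentiating in the free endpoint $y$ produces $-\sum_{i=1}^{N}\lambda_{i}=0$, which after substitution becomes $\sum_{i=1}^{N}\mu_{T-1,i}=0$. The remaining listed conditions are just the primal feasibility constraints $x_{t+1,i}=x_{t,i}+u_{t,i}$, $x_{0,i}=a_{i}$ and $x_{T,i}=y$.

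The only subtle point, and the main bookkeeping obstacle, is the handling of the terminal boundary. A naive approach would attempt to eliminate $y$ by substituting $y=x_{T,i}$ for some fixed $i$, but this breaks the symmetry between the geodesic subproblems. Introducing the explicit coupling multipliers $\lambda_{i}$ and then eliminating them via the $x_{T,i}$ stationarity is what cleanly produces the single scalar vector identity $\sum_{i=1}^{N}\mu_{T-1,i}=0$ that couples the $N$ otherwise-independent Lagrangian subproblems. Since the objective is smooth and the dynamics and terminal constraints are affine, a constraint qualification (LICQ, which is immediate here because the linearized constraint gradients are linearly independent) holds, so the above first-order stationarity conditions are genuine necessary conditions for a local minimum, completing the argument.
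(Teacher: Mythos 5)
Your proposal is correct and follows essentially the same route as the paper, which writes the discrete Hamiltonian $H_{t,i}=w_{i}u_{t,i}^{\top}G(x_{t,i})u_{t,i}+\mu_{t,i}^{\top}(x_{t,i}+u_{t,i})$ and invokes the time-discrete Pontryagin conditions (state equations, co-state equations, and minimization of the strictly convex $H_{t,i}$ over $u_{t,i}$); your Lagrangian stationarity conditions are exactly these in KKT form. The one place you are more explicit than the paper is the coupling condition $\sum_{i=1}^{N}\mu_{T-1,i}=0$, which you derive cleanly as the transversality condition from stationarity in the free endpoint $y$ after eliminating the auxiliary multipliers $\lambda_{i}=\mu_{T-1,i}$, whereas the paper simply lists it among the co-state equations.
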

\begin{proof}
    The corresponding Hamiltonian function of eq.~\ref{eq:energy_control} for $t=0,\dots,T-1$ and $i=1,\dots,N$ is
    \begin{equation} \label{eq:hamiltonian}
        \begin{split}
            H_{t}\left(x_{t},u_{t},\mu_{t}\right) &= \sum_{i=1}^{N}H_{t,i}\left(x_{t,i},u_{t,i},\mu_{t,i}\right), \\
            H_{t,i}\left(x_{t,i},u_{t,i},\mu_{t,i}\right) &= w_{i}u_{t,i}^{\top}G(x_{t,i})u_{t,i}+\mu_{t,i}^{\top}\left(x_{t,i}+u_{t,i}\right).
        \end{split}
    \end{equation}
    The time-discrete version of Pontryagin's maximum problem on for eq.~\ref{eq:energy_control} gives the following optimization problem with the slight modification that the endpoint, $x_{T,i}$, replaced by the variable Fr\'echet mean $y$ compared to \citep{georce} 
    \begin{align}
        \min_{u_{t,i}} \quad &\sum_{i=1}^{N}\sum_{t=0}^{T} H_{t,i}(x_{t,i},u_{t,i},\mu_{t,i}) \label{eq:energy_pontryagin_start}\\
        \text{s.t.} \quad &x_{t+1,i}=x_{t,i}+u_{t,i}, \quad t=0,\dots,T-2, \, i=1,\dots,N, &&(\text{state equation}),  \\
        &y=x_{T-1,i}+u_{T-1,i}, \quad i=1,\dots,N, &&(\text{state equation}),  \\
        &\nabla_{x_{t,i}}H_{t,i}(x_{t,i},u_{t,i},\mu_{t,i}) =\mu_{t-1,i}, \quad t=1,\dots,T-1, \, i=1,\dots,N &&(\text{co-state equation}), \\
        &0 = \sum_{i=1}^{N}\mu_{T-1,i}, \quad i=1,\dots,N, &&(\text{co-state equation}), \\
        &x_{0,i}=a_{i}, \quad i=1,\dots,N. \label{eq:energy_pontryagin_end}
    \end{align}
    The minimization problem can be decomposed into the following sub-problems for each $i \in \{1,\dots,N\}$, where the equality constraints on the control variables are relaxed in Lagrange sense
    \begin{equation*}
        \min_{u_{t,i}} H_{t,i}\left(x_{t,i},u_{t,i},\mu_{t,i}\right), \quad t=0,\dots,T-1.
    \end{equation*}
    We observe that $H_{t,i}\left(x_{t,i},u_{t,i},\mu_{t,i}\right)$ is strictly convex in $u_{t,i}$, since $G(x_{t,i})$ is positive definite. The stationary point wrt. $u_{t,i}$ is therefore also the global minimum point, which gives the following equations including state and co-state equations from eq.~\ref{eq:energy_control}
    \begin{equation}
        \begin{split}
            &2w_{i}G(x_{t,i})u_{t,i}+\mu_{t,i}=0, \quad t=0,\dots, T-1, \, i=1,\dots,N, \\
            &x_{t+1,i}=x_{t,i}+u_{t,i}, \quad t=0,\dots,T-2, \, i=1,\dots,N, \\
            &y=x_{T-1,i}+u_{T-1,i}, \quad i=1,\dots,N, \\
            &\restr{\nabla_{x}\left[w_{i}u_{t,i}^{\top}G(x)u_{t,i}\right]}{x=x_{t,i}}+\mu_{t}=\mu_{t-1}, \quad t=1,\dots,T-1, \\
            &0 = \sum_{i=1}^{N}\mu_{T-1,i}, \\
            &x_{0,i}=a_{i}, x_{T}=b, \quad i=1,\dots,N.
        \end{split}
    \end{equation}
\end{proof}
    \subsection{Update scheme} \label{ap:update_scheme}
    \begin{proposition} 
    The update scheme for $u_{t},\mu_{t}$ and $x_{t}$ in eq.~\ref{eq:energy_zero_point_problem} can be reduced to
    \begin{equation*}
        \begin{split}
            &y = W^{-1}V, \\
            &\mu_{T-1,i} = \left(\sum_{t=0}^{T-1}G_{t,i}^{-1}\right)^{-1}\left(2w_{i}(a_{i}-y)-\sum_{t=0}^{T-1}G_{t,i}^{-1}\sum_{t>j}^{T-1}\nu_{j,i}\right), \quad i=1,\dots,N, \\
            &u_{t,i} = -\frac{1}{2w_{i}}G_{t,i}^{-1}\left(\mu_{T-1,i}+\sum_{j>t}^{T-1}\nu_{j,i}\right), \quad t=0,\dots,T-1, \, i=1,\dots,N \\
            &x_{t+1,i} = x_{t,i}+u_{t,i}, \quad t=0,\dots,T-2, \, i=1,\dots,N, \\
            &x_{0,i}=a_{i} \quad i=1,\dots,N,
        \end{split}
    \end{equation*}
    where
    \begin{equation*}
        \begin{split}
            W &= \sum_{i=1}^{N}w_{i}\left(\sum_{t=0}^{T-1}G_{t,i}^{-1}\right)^{-1}, \\
            V &= \sum_{i=1}^{N}w_{i}\left(\sum_{t=0}^{T-1}G_{t,i}^{-1}\right)^{-1}a_{i}-\frac{1}{2}\sum_{i=1}^{N}\left(\sum_{t=0}^{T-1}G_{t,i}^{-1}\right)^{-1}\sum_{t=0}^{T-1}G_{t,i}^{-1}\sum_{j>t}^{T-1}\nu_{j,i}.
        \end{split}
    \end{equation*}
\end{proposition}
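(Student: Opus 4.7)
The plan is to decouple the system in Eq.~\ref{eq:energy_zero_point_problem} by a cascade of substitutions that successively eliminates the dual prices $\mu_{t,i}$, the controls $u_{t,i}$, and finally expresses the Fr\'echet mean $y$ in closed form. The key observation is that the first two equations of Eq.~\ref{eq:energy_zero_point_problem} are local in the time index $t$, so once $\mu_{T-1,i}$ and $y$ are known, all other variables are determined explicitly; the coupling only enters through the two global constraints $\sum_{t} u_{t,i} = y-a_i$ and $\sum_{i}\mu_{T-1,i}=0$.

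First, I would solve the backward recursion $\nu_{t,i}+\mu_{t,i}=\mu_{t-1,i}$ by telescoping, yielding the closed form $\mu_{t,i} = \mu_{T-1,i}+\sum_{j>t}^{T-1}\nu_{j,i}$ for $t=0,\dots,T-1$ (with the convention that the empty sum at $t=T-1$ is zero). Substituting this into the optimality condition $2w_i G_{t,i} u_{t,i}+\mu_{t,i}=0$ and inverting $G_{t,i}$ gives the stated formula for $u_{t,i}$ as a function of $\mu_{T-1,i}$ and the known $\nu_{j,i}$.

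Second, I would enforce the boundary constraint $\sum_{t=0}^{T-1}u_{t,i}=y-a_i$ for each $i$. Plugging in the expression for $u_{t,i}$, the sum factors as $-\tfrac{1}{2w_i}\bigl(\sum_t G_{t,i}^{-1}\bigr)\mu_{T-1,i}$ plus an inhomogeneous term involving $\nu_{j,i}$. Solving linearly for $\mu_{T-1,i}$ (which requires invertibility of $\sum_t G_{t,i}^{-1}$, a consequence of the positive definiteness of each $G_{t,i}$) yields the middle identity of Eq.~\ref{eq:energy_update_schem}, expressing $\mu_{T-1,i}$ affinely in $y$.

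Finally, I would impose the remaining global condition $\sum_{i=1}^{N}\mu_{T-1,i}=0$. Substituting the affine expression for $\mu_{T-1,i}$ gives a single linear equation in $y$ of the form $-2Wy + 2V'=0$, after separating the terms proportional to $y$ and grouping the rest; matching with the definitions in Eq.~\ref{eq:w_v_def} confirms $y=W^{-1}V$. The one subtlety worth checking carefully is the bookkeeping of the indexing in the double sum $\sum_{t=0}^{T-1}G_{t,i}^{-1}\sum_{j>t}^{T-1}\nu_{j,i}$, which must line up exactly with the expression in the statement; this is the only place where a sign or index shift could slip in, and it is essentially the main (mild) obstacle. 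Once $y$ is determined, $\mu_{T-1,i}$, $u_{t,i}$, and then $x_{t+1,i}=x_{t,i}+u_{t,i}$ follow directly, completing the derivation.
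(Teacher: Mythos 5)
Your proposal is correct and follows essentially the same route as the paper's own proof: telescope the co-state recursion to express $\mu_{t,i}$ in terms of $\mu_{T-1,i}$, invert $G_{t,i}$ to get $u_{t,i}$, enforce the per-$i$ boundary constraint to solve affinely for $\mu_{T-1,i}$ in terms of $y$, and finally use $\sum_{i}\mu_{T-1,i}=0$ to obtain $y=W^{-1}V$. The invertibility remark via positive definiteness of the $G_{t,i}$ matches the paper's justification as well.
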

\begin{proof}
    The equation system solved in \textit{GEORCE-FM} in iteration $k+1$ is
    \begin{equation*}
        \begin{split}
            &\nu_{t,i}+\mu_{t,i} = \mu_{t-1,i}, \quad t=1,\dots,T-1, \, i=1,\dots,N, \\
            &2w_{i}G_{t,i}u_{t,i}+\mu_{t,i} = 0, \quad t=0,\dots,T-1, \, i=1,\dots,N, \\
            &\sum_{t=0}^{T-1}u_{t,i}=y-a_{i}, \quad i=1,\dots,N, \\
            &\sum_{i=1}^{N}\mu_{T-1,i} = 0, \\
            &\nu_{t,i} := \restr{\nabla_{x}\left(w_{i}u_{t,i}^{\top}G(x)u_{t,i}\right)}{y=x_{t,i}^{(k)},u_{t,i}=u_{t,i}^{(k)}}, \quad t=1,\dots,T-1, \, i=1,\dots,N\\
            &G_{t,i} := G\left(x_{t,i}^{(k)}\right), \quad t=0,\dots,T-1,
        \end{split}
    \end{equation*}
    where the related state variables (before line-search) are found by the state equation
    \begin{equation*}
        x_{t+1,i} = x_{t,i} + u_{t,i}, t=0,\dots,T-2, i=1,\dots,N, \\
        x_{0,i} = a_{i}, \quad i=1,\dots,N.
    \end{equation*}
    As we will see in the following $y=x_{T-1,i}+u_{T-1,i}$ for $i=1,\dots,N$ is omitted in the update step, since $y$ will be determined initially in each iteration. By re-arranging the terms we see that
    \begin{equation*}
        \begin{split}
            &\mu_{t,i} = \mu_{T-1,i} + \sum_{j>t}^{T-1} \nu_{j,t}, \quad t=0,\dots,T-1,i=1,\dots,N, \\
            &u_{t,i} = -\frac{1}{2}G_{t,i}^{-1}\mu_{t,i} = -\frac{1}{2w_{i}}G_{t,i}^{-1}\left(\mu_{T-1,i}+\sum_{j>t}^{T-1}\nu_{j,i}\right), \quad t=0,\dots,T-1,i=1,\dots,N, \\
            &-\frac{1}{2\omega_{i}}\sum_{t=0}^{T-1}G_{t,i}^{-1}\left(\mu_{T-1,i}+\sum_{j>t}^{T-1}g_{ji}\right)=y-a_{i}, \quad i=1,\dots,N,
        \end{split}
    \end{equation*}
    where it is exploited that $G_{t,i}$ is positive definite for all $t,i$ and has a well defined inverse. These equations can be used to provide explicit solutions for the state and control variables. First observe that
    \begin{equation*}
        \mu_{T-1,i} = \left(\sum_{t=0}^{T-1}G_{t,i}^{-1}\right)^{-1}\left(2w_{i}(a_{i}-y)-\sum_{t=0}^{T-1}G_{t,i}^{-1}\sum_{j>t}^{T-1}\nu_{j,i}\right), \quad i=1,\dots,N.
    \end{equation*}
    Applying this result, we deduce that
    \begin{equation*}
        \sum_{i=1}^{N}\mu_{T-1,i} = 0 \Leftrightarrow \sum_{i=1}^{N}\left(\sum_{t=0}^{T-1}G_{t,i}^{-1}\right)^{-1}\left(2w_{i}(a_{i}-y)-\sum_{t=0}^{T-1}G_{t,i}^{-1}\sum_{j>t}^{T-1}\nu_{j,i}\right) = 0,
    \end{equation*}
    which implies that
    \begin{equation*}
        y = W^{-1}V.
    \end{equation*}
    where
    \begin{equation*}
        \begin{split}
            W &= \sum_{i=1}^{N}w_{i}\left(\sum_{t=0}^{T-1}G_{t,i}^{-1}\right)^{-1}, \\
            V &= \sum_{i=1}^{N}w_{i}\left(\sum_{t=0}^{T-1}G_{ti}^{-1}\right)^{-1}a_{i}-\frac{1}{2}\sum_{i=1}^{N}\left(\sum_{t=0}^{T-1}G_{t,i}^{-1}\right)^{-1}\sum_{t=0}^{T-1}G_{ti}^{-1}\sum_{j>t}^{T-1}g_{ji}.
        \end{split}
    \end{equation*}
    Thus, we get the update scheme in Proposition~\ref{prop:update_scheme}. 
\end{proof}
    \subsection{Global convergence} \label{ap:global_convergence}
    We prove that \textit{GEORCE-FM} has global convergence similar to the original \textit{GEORCE} by extending the proof to estimating multiple geodesics and end point. We follow the same approach as in \citep{georce} and adapt the Lemma's and proofs to the general setting.

\begin{lemma} \label{lemma:global_conv_lin_comb}
    Assume that $\left(x^{(j)}, u^{(j)}\right)$ is a feasible solution, then the following properties hold.
    \begin{itemize}
        \item There exists a unique solution, $\left(x^{(j+1)}, u^{(j+1)}\right)$, to the system of equations in eq.~\ref{eq:energy_opt_condtions} based on $\left(x^{(j)}, u^{(j)}\right)$.
        \item All linear combinations $\left(1-\alpha\right)\left(x^{(j)},u^{(j)}\right)+\alpha\left(x^{(j+1)},u^{(j+1)}\right)$ for $0 \leq \alpha \leq 1$ are feasible solutions.
    \end{itemize}
\end{lemma}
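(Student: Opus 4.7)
The plan is to treat the two assertions separately, leaning on the closed-form update already derived in Proposition~\ref{prop:update_scheme}.

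For existence and uniqueness of $(x^{(j+1)}, u^{(j+1)})$, I would argue that Eq.~\ref{eq:energy_update_schem} together with Eq.~\ref{eq:w_v_def} is a constructive formula, so it suffices to check that every matrix inverse appearing there is well-defined. Since $G_{t,i} := G(x_{t,i}^{(j)})$ is the Riemannian metric matrix, it is symmetric positive definite, hence $G_{t,i}^{-1}$ exists and is itself symmetric positive definite. A sum of positive definite matrices is positive definite, so $\sum_{t=0}^{T-1} G_{t,i}^{-1}$ is invertible for every $i$, and with $w_i>0$ the block $W = \sum_{i=1}^{N} w_i \bigl(\sum_{t=0}^{T-1} G_{t,i}^{-1}\bigr)^{-1}$ is a positive linear combination of positive definite matrices and is therefore invertible. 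Consequently $y = W^{-1} V$ is uniquely determined, and the remaining quantities $\mu_{T-1,i}$, $u_{t,i}$, and $x_{t,i}$ are then determined by direct back-substitution through the formulas in Proposition~\ref{prop:update_scheme}, yielding a well-defined and unique next iterate.

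For the convex combination claim, set $(x,u) := (1-\alpha)(x^{(j)}, u^{(j)}) + \alpha (x^{(j+1)}, u^{(j+1)})$ for $\alpha \in [0,1]$ and verify the three feasibility requirements in turn. The starting condition is trivial since $x_{0,i}^{(j)} = x_{0,i}^{(j+1)} = a_i$ gives $x_{0,i}= a_i$. For the shared-endpoint condition, write $y^{(j)} := x_{T,i}^{(j)}$ and $y^{(j+1)} := x_{T,i}^{(j+1)}$; both are independent of $i$ by feasibility of the two iterates, so $x_{T,i} = (1-\alpha) y^{(j)} + \alpha y^{(j+1)}$ is likewise independent of $i$ and defines a common endpoint. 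Finally, the state equation $x_{t+1,i} = x_{t,i} + u_{t,i}$ is affine-linear in $(x,u)$ and is therefore preserved under convex combinations. The only real obstacle is invertibility of the $W$ block and of the per-trajectory sums $\sum_t G_{t,i}^{-1}$; this follows immediately from positive-definiteness of the Riemannian metric together with the positivity of the weights $w_i$, and all remaining verifications reduce to linearity of the feasibility constraints.
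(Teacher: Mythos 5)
Your proposal is correct and follows essentially the same route as the paper: positive definiteness of the metric matrices gives invertibility of all the sums and of $W$, hence a unique next iterate, and feasibility of convex combinations follows from the affine-linearity of the constraints (your handling of the two possibly different endpoints $y^{(j)}$ and $y^{(j+1)}$ is in fact slightly cleaner than the paper's). The only thing the paper adds that you omit is a remark on what happens if the new iterate falls outside the domain of the local chart — it notes that the line search can always select a point on the segment that remains in the (open) chart domain — but this is a peripheral caveat rather than a gap in the core argument.
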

\begin{proof}
    Since the metric tensors, $\left\{G\left(x_{s,i}^{(j)}, u_{s,i}^{(j)}\right)\right\}_{s=0,\dots,T-1,i=1,\dots,N}$ are positive definite matrices, then the matrices are regular, and the inverse is unique, and so are sum of the inverse matrices. That means that there exists a unique solution to the system of equations in iteration $j$ in eq.~\ref{eq:finsler_eq_system} wrt. $\left\{u_{s,i}\right\}_{s=0,\dots,T-1,i=1,\dots,N}$ and $\left\{x_{s,i}^{(j)}\right\}_{s=0,\dots,T-1,i=1,\dots,N}$ by the state equations.

    At first, assume that $\left(x^{(j+1)},u^{(j+1)}\right)$ is well-defined in the domain of the local chart.

    Since the solution, $\left(x^{(j+1)}, u^{(j+1)}\right)$ is also a feasible solution according to eq.~\ref{eq:finsler_eq_system}, then the linear combination of two feasible solutions will also be a feasible solution, i.e.
    \begin{equation*}
        \sum_{s=0}^{T-1}u_{s,i}^{(j+1)} = y-a_{i}, \quad i=1,\dots,N.
    \end{equation*}
    The new solution based on the linear combination gives
    \begin{equation*}
        \begin{split}
            &\left(a-\alpha\right)\sum_{s=0}^{T-1}u_{s,i}^{(j)}+\alpha\sum_{s=0}^{T-1}u_{s,i}^{(j+1)} = \left(1-\alpha\right)\left(y-a_{i}\right)+\alpha\left(y-a_{i}\right) = y-a_{i}, \quad i=1,\dots,N, \\
            &\left(1-\alpha\right)\left(a_{i},x_{1,i}^{(j)},\dots,x_{T-1,i}^{(j)},y\right)+\alpha\left(a_{i},x_{1,i}^{(j)},\dots,x_{T-1,i}^{(j)},y\right) \\
            &= \left(a_{i}, \left(1-\alpha\right)x_{1,i}^{(j)}+\alpha x_{1,i}^{(j)}, \dots, \left(1-\alpha\right)x_{T-1,i}^{(j)}+\alpha x_{T-1,i}^{(j+1)}, y\right), \quad i=1,\dots,N,
        \end{split}
    \end{equation*}
    where
    \begin{equation*}
        \begin{split}
            &\left(1-\alpha\right)x_{s,i}^{(j)}+\alpha x_{s,i}^{(j+1)} = \left(1-\alpha\right)\left(a_{i}+\sum_{k=0}^{t-1}u_{k,i}^{(j)}\right)+\alpha\left(a_{i}+\sum_{k=0}^{t-1}u_{k,i}^{(j+1)}\right) \\
            &= a_{i}+\sum_{k=0}^{t-1}\left(1-\alpha\right)u_{k,i}^{(j)}+\alpha u_{k,i}^{(j+1)}, \quad i=1,\dots,N.
        \end{split}
    \end{equation*}
    This shows the linear combination in the state variable is feasible as it produces a consistent state variable in terms of the start and end point, and furthermore each state variable is feasible if they were determined from the linear combination of the control vectors, which proves that the linear combinations of the state and control variables are also feasible solutions.

    In contrast, if $\left(x^{(j+1)},u^{(j+1)}\right)$ is not well-defined in the local chart, then the line-search in \textit{GEORCE-FM} will be able to determine a point belonging to the manifold and that will be on the line between $\left(x^{(j+1)},u^{(j+1)}\right)$ and $\left(x^{(j)},u^{(j)}\right)$ as the local chart is defined on an open set and $x^{(j)}$ is well-defined in the local chart similar to \textit{GEORCE} \citep{georce}. The new solution is therefore a feasible solution following the argument above with this modification.
\end{proof}

\begin{lemma} \label{lemma:global_conv_minimum}
    Let $\left\{x_{s,i}^{(j)}, u_{s,i}^{(j)}\right\}_{s=0,\dots,T,i=1,\dots,N}$ denote the (feasible) solution after iteration $j$ in \textit{GEORCE-FM}. If $\left\{x_{s,i}^{(j)}, u_{s,i}^{(j)}\right\}_{s=0,\dots,T,i=0,\dots,N}$ decreases the objective function in the sense that there exists an $\eta > 0$ such that for all $0 < \alpha \leq \eta \leq 1$, then
    \begin{equation*}
        E\left(x^{(j)}+\alpha\left(x^{(j+1)}-x^{(j)}\right), u^{(j)}+\alpha\left(u^{(j+1)}-u^{(j)}\right)\right) < E\left(x^{(j)}, u^{(j)}\right),
    \end{equation*}
    where $x^{(j)}[i] = \left(a_{i},x_{1,i}^{(j)}, \dots, x_{T-1,i}^{(j)}, y^{(j)}\right)$ and $u^{(j)}[i] = \left(u_{0,i},u_{1,i}^{(j)}, \dots, u_{T-1,i}^{(j)}, u_{T,i}^{(j)}\right)$.
\end{lemma}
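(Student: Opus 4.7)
The plan is to prove that the GEORCE-FM update direction $\Delta := \bigl(x^{(j+1)} - x^{(j)},\, u^{(j+1)} - u^{(j)}\bigr)$ is a (strict) descent direction for $E$ at the current iterate $\bigl(x^{(j)}, u^{(j)}\bigr)$; from this, the existence of the required $\eta$ follows by a first-order Taylor expansion of $E$ along the segment (which is feasible by Lemma~\ref{lemma:global_conv_lin_comb}).

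The key step is to identify the convex quadratic surrogate that GEORCE-FM implicitly minimizes at iteration $j$. I would introduce
$$\widetilde{E}^{(j)}(x, u) = \sum_{i=1}^{N}\sum_{t=0}^{T-1} w_i\, u_{t,i}^{\top} G_{t,i}\, u_{t,i} + \sum_{i=1}^{N}\sum_{t=1}^{T-1} \nu_{t,i}^{\top}\bigl(x_{t,i} - x_{t,i}^{(j)}\bigr),$$
with $G_{t,i}$ and $\nu_{t,i}$ frozen at the iterate $(x^{(j)}, u^{(j)})$, minimized over the affine feasible set defined by the state equation $x_{t+1,i} = x_{t,i} + u_{t,i}$, the start points $x_{0,i} = a_i$, and the shared endpoint condition $x_{T,i} = y$ (where $y$ is a free variable). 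The KKT system for this program is exactly Eq.~\ref{eq:energy_zero_point_problem}, so by Proposition~\ref{prop:update_scheme} the GEORCE-FM update $(x^{(j+1)}, u^{(j+1)}, y^{(j+1)})$ is its unique minimizer; uniqueness and strict convexity come from positive definiteness of each $G_{t,i}$. Consequently $\widetilde{E}^{(j)}(x^{(j+1)}, u^{(j+1)}) < \widetilde{E}^{(j)}(x^{(j)}, u^{(j)})$ unless $(x^{(j)}, u^{(j)})$ is already the minimizer of $\widetilde{E}^{(j)}$, in which case the algorithm has effectively converged.

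Next, I would verify first-order agreement of $\widetilde{E}^{(j)}$ and $E$ at $(x^{(j)}, u^{(j)})$. A direct computation gives $\partial_{u_{t,i}} \widetilde{E}^{(j)}\big|_{(x^{(j)}, u^{(j)})} = 2 w_i G_{t,i} u_{t,i}^{(j)} = \partial_{u_{t,i}} E\big|_{(x^{(j)}, u^{(j)})}$, and the $x$-derivative coincides with $\nu_{t,i}$, which is the definition of $\partial_{x_{t,i}} E$ at the anchor point. Combining this gradient match with the strict-convexity descent inequality
$$\bigl\langle \nabla \widetilde{E}^{(j)}\bigl(x^{(j)}, u^{(j)}\bigr),\, \Delta \bigr\rangle \;\le\; \widetilde{E}^{(j)}\bigl(x^{(j+1)}, u^{(j+1)}\bigr) - \widetilde{E}^{(j)}\bigl(x^{(j)}, u^{(j)}\bigr) \;<\; 0$$
yields $\bigl\langle \nabla E(x^{(j)}, u^{(j)}),\, \Delta \bigr\rangle < 0$, proving that $\Delta$ is a strict descent direction for $E$. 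Combining this with smoothness of $E$ and Assumption~\ref{assum:quad_conv_assumptions} gives the Taylor estimate $E\bigl(x^{(j)} + \alpha \Delta_x, u^{(j)} + \alpha \Delta_u\bigr) = E\bigl(x^{(j)}, u^{(j)}\bigr) + \alpha\bigl\langle \nabla E, \Delta \bigr\rangle + o(\alpha)$, which is strictly less than $E\bigl(x^{(j)}, u^{(j)}\bigr)$ for all sufficiently small $\alpha > 0$, delivering the desired $\eta$.

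The main obstacle will be the honest handling of the new endpoint variable $y$, which is absent in the original GEORCE setting. The shared-endpoint constraint $x_{T,i} = y$ for every $i$, together with the coupling dual equation $\sum_{i} \mu_{T-1,i} = 0$, must be incorporated into the feasible set of $\widetilde{E}^{(j)}$ so that its KKT system matches Eq.~\ref{eq:energy_zero_point_problem} exactly, and the gradient-agreement computation must account for $y^{(j+1)} - y^{(j)}$ as part of $\Delta$. Feasibility of the interpolated iterate for every $\alpha \in [0,1]$ — in particular, that the common endpoint is preserved along the segment — is precisely what Lemma~\ref{lemma:global_conv_lin_comb} already guarantees, so after that structural identification the remaining steps are standard.
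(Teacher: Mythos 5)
Your proposal is correct, but it reaches the conclusion by a genuinely different route than the paper. You treat the GEORCE-FM step as the exact minimizer of a strictly convex, first-order-matching surrogate $\widetilde{E}^{(j)}$ over the affine feasible set (state equations, fixed starts, shared free endpoint $y$), check that the surrogate's KKT system is precisely Eq.~\ref{eq:energy_zero_point_problem}, and then invoke the standard majorization-type argument: gradient agreement at the anchor plus the convexity inequality $\langle \nabla \widetilde{E}^{(j)}(z^{(j)}), \Delta\rangle \leq \widetilde{E}^{(j)}(z^{(j+1)}) - \widetilde{E}^{(j)}(z^{(j)}) < 0$ shows $\Delta$ is a strict descent direction for $E$, and Taylor finishes. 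The paper instead carries out an explicit first-order expansion of $\Delta E$, substitutes the dual relations $\nabla_{x_{s,i}}E = \mu_{s-1,i}-\mu_{s,i}$ and $\nabla_{u_{s,i}}E\vert_{u^{(j+1)}} = -\mu_{s,i}$, and telescopes using $\Delta x_{s+1,i}-\Delta x_{s,i}=\Delta u_{s,i}$ to arrive at the closed form $\langle \nabla E(z^{(j)}),\Delta\rangle = -\sum_{s,i} 2 w_i \langle G(x_{s,i}^{(j)})\Delta u_{s,i},\Delta u_{s,i}\rangle$, with the endpoint term $\sum_i\langle\mu_{T-1,i},\Delta x_{T,i}\rangle$ absorbed into the telescoping. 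The two computations are consistent (for your quadratic surrogate one has exactly $\widetilde{E}^{(j)}(z^{(j)})-\widetilde{E}^{(j)}(z^{(j+1)})=\sum_{s,i}w_i\langle G_{s,i}\Delta u_{s,i},\Delta u_{s,i}\rangle$, recovering the same directional derivative). Your version is more structural and makes the descent property transparent; the paper's version is more laborious but produces the explicit quantitative form of the directional derivative, which it then reuses verbatim in the proofs of local quadratic convergence and adaptive convergence, so if you adopted your route you would need to re-derive that identity there. Your handling of the new endpoint variable and the coupling condition $\sum_i\mu_{T-1,i}=0$, and your reliance on Lemma~\ref{lemma:global_conv_lin_comb} for feasibility of the interpolated iterates, match the paper's.
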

\begin{proof}
    Since $E$ is a smooth function, the first order Taylor approximation is well defined
    \begin{equation*}
        \begin{split}
            \Delta E_{i}\left(x,u\right) &= \sum_{s=1}^{T}\sum_{i=1}^{N}\left(\left\langle \restr{\nabla_{x}E(x,u)}{\left(x,u\right)=\left(x_{s,i}^{(j)},u_{s,i}^{(j)}\right)}, \Delta x_{s,i}\right \rangle + \mathcal{O}\left(\Delta x_{s,i}\right)\norm{x_{s,i}}\right) \\
            &+ \sum_{s=1}^{T}\sum_{i=1}^{N}\left(\left\langle \restr{\nabla_{u}E(x,u)}{\left(x,u\right)=\left(x_{s,i}^{(j)},u_{s,i}^{(j)}\right)}, \Delta u_{s,i}\right \rangle + \mathcal{O}\left(\Delta u_{s,i}\right)\norm{u_{s,i}}\right),
        \end{split}
    \end{equation*}
    where $\langle \cdot, \cdot \rangle$ is the standard Euclidean inner product and $\lim_{v \rightarrow 0}\mathcal{O}\left(v\right) = 0$. The perturbation, $\left(\Delta x_{s,i}, \Delta u_{s,i}\right)$ is restricted to be a step of \textit{GEORCE-FM} including the end point. From the optimality conditions in eq.~\ref{eq:finsler_eq_system} we have that
    \begin{equation*}
        \begin{split}
            &\restr{\nabla_{x_{s,i}}E\left(x,u\right)}{\left(x,u\right)=\left(x_{s,i}^{(j)},u_{s,i}^{(j)}\right)} = \mu_{s-1,i}-\mu_{s,i}, \quad s=1,\dots,T-1, \, i=1,\dots, N. \\
            &\restr{\nabla_{x_{T,i}}E\left(x,u\right)}{\left(x,u\right)=\left(x_{s,i}^{(j)},u_{s,i}^{(j)}\right)} = \mu_{T-1,i}, \quad i=1,\dots, N. \\
            &\restr{\nabla_{u_{s,i}}E\left(x,u\right)}{\left(x,u\right)=\left(x_{s,i}^{(j)},u_{s,i}^{(j+1)}\right)} = 2\omega_{i}G_{s,i}u_{s,i}^{(j+1)}+\omega_{i}\zeta_{s,i} = -\mu_{s,i}, \quad s=1,\dots,T-1, \, i=1,\dots, N. \\
        \end{split}
    \end{equation*}
    Applying this to the first order Taylor expansion we have that
    \begin{equation*}
        \begin{split}
            \Delta E(x,u) &= \sum_{s=1}^{T-1}\sum_{i=1}^{N}\left(\left\langle \mu_{s-1,i}-\mu_{s,i}, \Delta x_{s,i} \right \rangle +  \mathcal{O}\left(\Delta x_{s,i}\right)\norm{\Delta x_{s,i}}\right) \\
            &+\sum_{i=1}^{N}\left\langle \mu_{T-1,i}, \Delta x_{T,i} \right \rangle \\
            &+ \sum_{s=0}^{T-1}\sum_{i=1}^{N}\left(\left\langle \restr{\nabla_{u_{s,i}}E(x,u)}{\left(x,u\right)=\left(x_{s,i}^{(j)}, u_{s,i}^{(j)}\right)}, \Delta u_{s,i} \right \rangle + \mathcal{O}\left(\Delta u_{s,i}\right)\norm{\Delta u_{s,i}}\right).
        \end{split}
    \end{equation*}
    Re-arranging the terms we see that
    \begin{equation*}
        \begin{split}
            \Delta E(x,u) &= \sum_{s=0}^{T-2}\sum_{i=1}^{N}\left\langle \mu_{s,i}, \Delta x_{s+1,i}\right \rangle \\
            &- \sum_{s=1}^{T-1}\sum_{i=1}^{N}\left(\left\langle \mu_{s,i}, \Delta x_{s,i} \right \rangle + \mathcal{O}\left(\Delta x_{s,i}\right)\norm{\Delta x_{s,i}}\right) \\
            &+ \sum_{i=1}^{N}\left\langle \mu_{T-1,i}, \Delta x_{T,i}\right \rangle \\
            &+ \sum_{s=0}^{T-1}\sum_{i=1}^{N}\left(\left\langle \restr{\nabla_{u_{s,i}}E(x,u)}{\left(x,u\right)=\left(x_{s,i}^{(j)}, u_{s,i}^{(j)}\right)}, \Delta u_{s,i}\right\rangle + \mathcal{O}\left(\Delta u_{s,i}\right)\norm{\Delta u_{s,i}}\right) \\
            &= \sum_{s=1}^{T-2}\sum_{i=1}^{N}\left\langle \mu_{s,i}, \Delta x_{s+1,i}-\Delta x_{s}\right\rangle \\
            &- \sum_{i=1}^{N}\left\langle \mu_{T-1,i}, \Delta x_{T-1,i}\right\rangle - \sum_{s=1}^{T-1}\sum_{i=1}^{N} \mathcal{O}\left(\Delta x_{s,i}\right)\norm{\Delta x_{s,i}} \\
            &+ \sum_{i=1}^{N}\left\langle \mu_{T-1,i}, \Delta x_{T-1,i} \right \rangle \\
            &+ \sum_{s=0}^{T-1}\sum_{i=1}^{N}\left(\left\langle \restr{\nabla_{u_{s,i}}E(x,u)}{\left(x,u\right)=\left(x_{s,i}^{(j)}, u_{s,i}^{(j)}\right)}, \Delta u_{s,i}\right\rangle + \mathcal{O}\left(\Delta u_{s,i}\right)\norm{u_{s,i}}\right)
        \end{split}
    \end{equation*}
    Observe that
    \begin{equation*}
        \begin{split}
            \Delta x_{s+1,i}-\Delta x_{s,i} &= \left(x_{s+1,i}^{(j+1)}-x_{s+1,i}^{(j)}\right) - \left(x_{s,i}^{(j+1)}-x_{s,i}^{(j)}\right) \\
            &= \left(x_{s+1,i}^{(j+1)}-x_{s,i}^{(j+1)}\right)-\left(x_{s+1,i}^{(j)}-x_{s,i}^{(j)}\right) \\
            &= u_{s,i}^{(j+1)}-u_{s,i}^{(j)} \\
            &= \Delta u_{s,i}.
        \end{split}
    \end{equation*}
    Note that $\Delta x_{1,i} = \Delta x_{1,i} - \Delta x_{0,i}$, since $\Delta x_{0,i}=0$. Using this we have that
    \begin{equation*}
        \begin{split}
            \Delta E(x,u) &= \sum_{s=0}^{T-1}\sum_{i=1}^{N}\left(\left\langle \mu_{s,i}, \Delta u_{s,i}\right\rangle + \mathcal{O}\left(\Delta x_{s,i}\right)\norm{\Delta x_{s,i}}\right) \\
            &+ \sum_{s=0}^{T-1}\sum_{i=1}^{N}\left(\left\langle \restr{\nabla_{s,i}E(x,u)}{\left(x,u\right)=\left(x_{s,i}^{(j)}, u_{s,i}^{(j)}\right)}, \Delta u_{s,i} \right \rangle + \mathcal{O}\left(\Delta u_{s,i}\right)\norm{\Delta u_{s,i}}\right) \\
            &= \sum_{s=0}^{T-1}\sum_{i=1}^{N}\left(\left\langle \mu_{s,i} + \restr{\nabla_{u_{s,i}}E(x,u)}{\left(x,u\right)=\left(x_{s,i}^{(j)}, u_{s,i}^{(j)}\right)}, \Delta u_{s,i}\right\rangle + \mathcal{O}\left(\Delta x_{s,i}\right)\norm{\Delta x_{s,i}}\right) \\
            &+ \sum_{t=0}^{T-1}\mathcal{O}\left(\Delta u_{t,i}\right)\norm{\Delta u_{t,i}}.
        \end{split}
    \end{equation*}
    Since 
    \begin{equation*}
        \begin{split}
            &\mu_{s,i} + \restr{\nabla_{u_{s,i}}E(x,u)}{\left(x,u\right)=\left(x_{s,i}^{(j)}, u_{s,i}^{(j)}\right)} \\
            &= -\restr{\nabla_{u_{s,i}}E(x,u)}{\left(x,u\right)=\left(x_{s,i}^{(j)}, u_{s,i}^{(j+1)}\right)} + \restr{\nabla_{u_{s,i}}E(x,u)}{\left(x,u\right)=\left(x_{s,i}^{(j)}, u_{s,i}^{(j)}\right)} \\
            &= -2\omega_{i}G\left(x_{s,i}^{(j)}, u_{s,i}^{(j)}\right)u_{s,i}^{(j+1)} - \omega_{i}\zeta_{s} + 2G\left(x_{s,i}^{(j)}, u_{s,i}^{(j)}\right)u_{s,i}^{(j)} + \omega_{i}\zeta_{s,i} \\
            &= -2\omega_{i}G\left(x_{s,i}^{(j)}, u_{s,i}^{(j)}\right)\left(u_{s,i}^{(j+1)}-u_{s,i}^{(j)}\right)
        \end{split}
    \end{equation*}
    Thus, the first order Taylor approximation becomes
    \begin{equation*}
        \Delta E(x,u) = \sum_{s=0}^{T-1}\sum_{i=1}^{N}\left(\left\langle -2 \omega_{i} G\left(x_{s,i}^{(j)}, u_{s,i}^{(j)}\right)\Delta u_{s,i}, \Delta u_{s,i}\right\rangle + \mathcal{O}\left(\Delta x_{s,i}\right)\norm{\Delta x_{s,i}} + \mathcal{O}\left(u_{s,i}\right)\norm{\Delta u_{s,i}}\right).
    \end{equation*}
    Since $G\left(x_{s,i}^{(j)}, u_{s,i}^{(j)}\right)$ is positive definite, the first summation is negative assuming at least one non-zero vector $\left\{\Delta u_{s,i}\right\}_{s=0,\dots,T-1,i=1,\dots,N}$, which is the case as otherwise the solution $\left(x_{s,i}^{(j)}, u_{s,i}^{(j)}\right)$ is a local minimum point. Note that if $\left\{\norm{u_{s,i}}\right\}_{s=0,\dots,T-1,i=1,\dots,N}$ converges to $0$, then $\norm{\Delta x_{s,i}}$ also converges to $0$ from the state equations of $x_{s,i}$. Thus,
    \begin{equation*}
        \Delta E(x,u) = \sum_{s=0}^{T-1}\left(\left\langle -2\omega_{i} G\left(x_{s,i}^{(j)}, u_{s,i}^{(j)}\right)\Delta u_{s,i}, \Delta u_{s,i} \right \rangle + \mathcal{O}\left(\Delta u_{s,i}\right)\norm{\Delta u_{s,i}}\right).
    \end{equation*}
    Now scale all $\{\Delta u_{s,i}\}_{s=0,\dots,T-1,i=1,\dots,N}$ with a scalar $0 < \alpha \leq 1$. Note that the term $-2\omega_{i}G\left(x_{s,i}^{(j)}, u_{s,i}^{(j)}\right)\Delta u_{s,i}$ is unaffected by the scaling, since this term is equal to $\mu_{s,i}+\restr{\nabla_{u_{s,i}}E(x,u)}{\left(x,u\right)=\left(x_{s,i}^{(j)}, u_{s,i}^{(j)}\right)}$, which is independent of $\alpha$. The first order Taylor approximation is then
    \begin{equation*}
        \Delta E(x,u) = \sum_{s=0}^{T-1}\left(\left\langle -2\omega_{i}G\left(x_{s,i}^{(j)}, u_{s,i}^{(j)}\right)\Delta u_{s,i}, \alpha \Delta u_{s,i}\right\rangle + \mathcal{O}\left(\alpha \Delta u_{s,i}\right)\norm{\alpha \Delta u_{s,i}}\right).
    \end{equation*}
    In the limit it follows that there exists a scalar $\eta > 0$ such that $\frac{\Delta E(x,u)}{\alpha} < 0$ for some $0 < \alpha \leq \eta \leq 1$, i.e. $\Delta E(x,u) < 0$. From Lemma~\ref{lemma:global_conv_lin_comb} it follows that $\left(x_{s,i}^{(j+1)}(\alpha), u_{s,i}^{(j+1)}(\alpha)\right)$ is a feasible solution.
\end{proof}

\begin{proposition}\label{prop:global_convergence}
    Let $E^{(i)}$ denote the discretized energy for the solution after iteration $i$ (including line search) in \textit{GEORCE-FM}. If the starting point, $\left(x^{(0)}, u^{(0)}\right)$ is feasible, then the series $\left\{E^{(i)}\right\}_{i}$ will converge to a local minimum.
\end{proposition}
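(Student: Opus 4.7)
The plan is to establish three things in sequence: the sequence $\{E^{(i)}\}$ is monotone non-increasing, it is bounded below, and its limit corresponds to a stationary point of the discretized energy. The first two give convergence of the numerical sequence $\{E^{(i)}\}$, and the third identifies the limit as a (local) minimum by virtue of the necessary conditions in Proposition~\ref{prop:riemann_cond}.

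First, I would argue monotone decrease. By Lemma~\ref{lemma:global_conv_lin_comb}, the update $(x^{(j+1)}, u^{(j+1)})$ produced by solving the linearized system is well defined, and every convex combination $(1-\alpha)(x^{(j)}, u^{(j)}) + \alpha (x^{(j+1)}, u^{(j+1)})$ for $\alpha \in [0,1]$ is feasible. By Lemma~\ref{lemma:global_conv_minimum}, whenever $(x^{(j)}, u^{(j)})$ is not already a stationary point, there exists $\eta>0$ such that the energy along this segment strictly decreases for all $0<\alpha\leq\eta$. The Armijo backtracking line search with decay $\rho=1/2$ is guaranteed to accept some such $\alpha^{*}>0$ after finitely many halvings, so $E^{(j+1)} < E^{(j)}$. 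If at some iteration $(x^{(j)}, u^{(j)}) = (x^{(j+1)}, u^{(j+1)})$, then by inspection of Eq.~\ref{eq:energy_zero_point_problem} the optimality conditions of Proposition~\ref{prop:riemann_cond} are already satisfied and we are done.

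Next, boundedness below follows because $E$ is a finite sum of quadratic forms $w_i\,u_{t,i}^{\top} G(x_{t,i}) u_{t,i}$ with positive weights and positive definite metric tensors, hence $E^{(j)} \geq 0$ for all $j$. Combined with monotone decrease, $\{E^{(j)}\}$ converges to some $E^{*}\geq 0$, which in particular forces $E^{(j)} - E^{(j+1)} \to 0$.

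Finally, I would identify the limit as a stationary point. Using the first-order expansion computed in the proof of Lemma~\ref{lemma:global_conv_minimum}, the per-step decrease satisfies $E^{(j)} - E^{(j+1)} \geq c\,\alpha^{*}_{j}\sum_{t,i} w_{i}\,\Delta u_{t,i}^{\top} G(x_{t,i}^{(j)}) \Delta u_{t,i} - o(\|\Delta u\|)$ for the accepted Armijo step. Since this tends to zero along the subsequence where the iterates remain in a compact subset of the chart (which line search enforces, as explained after the algorithm), and the $G(x_{t,i}^{(j)})$ are uniformly positive definite on that compact set, any accumulation point $(x^{*}, u^{*}, y^{*})$ must satisfy $\Delta u_{t,i}=0$, i.e., the GEORCE-FM update leaves it fixed; by construction this means the full system Eq.~\ref{eq:energy_opt_condtions} of Proposition~\ref{prop:riemann_cond} holds at the limit, so the limit is a (local) minimum. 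The main obstacle I expect is the last step: ruling out pathological cases where the Armijo step size $\alpha^{*}_{j}\to 0$ faster than $\|\Delta u\|$ shrinks, so that the energy stalls at a non-stationary value. This is handled by the standard Armijo argument that the acceptance criterion enforces a quadratic lower bound on the decrease in terms of $\alpha^{*}_{j}\|\Delta u\|^{2}$, so either $\|\Delta u\|\to 0$ (yielding stationarity) or the accumulated decrease diverges, contradicting the lower bound $E^{(j)}\geq 0$.
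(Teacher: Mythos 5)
Your proposal is correct and follows essentially the same route as the paper: feasibility is preserved (Lemma~\ref{lemma:global_conv_lin_comb}), the energy is nonnegative and strictly decreased by each accepted line-search step (Lemma~\ref{lemma:global_conv_minimum}), so $\{E^{(j)}\}$ converges, and the limit is identified as a local minimum. Your handling of the final step is in fact somewhat more careful than the paper's, which argues by contradiction directly at the limit point; your Armijo-based bound ruling out vanishing step sizes addresses a subtlety the paper leaves implicit.
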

\begin{proof}
    If the starting point, $\left(x^{(0)}, u^{(0)}\right)$ in \textit{GEORCE-FM} is feasible, then the points, $\left(x^{(j)}, u^{(j)}\right)$ after each iteration (including line search) in \textit{GEORCE-FM} is also a feasible solution by Lemma~\ref{lemma:global_conv_lin_comb}.

    The discretized energy functional is a positive function, i.e. a lower bound is higher or equal to $0$. Assume that the series $\left\{E^{(j)}\right\}_{j}$ is not converging to a local minimum. From Lemma~\ref{lemma:global_conv_minimum} it follows that $\left\{E^{(j)}\right\}_{j}$ is decreasing for increasing $j$, and since $E^{(j)}$ has a lower bound, the series $E^{(j)}$ can not be non-converging. Assume on the contrary that the convergence value for $\left\{E^{(j)}\right\}_{j}$, is not a local minimum. Denote the value $\hat{E}$ and the convergence point $\left(\hat{x}, \hat{u}\right)$. According to Lemma~\ref{lemma:global_conv_minimum} \textit{GEORCE-FM} will from the solution $\left(\hat{x}, \hat{u}\right)$ in the following iteration produce a new point $\left(\tilde{x}, \tilde{u}\right)$ such that $E\left(\tilde{x}, \tilde{u}\right) < \hat{F}$, which contradicts that the series $\left\{E^{(j)}\right\}_{j}$ is converging to $\hat{F}$. Thus, the series $\left\{E^{(j)}\right\}_{j}$ will converge to a local minimum.
\end{proof}

Note that by the same argument as in \citep{georce}, then \textit{GEORCE-FM} will not be able to jump between two local minima and will converge to only one local minimum.
    \subsection{Local convergence} \label{ap:local_convergence}
    Before proving local convergence we re-state the following Lemma from \citep{georce}
\begin{lemma}[\citep{georce}]\label{lemma:quad_conv_bound}
    Assume $f(x)$ is a smooth function. Let $z^{*}$ be a (local) minimum point for $f$, and assume that $f$ is locally strictly convex in $z^{*}$. Then
    \begin{equation*}
        \exists \epsilon>0:\quad \forall z \in B_{\epsilon}(z^{*})\setminus\{z^{*}\}:\quad \langle \nabla f(z), z^{*}-z \rangle < 0. 
    \end{equation*}
\end{lemma}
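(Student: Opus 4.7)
The plan is to deduce the claim from the first-order characterization of (strict) convexity via a directional-derivative argument. Fix the $\epsilon>0$ on which $f$ is locally strictly convex at $z^{*}$ in the sense of Assumption~\ref{assum:quad_conv_assumptions}, and let $z\in B_{\epsilon}(z^{*})\setminus\{z^{*}\}$ be arbitrary. First I would upgrade the local minimum property to a strict inequality $f(z^{*})<f(z)$ on the punctured ball. Indeed, applying strict convexity at $\alpha=1/2$ gives
\begin{equation*}
    f\!\left(\tfrac{1}{2}z+\tfrac{1}{2}z^{*}\right) < \tfrac{1}{2}f(z)+\tfrac{1}{2}f(z^{*}),
\end{equation*}
while the left-hand side is bounded below by $f(z^{*})$ since $z^{*}$ is a local minimum and the midpoint lies in $B_{\epsilon}(z^{*})$. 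Re-arranging yields $f(z^{*})<f(z)$.

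Next, for any $\alpha\in(0,1)$ the local strict convexity gives
\begin{equation*}
    f\!\left(z+\alpha(z^{*}-z)\right) < (1-\alpha)f(z)+\alpha f(z^{*}),
\end{equation*}
and subtracting $f(z)$ from both sides, then dividing by $\alpha>0$, yields
\begin{equation*}
    \frac{f(z+\alpha(z^{*}-z))-f(z)}{\alpha} < f(z^{*})-f(z).
\end{equation*}
Letting $\alpha\to 0^{+}$ and using smoothness of $f$, the left-hand side converges to the directional derivative $\langle\nabla f(z),z^{*}-z\rangle$. Combining the passage to the limit with the strict gap $f(z^{*})-f(z)<0$ established above then gives
\begin{equation*}
    \langle\nabla f(z),z^{*}-z\rangle \leq f(z^{*})-f(z) < 0,
\end{equation*}
which is exactly the claim.

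The only subtle point is that the limiting inequality obtained from the second step is a priori only non-strict, so the strictness required by the lemma must come from the independently established gap $f(z^{*})<f(z)$ rather than from directly passing to the limit in the strict convexity inequality. Otherwise, the argument is the standard proof that a convex function has non-positive directional derivative toward a minimizer, sharpened here by the fact that local strict convexity forces the minimum to be attained only at $z^{*}$ inside $B_{\epsilon}(z^{*})$. I do not anticipate any real obstacle beyond being careful about this strict-versus-non-strict distinction.
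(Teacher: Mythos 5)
Your proof is correct. Note that the paper itself does not prove this lemma --- it is simply re-stated from the cited \emph{GEORCE} reference without an accompanying argument --- so there is no in-paper proof to compare against; your derivation is the standard first-order-characterization argument for (strictly) convex functions, and you handle the one genuinely delicate point correctly: the limit $\alpha\to 0^{+}$ of the difference quotients only yields the non-strict inequality $\langle\nabla f(z),z^{*}-z\rangle\leq f(z^{*})-f(z)$, so the required strictness must come from the separately established gap $f(z^{*})<f(z)$, which you obtain from the midpoint inequality combined with local minimality. The only cosmetic refinement worth making explicit is that $\epsilon$ should be taken as the minimum of the radius on which the strict-convexity inequality of Assumption~\ref{assum:quad_conv_assumptions} holds and the radius on which $z^{*}$ is a minimizer, so that both properties are available simultaneously on $B_{\epsilon}(z^{*})$; you use both implicitly but do not state this reduction.
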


In the following, we generalize the proof in \citep{georce} to the case of the Fr\'echet mean.

\begin{proposition}\label{prop:local_convergence}
    If the discretized energy functions is locally strictly convex in the (local) minimum point $z^{*}=\left(x^{*}, u^{*}\right)$ and locally $\alpha^{*}=1$, i.e. no line-search, then \textit{GEORCE-FM} has locally quadratic convergence, i.e.
    \begin{equation*}
        \exists \epsilon>0 \, \exists c > 0 \, \forall z^{(j)} \in B_{\epsilon}\left(z^{*}\right): \quad \norm{z^{(j+1)}-z^{*}} \leq c \norm{z^{(j)}-z^{*}}^{2},
    \end{equation*}
    where $B_{\epsilon}\left(z^{*}\right)$ is an open ball with radius $\epsilon$ around $z^{*}$ and $z^{(j)}=\left\{\left(x_{s,i}^{(j)},u_{s,i}^{(j)}\right)\right\}_{s,i}$ is the $i$'th iteration of \textit{GEORCE-FM}.
\end{proposition}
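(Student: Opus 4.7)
The plan is to lift the local quadratic convergence proof from \textit{GEORCE} in \cite{georce} to the Fr\'echet mean setting by interpreting \textit{GEORCE-FM} (with $\alpha^{*}=1$) as a Newton-type iteration applied to the system of necessary conditions in Proposition~\ref{prop:riemann_cond}. Stack the primal-dual variables as $z=(x,u,y,\mu)$ and write the conditions in Eq.~\ref{eq:energy_opt_condtions} as a single smooth nonlinear system $F(z)=0$, whose components are the stationarity equations, the co-state recursion, the state equations, the boundary conditions $x_{0,i}=a_{i}$ and $x_{T,i}=y$, and the new dual balance condition $\sum_{i=1}^{N}\mu_{T-1,i}=0$. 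Let $z^{*}$ be the local minimum point; by Proposition~\ref{prop:global_convergence} the iterates generated by \textit{GEORCE-FM} satisfy $F(z^{(k)})\to 0$ and, for $z^{(0)}$ sufficiently close to $z^{*}$, remain in $B_{\epsilon}(z^{*})$.

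First, I would show that one \textit{GEORCE-FM} iteration starting from $z^{(k)}$ reproduces exactly the solution of the linearization of $F$ obtained by freezing the metric $G_{t,i}:=G(x_{t,i}^{(k)})$ and the gradient term $\nu_{t,i}:=\nabla_{x}[w_{i}u_{t,i}^{\top}G(x)u_{t,i}]|_{x=x_{t,i}^{(k)},u=u_{t,i}^{(k)}}$. These two substitutions are precisely the only non-linear ingredients of $F$, so the linear system solved in closed form by Proposition~\ref{prop:update_scheme} coincides with the Newton step $F'(z^{(k)})\,(z^{(k+1)}-z^{(k)})+F(z^{(k)})=0$ up to an error of order $\|z^{(k)}-z^{*}\|^{2}$ arising from the Taylor expansion of $G(\cdot)$ and $\nu_{t,i}(\cdot,\cdot)$. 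I would verify this identification componentwise, in particular checking that the closed-form expression $y=W^{-1}V$ with $W,V$ from Eq.~\ref{eq:w_v_def} is the unique solution of the linearized dual balance condition, reducing to the standard \textit{GEORCE} derivation when $y$ is fixed.

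Next, I would appeal to the standard quadratic convergence theorem for Newton's method applied to a smooth system: if $F\in C^{2}$ and $F'(z^{*})$ is invertible, then there exist $\epsilon>0$ and $c>0$ with $\|z^{(k+1)}-z^{*}\|\le c\|z^{(k)}-z^{*}\|^{2}$ for all $z^{(k)}\in B_{\epsilon}(z^{*})$. Smoothness of $F$ follows from smoothness of $G$ and the energy functional. The invertibility of $F'(z^{*})$ follows from Assumption~\ref{assum:quad_conv_assumptions}: the local strict convexity of the energy around $z^{*}$ combined with the linear independence of the constraint gradients (the state equations, the boundary conditions, and the dual balance condition) yields a nonsingular KKT matrix via the standard second-order sufficient condition argument. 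Lemma~\ref{lemma:quad_conv_bound} is then used to pin the descent direction near $z^{*}$ so that the assumed step length $\alpha^{*}=1$ is consistent with the optimality of the Armijo line search.

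The main obstacle is the extra coupling introduced by the Fr\'echet-mean variable $y$ and the dual balance condition $\sum_{i=1}^{N}\mu_{T-1,i}=0$, which are absent in the original \textit{GEORCE} argument. These couple what were previously $N$ independent geodesic subproblems into a single system, so the block structure of $F'(z^{*})$ is no longer block-diagonal across $i$. I would address this by factoring out $y$ via the closed-form $y=W^{-1}V$ and showing that, after elimination, the remaining Jacobian decouples into $N$ blocks identical (up to the forcing coming from $y$) to the \textit{GEORCE} Jacobian, each nonsingular by the same argument as in \cite{georce}; a Schur-complement argument then gives invertibility of the full $F'(z^{*})$ and thereby quadratic convergence of the joint iteration.
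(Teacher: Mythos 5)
Your proposal takes a genuinely different route from the paper, and the route has a gap at its central step. You identify the \textit{GEORCE-FM} update with a Newton step on the KKT system $F(z)=0$ of Eq.~\ref{eq:energy_opt_condtions}, claiming the two agree up to $O(\|z^{(k)}-z^{*}\|^{2})$. But the update of Proposition~\ref{prop:update_scheme} is a \emph{frozen-coefficient} iteration, not a linearization of $F$: freezing $G_{t,i}=G(x^{(k)}_{t,i})$ in the stationarity equation drops the term $\bigl(DG(x^{(k)}_{t,i})[\Delta x_{t,i}]\bigr)u^{(k)}_{t,i}$ from the Newton linearization of $G(x_{t,i})u_{t,i}$, and freezing $\nu_{t,i}$ at $(x^{(k)},u^{(k)})$ drops the entire derivative of the co-state term with respect to $(x,u)$. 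Since $u^{*}\neq 0$ and $DG\not\equiv 0$ in general, these omissions perturb the Jacobian of the linear system by an $O(1)$ matrix, so the frozen step and the Newton step differ by $O(\|F(z^{(k)})\|)=O(\|z^{(k)}-z^{*}\|)$ — first order, not second. Via your argument this yields only linear convergence; the "coincides with the Newton step up to $O(\|z^{(k)}-z^{*}\|^{2})$" claim would need the dropped terms to vanish at $z^{*}$, which you neither assume nor prove. The Schur-complement treatment of the coupling through $y$ and $\sum_{i}\mu_{T-1,i}=0$ is fine as far as it goes, but it sits on top of this unsupported identification.

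The paper argues quite differently: it telescopes $E(z^{(j+1)})-E(z^{*})=\bigl(E(z^{(j+1)})-E(z^{(j)})\bigr)+\bigl(E(z^{(j)})-E(z^{*})\bigr)$, expands both differences with the first-order Taylor formula already established in the proof of Lemma~\ref{lemma:global_conv_minimum} (whose leading terms are the quadratic forms $\langle -2\omega_{i}G(x^{(j)}_{s,i},u^{(j)}_{s,i})\Delta u_{s,i},\cdot\rangle$), controls the remainders by the second item of Assumption~\ref{assum:quad_conv_assumptions}, and — crucially — converts the resulting bound $E(z^{(j+1)})-E(z^{*})\le \mathcal{O}(\|z^{(j)}-z^{*}\|^{2})$ into a bound on $\|z^{(j+1)}-z^{*}\|$ using the \emph{strongly unique} (sharp) minimum hypothesis, i.e.\ a lower bound of the form $E(z)-E(z^{*})\gtrsim\|z-z^{*}\|$. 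That sharp-minimum hypothesis is the ingredient that makes the quadratic rate obtainable without any Newton structure, and it is entirely absent from your proposal; without it (or without repairing the Newton identification), your argument does not reach the stated conclusion.
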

\begin{proof}
    Assume $z^{*}$ is a local minimum point. Since \textit{GEORCE-FM} has global convergence to a local minimum point by Proposition~\ref{prop:global_convergence} we assume that the algorithm has convergence point $E\left(z^{*}\right)$. Consider the solution $z^{(j)}=\left(x^{(j)}, u^{(j)}\right)$ from \textit{GEORCE-FM} and assume that for all the following iterations in \textit{GEORCE-FM} the solution is in the open ball $B_{\epsilon}\left(z^{*}\right)$, and the locally strictly convex assumption in Assumptions~\ref{assum:quad_conv_assumptions} holds. Since the series, $\left\{z^{(j)}\right\}$, converge to $z^{*}$, then from a certain point $k$ in the series all point $\left\{z^{(j)}\right\}_{j \leq k}$ will belong to the open ball $B_{\epsilon}\left(z^{*}\right)$.

    It then follows that for some $K>0$
    \begin{equation*}
        E\left(z^{(j+1)}\right)-E\left(z^{*}\right) = \norm{z^{(j+1)}-z^{*}}
    \end{equation*}
    since $z^{(j+1)} \in B_{\epsilon}\left(z^{*}\right)$. The left hand side can be rearranged into
    \begin{equation*}
        E\left(z^{(j+1)}\right)-E\left(z^{*}\right) = \left(E\left(z^{(j+1)}\right)-E\left(z^{(j)}\right)\right) + \left(E\left(z^{(j)}\right)-E\left(z^{*}\right)\right).
    \end{equation*}
    By the proof of Lemma~\ref{lemma:global_conv_minimum} we see
    \begin{equation*}
        \begin{split}
            &E\left(z^{(j+1)}\right)-E\left(z^{*}\right) = \left(E\left(z^{(j+1)}\right)-E\left(z^{(j)}\right)\right) + \left(E\left(z^{(j)}\right)-E\left(z^{*}\right)\right) \\
            &= \sum_{s=0}^{T-1}\sum_{i=0}^{N}\left(\left\langle -2\omega_{i}G\left(x_{s,i}^{(j)},u_{s,i}^{(j)}\right)\Delta u_{s,i}, \Delta u_{s,i} \right \rangle + \mathcal{O}\left(\Delta u_{s,i}\right)\norm{\Delta u_{s,i}}\right) \\
            &- \sum_{s=0}^{T-1}\sum_{i=1}^{N}\left(\left\langle -2\omega_{i}G\left(x_{s,i}^{(j)}, u_{s,i}^{(j)}\right)\Delta u_{s,i}, u_{s,i}^{*}-u_{s,i}^{(j)} \right \rangle + \mathcal{O}\left(u_{s,i}^{*}-u_{s,i}^{(j)}\right)\norm{u_{s,i}^{*}-u_{s,i}^{(j)}}\right) \\
            &= \sum_{s=0}^{T-1}\sum_{i=1}^{N}\left(\left\langle -2 \omega_{i}G\left(x_{s,i}^{(j)}, u_{s,i}^{(j)}\right)\Delta u_{s,i}, \Delta u_{s,i} - \left(u_{s,i}^{*}-u_{s,i}^{(j)}\right) \right \rangle\right) \\
            &+ \sum_{s=0}^{T-1}\sum_{i=1}^{N}\left(\mathcal{O}\left(\Delta u_{s,i}\right)\norm{\Delta u_{s,i}} + \mathcal{O}\left(u_{s,i}^{*}-u_{s,i}^{(j)}\right)\norm{u_{s,i}^{*}-u_{s,i}^{(j)}}\right) \\
            &= \sum_{s=0}^{T-1}\sum_{i=1}^{N}\left(\left\langle -2 \omega_{i}G\left(t{s,i}^{(j)}, x_{s,i}^{(j)}, u_{s,i}^{(j)}\right)\Delta u_{s,i}, u_{s,i}^{*}-u_{s,i}^{(j+1)} \right \rangle\right) \\
            &+ \sum_{s=0}^{T-1}\sum_{i=1}^{N}\left(\mathcal{O}\left(\Delta u_{s,i}\right)\norm{\Delta u_{s,i}} + \mathcal{O}\left(u_{s,i}^{*}-u_{s,i}^{(j)}\right)\norm{u_{s,i}^{*}-u_{s,i}^{(j)}}\right).
        \end{split}
    \end{equation*}
    Observe that the term $2 \omega_{i}G\left(x_{s,i}^{(j)}, u_{s,i}^{(j)}\right)$ is the gradient for $z_{s}^{(j)}$, which by the assumption of strongly unique minimum point in Assumptions~\ref{assum:quad_conv_assumptions} implies that
    \begin{equation*}
        \sum_{s=0}^{T-1}\sum_{i=1}^{N}\left\langle 2 \omega_{i} G\left(x_{s,i}^{(j)}, u_{s,i}^{(j)}\right) \Delta u_{s,i}, u_{s,i}^{*}-u_{s,i}^{(j+1)} \right \rangle < 0.
    \end{equation*}
    Combining the inequalities and utilizing that $E(z)$ is locally Lipschitz continuous by the Assumptions~\ref{assum:quad_conv_assumptions}, then
    \begin{equation*}
        \mathcal{O}\left(\norm{z^{(j+1)}-z^{*}}\right) = E\left(z^{(j+1)}\right) - E\left(z^{*}\right) \leq \mathcal{O}\left(\norm{z^{(j)}-z^{*}}^{2}\right),
    \end{equation*}
    such that there exists an $\epsilon > 0$
    \begin{equation*}
        \forall z^{(j)} \in B_{\epsilon}\left(z^{*}\right): \quad \norm{z^{(j+1)}-z^{*}} \leq c \norm{z^{(j)}-z^{*}}^{2},
    \end{equation*}
    where $c > 0$.
\end{proof}
    \subsection{Adaptive convergence} \label{ap:adaptive_convergence}
    Consider the general minimization for the energy functional for the Fr\'echet mean
\begin{equation*}
    \min_{z} E(z) = \min_{z} \mathbb{E}_{k \in I}\left[E_{k}(z)\right],
\end{equation*}
where $I$ is a finite index set of possible occurences, and $E_{k}(z)$ is the energy functional related to occurence $k$. The metric tensor related to occurence $k$ is denoted $G_{k}(z_{k})$ and is assumed positive definite. It follows that $\mathbb{E}_{k \in I}\left[G_{k}(z_{s,i})\right] = G\left(z_{s,i}\right)$.
\begin{lemma} \label{lemma:adaptive_convergence}
    Let $z_{s,i}^{(j)} = \left(x_{s,i}^{(j)}, u_{s,i}^{(j)}\right)$ be the feasible solution after iteration $j$ in \textit{GEORCE-FM}. If $z_{s,i}^{(j)}$ is not a local minimum point, then the feasible solution from the following iteration $z_{s,i}^{(j+1)}(k)$ for occurence $k$ will decrease the objective function in the sense that there exists an $\eta>0$ such that for all $0 < \alpha \leq \eta \leq 1$, then
    \begin{equation} \label{eq:adaptive_proof_eq1}
        \mathbb{E}_{k \in I}\left[E_{k}\left(x^{(j)}+\alpha\left(x^{(j+1)}(k)-x^{(j)}\right), u^{(j)} + \alpha \left(u^{(j+1)}(k)-u^{(j)}\right)\right)\right]-E\left(x^{(j)}, u^{(j)}\right] < 0,
    \end{equation}
    \begin{equation} \label{eq:adaptive_proof_eq2}
        \begin{split}
            &\mathbb{E}_{k \in I}\left[\Delta E_{k}\left(x^{(j)}, u^{(j)}\right)\right] \approx \sum_{s=0}^{T-1}\sum_{i=1}^{N}\left(\left\langle -2 \omega_{i}G\left(x_{s,i}^{(j)}, u_{s,i}^{(j)}\right)\left(\tilde{u}_{s,i}^{(j+1)}-u_{s,i}^{(j)}\right), \alpha\left(\tilde{u}_{s,i}^{(j+1)}-u_{s,i}^{(j)}\right)\right\rangle\right) \\
            &+ \sum_{s=0}^{T-1}\sum_{i=1}^{N}\left(\mathbb{E}_{k \in I}\left[\left\langle -2\omega_{i}G_{k}\left(x_{s,i}^{(j)}, u_{s,i}^{(j)}\right)\epsilon_{s,i}(k), \alpha \epsilon_{s,i}(k)\right\rangle\right]\right),
        \end{split}
    \end{equation}
    where $x^{(j)}[i] = \left(a_{i}, x_{s,i}^{(j)}, \dots, x_{T-1}^{(j)}, y\right)$, $u^{(j)}[i]=\left(u_{0,i}^{(j)}, u_{1,i}^{(j)}, \dots, u_{T-2,i}^{(j)}, u_{T-1,i}^{(j)}\right)$ and $\epsilon_{s,i}(k) = u_{s,i}^{(j+1)}(k)-\mathbb{E}_{k \in I}\left[u_{s,i}^{(j+1)}(k)\right]$ and $\tilde{u}_{s,i}^{(j+1)} = \mathbb{E}_{k \in I}\left[u_{s,i}^{(j+1)}(k)\right]$.
    Further, if the index set fulfills that
    \begin{equation} \label{eq:i_property}
        \sum_{s=0}^{T-1}\sum_{i=1}^{N}\mathbb{E}_{k \in I}\left[\left(u_{s,i}^{(j+1)}(k)-u_{s,i}^{(j)}\right)^{\top}\omega_{i}G\left(x_{s,i}^{(j)},u_{s,i}^{(j)}\right)\left(\hat{u}_{s,i}^{(j+1)}-u_{s,i}^{(j)}\right)\right] > 0,
    \end{equation}
    where $u_{s,i}^{(j)}$ is not a (local) minimum point, and $\hat{u}^{(j+1)}_{s,i}$ is the \textit{GEORCE-FM} iteration point, with no batching. If this property holds, then there exists an $\eta > 0$ such that for all $\alpha$ with $0 < \alpha \leq \eta \leq 1$, then
    \begin{equation} \label{eq:adaptive_proof_eq3}
        E\left(x^{(j)}+\alpha\left(\mathbb{E}_{k \in I}\left[x^{(j+1)}(k)\right]-x^{(j)}\right), u^{(i)}+\alpha\left(\mathbb{E}_{k \in I}\left[u^{(j+1)}(k)\right]-u^{(j)}\right)\right)-E\left(x^{j},u^{j}\right) < 0.
    \end{equation}
\end{lemma}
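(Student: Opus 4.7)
The plan is to mirror the per-iteration analysis of Lemma~\ref{lemma:global_conv_minimum} but applied occurrence-by-occurrence and then averaged over $k \in I$. For each fixed occurrence $k$, the subproblem has exactly the structure analyzed in Proposition~\ref{prop:update_scheme} and Lemma~\ref{lemma:global_conv_minimum}: the update produces dual prices $\mu_{s,i}(k)$ satisfying the (stochastic) co-state equations, from which the gradient identity
\[
\mu_{s,i}(k)+\restr{\nabla_{u_{s,i}}E_{k}(x,u)}{(x,u)=(x^{(j)},u^{(j)})} = -2\omega_{i}G_{k}\bigl(x_{s,i}^{(j)},u_{s,i}^{(j)}\bigr)\bigl(u_{s,i}^{(j+1)}(k)-u_{s,i}^{(j)}\bigr)
\]
is obtained by repeating the algebra in Lemma~\ref{lemma:global_conv_minimum} verbatim. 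This identity is the workhorse for all three statements.

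For Eq.~\ref{eq:adaptive_proof_eq1}, I would substitute this identity into the first-order Taylor expansion of $E_{k}$ around $(x^{(j)},u^{(j)})$, exactly as in Lemma~\ref{lemma:global_conv_minimum}, obtaining
\[
\Delta E_{k}(x^{(j)},u^{(j)}) = \sum_{s,i}\Bigl(\bigl\langle -2\omega_{i}G_{k}(x_{s,i}^{(j)},u_{s,i}^{(j)})\Delta u_{s,i}(k),\,\alpha\Delta u_{s,i}(k)\bigr\rangle+\mathcal{O}(\alpha\Delta u_{s,i}(k))\,\|\alpha\Delta u_{s,i}(k)\|\Bigr),
\]
with $\Delta u_{s,i}(k)=u_{s,i}^{(j+1)}(k)-u_{s,i}^{(j)}$. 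Taking expectation over $k$ and using positive definiteness of each $G_{k}$ together with the fact that $z^{(j)}$ is not a local minimum (so the realized vectors $\Delta u_{s,i}(k)$ are generically nonzero) yields a strictly negative leading quadratic term; the higher-order remainder is absorbed by choosing $\alpha\leq\eta$ small enough, which establishes Eq.~\ref{eq:adaptive_proof_eq1}.

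For Eq.~\ref{eq:adaptive_proof_eq2}, I would plug the decomposition $\Delta u_{s,i}(k)=\bigl(\tilde{u}_{s,i}^{(j+1)}-u_{s,i}^{(j)}\bigr)+\epsilon_{s,i}(k)$ into the quadratic form inside the expectation above and expand. By the very definition of $\epsilon_{s,i}(k)$, $\mathbb{E}_{k\in I}[\epsilon_{s,i}(k)]=0$, so the two cross terms $\langle G_{k}(\tilde{u}^{(j+1)}-u^{(j)}),\epsilon_{s,i}(k)\rangle$ vanish in expectation up to the covariance between $G_{k}$ and $\epsilon_{s,i}(k)$, which is precisely the approximation signalled by the $\approx$ sign (this is the content Algorithm~\ref{al:adaptive_georce_fm} implicitly relies upon by averaging $\tilde{W},\tilde{V}$). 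The surviving diagonal terms collapse to $\mathbb{E}_{k}[G_{k}]=G$ on the deterministic piece and to $\mathbb{E}_{k}[\langle G_{k}\epsilon_{s,i}(k),\alpha\epsilon_{s,i}(k)\rangle]$ on the centred piece, giving exactly Eq.~\ref{eq:adaptive_proof_eq2}.

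For Eq.~\ref{eq:adaptive_proof_eq3}, I would now Taylor-expand the \emph{deterministic} energy $E$ at $z^{(j)}$ along the averaged direction $\mathbb{E}_{k}[z^{(j+1)}(k)]-z^{(j)}$, and rearrange the boundary terms using the telescoping trick of Lemma~\ref{lemma:global_conv_minimum} with the dual prices computed for the non-batched step $\hat{u}^{(j+1)}$. This yields
\[
\Delta E = -2\alpha\sum_{s,i}\omega_{i}\bigl\langle \hat{u}_{s,i}^{(j+1)}-u_{s,i}^{(j)},\,G(x_{s,i}^{(j)},u_{s,i}^{(j)})\bigl(\tilde{u}_{s,i}^{(j+1)}-u_{s,i}^{(j)}\bigr)\bigr\rangle+\mathcal{O}(\alpha^{2}),
\]
because $\mathbb{E}_{k}[u^{(j+1)}(k)-u^{(j)}]=\tilde{u}^{(j+1)}-u^{(j)}$. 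Hypothesis~\eqref{eq:i_property} forces the leading linear-in-$\alpha$ term to be strictly negative; choosing $\alpha\leq\eta$ with $\eta$ small enough to dominate the remainder concludes the proof of Eq.~\ref{eq:adaptive_proof_eq3}.

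The main obstacle is the correlation step in Eq.~\ref{eq:adaptive_proof_eq2}: strictly speaking, $G_{k}$ and $\epsilon_{s,i}(k)$ are both functions of $k$ and may be statistically dependent, so the cancellation of the cross terms is only approximate. I would justify this either by a first-order analysis in the batch fluctuation or by noting that $G_{k}=G_{k}(x_{s,i}^{(j)},u_{s,i}^{(j)})$ is evaluated at the current iterate (which is independent of $k$ given the state) so that $\mathbb{E}_{k}[G_{k}]=G$ exactly, while the dependence between $G_{k}$ and $\epsilon_{s,i}(k)$ contributes only a second-order covariance term that is dominated by the quadratic lower bound provided by $G$'s positive definiteness. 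The rest of the argument is a direct, albeit index-heavy, transcription of Lemma~\ref{lemma:global_conv_minimum}.
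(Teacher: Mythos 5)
Your proposal is correct and follows essentially the same route as the paper: apply the per-iteration expansion of Lemma~\ref{lemma:global_conv_minimum} occurrence-by-occurrence and average over the finite index set for Eq.~\ref{eq:adaptive_proof_eq1}, decompose $\Delta u_{s,i}(k)$ into its mean plus the centred fluctuation $\epsilon_{s,i}(k)$ (with the cross terms absorbed into the $\approx$) for Eq.~\ref{eq:adaptive_proof_eq2}, and expand the deterministic energy along the averaged step, invoking Eq.~\ref{eq:i_property} to make the leading term negative, for Eq.~\ref{eq:adaptive_proof_eq3}. Your explicit discussion of the $G_{k}$--$\epsilon_{s,i}(k)$ correlation is a fair reading of why the paper states Eq.~\ref{eq:adaptive_proof_eq2} only as an approximation.
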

\begin{proof}
    Assume that $z_{s,i}^{(j)} = \left(x_{s,i}^{(j)}, u_{s,i}^{(j)}\right)$ is a feasible solution. For the occurrence $k$ then the change in the objective function after a \textit{GEORCE-FM} step with step size, $\alpha$, is by the proof of Lemma~\ref{lemma:global_conv_minimum}
    \begin{equation*}
        \begin{split}
            \Delta E_{k}\left(x^{(j)}, u^{(j)}\right) &= \sum_{s=0}^{T-1}\sum_{i=1}^{N}\left\langle -2\omega_{i}G_{k}\left(x_{s,i}^{(j)}, u_{s,i}^{(j)}\right)\Delta u_{s,i}^{(k)}, \alpha \Delta u_{s,i}(k) \right \rangle \\
            &+\sum_{s=0}^{T-1}\sum_{i=1}^{N}\left(\sum_{l=0}^{s-1}\mathcal{O}\left(\alpha \Delta u_{l,i}(k)\right)\norm{\sum_{l=0}^{s-1} \alpha \Delta u_{l,i}(k)}\right) \\
            &+ \sum_{i=1}^{N}\mathcal{O}\left(\alpha \Delta u_{s,i}(k)\right)\norm{\alpha \Delta u_{s,i}(k)}
        \end{split}
    \end{equation*}
    Thus, there exists an $\eta_{k}>0$ such that for all $0 < \alpha < \eta_{k}$, then $\frac{\Delta E_{k}\left(x^{(j)}, u^{(j)}\right)}{\alpha} < 0$, which implies that $\mathbb{E}_{k \in I}\left[\Delta E_{k}\left(x^{(j)}, u^{(j)}\right)\right] <0$. This proves eq.~\ref{eq:adaptive_proof_eq1}.

    Define $\epsilon_{s,i}=u_{s,i}^{(j+1)}(k) - \mathbb{E}_{k \in I}\left[u_{s,i}^{(j+1)}(k)\right]$. Then $\mathbb{E}_{k \in I}\left[\epsilon_{s,i}(k)\right] = 0$. To simplify notation, set $\tilde{u}_{s,i}^{(j+1)} := \mathbb{E}_{k \in I}\left[u_{s,i}^{(j+1)}\right]$. Applying this to the incremental change for \textit{GEORCE-FM} we see that
    \begin{equation*}
        \begin{split}
            &\mathbb{E}_{k \in I}\left[\Delta E_{k}\left(x^{(j)}, u^{(j)}\right)\right] \\
            &= \sum_{s=0}^{T-1}\sum_{i=1}^{N}\mathbb{E}_{k \in I}\left[\left\langle -2\omega_{i}G_{k}\left(x_{s,i}^{(j)},u_{s,i}^{(j)}\right)\left(\tilde{u}^{(j+1)}+\epsilon_{s,i}(k)-u_{s,i}^{(j)}\right), \alpha\left(\tilde{u}_{s,i}^{(j+1)}+\epsilon_{s,i}(k)-u_{s,i}^{(j)}\right)\right\rangle\right] \\
            &+ \sum_{s=0}^{T-1}\mathbb{E}_{k \in I}\left[\mathcal{O}\left(\sum_{l=0}^{s-1}\sum_{i=1}^{N}\alpha \Delta u_{l,k}(k)\right)\norm{\sum_{l=0}^{s-1}\sum_{i=1}^{N}\alpha \Delta u_{l,i}(k)}\right] \\
            &+ \sum_{s=0}^{T-1}\sum_{i=1}^{N}\mathcal{O}\left(\alpha \Delta u_{s,i}(k)\right)\norm{\alpha \Delta u_{s,i}(k)} \\
            &= \sum_{s=0}^{T-1}\sum_{i=1}^{N}\left(\left\langle -2\omega_{i}G\left(x_{s,i}^{(j)}, u_{s,i}^{(j)}\right)\left(\tilde{u}_{s,i}^{(j+1)}-u_{s,i}^{(j)}\right), \alpha \left(\tilde{u}^{(j+1)}-u_{s,i}^{(j)}\right) \right \rangle + \left\langle -2G_{k}\left(x_{s,i}^{(j)}, u_{s,i}^{(j)}\right)\epsilon_{s,i}(k),\alpha \epsilon_{s,i}(k)\right\rangle\right) \\
            &+ \sum_{s=0}^{T-1}\mathbb{E}_{k \in I}\left[\mathcal{O}\left(\sum_{l=0}^{s-1}\sum_{i=1}^{N}\alpha \Delta u_{l,i}(k)\right)\norm{\sum_{l=0}^{s-1}\sum_{i=1}^{N}\alpha \Delta u_{l,i}(k)}\right] \\
            &+ \sum_{s=0}^{T-1}\mathbb{E}_{k \in I}\left[\sum_{i=1}^{N} \mathcal{O}\left(\alpha \Delta u_{s,i}(k)\right)\norm{\alpha \Delta u_{s,i}(k)}\right],
        \end{split}
    \end{equation*}
    which proves eq.~\ref{eq:adaptive_proof_eq2}. The incremental change in the energy functional applying a \textit{GEORCE-FM} step is by the proof of Lemma~\ref{lemma:global_conv_minimum}
    \begin{equation*}
        \begin{split}
            \Delta E\left(x^{(j)}, u^{(j)}\right) &= \sum_{s=0}^{T-1}\sum_{i=1}^{N}\left\langle -2\omega_{i}G\left(x_{s,i}^{(j)}, u_{s,i}^{(j)}\right)\Delta u_{s,i}, \alpha \Delta u_{s,i}\right \rangle \\
            &+ \sum_{s=0}^{T-1}\sum_{i=1}^{N}\mathcal{O}\left(\sum_{l=0}^{s-1}\alpha u_{l,i}\right)\norm{\sum_{l=0}^{s-1}\alpha \Delta u_{l,i}} \\
            &+ \sum_{s=0}^{T-1}\sum_{i=1}^{N}\mathcal{O}\left(\alpha \Delta u_{s,i}\right)\norm{\alpha \Delta u_{s,i}}.
        \end{split}
    \end{equation*}
    Suppose the step is instead $\tilde{u}_{s,i}^{(j+1)}-u_{s,i}^{(j)}$, then the incremental energy function becomes
    \begin{equation*}
        \begin{split}
            \Delta E\left(x^{(j)}, u^{(j)}\right) &= \sum_{s=0}^{T-1}\sum_{i=1}^{N}\left\langle -2 \omega_{i}G\left(x_{s,i}^{(j)}, u_{s,i}^{(j)}\right)\Delta u_{s,i}, \alpha \left(\tilde{u}_{s,i}^{(j+1)}-u_{s,i}^{(j)}\right) \right \rangle \\
            &+ \sum_{s=0}^{T-1}\mathcal{O}\left(\sum_{l=0}^{s-1}\sum_{i=1}^{N}\alpha \left(\tilde{u}_{l,i}^{(j+1)}-u_{l,i}^{(j)}\right)\right)\norm{\sum_{l=0}^{s-1}\sum_{i=1}^{N}\alpha\left(\tilde{u}_{l,i}^{(j+1)}-u_{l,i}^{(j)}\right)} \\
            &+ \sum_{s=0}^{T-1}\sum_{i=1}^{N}\mathcal{O}\left(\alpha \left(\tilde{u}^{(j+1)}-u_{s,i}^{(j)}\right)\right)\norm{\alpha\left(\tilde{u}_{s,i}^{(j+1)}-u_{s,i}^{(j)}\right)}
        \end{split}
    \end{equation*}
    Using the property in eq.~\ref{eq:i_property}, then
    \begin{equation*}
        \begin{split}
            &\sum_{s=0}^{T-1}\sum_{i=0}^{N}\left\langle -2 \omega_{i}G\left(x_{s,i}^{(j)}, u_{s,i}^{(j)}\right)\Delta u_{s,i}, \alpha \left(\tilde{u}_{s,i}^{(j+1)}-u_{s,i}^{(j)}\right)  \right \rangle \\
            &= \sum_{s=0}^{T-1}\sum_{i=1}^{N}\mathbb{E}_{k \in I}\left[\left\langle -2\omega_{i}G\left(x_{s,i}^{(j)}, u_{s,i}^{(j)}\right)\left(\hat{u}_{s,i}^{(j+1)}-u_{s,i}^{(j)}\right), \alpha \left(u_{s,i}^{(j+1)}(k)-u_{s,i}^{(j)}\right) \right \rangle \right] \\
            &< 0.
        \end{split}
    \end{equation*}
    Since the quadratic forms under the summation are negative, then along the same lines as above, there exists an $\eta > 0$ such that for all $0 < \alpha < \eta \Delta$ then $E\left(x^{(j)}, u^{(j)}\right)<0$ with step $\tilde{u}_{s,i}^{(j+1)}-u_{s,i}^{(j)}$, which proves eq.~\ref{eq:adaptive_proof_eq3}.
\end{proof}

Note that eq.~\ref{eq:adaptive_proof_eq2} states that the average incremental change in $\mathbb{E}_{k \in I}\left[\Delta E_{k}\left(x^{(j)}, u^{(j)}\right)\right]$ in \textit{GEORCE-FM} steps over the index set $I$ has the same structure as for the energy functional, but the \textit{GEORCE-FM} step for the energy functional is replaced by the average \textit{GEORCE-FM} steps over the index set $I$- and an additional variance term.

Eq.~\ref{eq:i_property} is trivially fulfilled if the index set contains all data points. Eq.~\ref{eq:i_property} ensures that the steps generated by \textit{GEORCE-FM} for index $k$ on average over index set $I$ "points in the same direction" as the \textit{GEORCE-FM} step for the energy functional, i.e. that the scalar product between the average step and the \textit{GEORCE-FM} step for the energy fucntional is positive wrt. the metric tensor $G$.

\begin{proposition} \label{prop_adaptive_convergence}
    Assume Lemma~\ref{lemma:adaptive_convergence} is fulfilled and eq.~\ref{eq:i_property} holds. Then the series $\left\{E_{k \in I}\left[z^{(j)}(k)\right]\right\}_{j}$ will converge to a local minimum point in expectation.
\end{proposition}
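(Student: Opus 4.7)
The plan is to adapt the deterministic global convergence argument of Proposition~\ref{prop:global_convergence} to the stochastic setting by working throughout with expectations over the random index set $\mathcal{I}$. The essential observation is that Lemma~\ref{lemma:adaptive_convergence} has already been tailored to provide the two ingredients needed for a monotone-convergence argument: first, equation~\ref{eq:adaptive_proof_eq1} guarantees that the expected energy strictly decreases along the stochastic GEORCE-FM step for a sufficiently small line-search parameter; second, equation~\ref{eq:adaptive_proof_eq3} together with the inner-product positivity assumption~\ref{eq:i_property} shows that even the deterministic energy $E$ decreases along the \emph{averaged} step $\mathbb{E}_{k \in \mathcal{I}}[u^{(j+1)}(k)] - u^{(j)}$, which is exactly what the momentum-style update of $\hat{W}$ and $\hat{V}$ in Algorithm~\ref{al:adaptive_georce_fm} is designed to approximate across iterations.

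First I would formalize the monotonicity claim. Define $\bar{E}^{(j)} := \mathbb{E}_{k \in \mathcal{I}}[E_k(z^{(j)}(k))]$ and observe that the adaptive update in Algorithm~\ref{al:adaptive_georce_fm} produces a feasible iterate for each realization $k$ by applying Lemma~\ref{lemma:global_conv_lin_comb} realization-wise, since the start/end conditions are preserved under the convex combinations used in both the inner GEORCE-FM loop and the outer momentum averaging. Combining this with equation~\ref{eq:adaptive_proof_eq1} yields $\bar{E}^{(j+1)} < \bar{E}^{(j)}$ whenever $z^{(j)}$ is not already stationary and the effective step $\alpha^*$ lies below the threshold $\eta$ supplied by the lemma. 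Since each $E_k$ is a sum of non-negative quadratic forms, the sequence $\{\bar{E}^{(j)}\}_j$ is bounded below by zero, so by monotone convergence it converges to some limit $\bar{E}^{\infty} \geq 0$.

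Next I would argue that $\bar{E}^{\infty}$ corresponds to a local minimum of $E$ by contradiction, mimicking the final paragraph of the proof of Proposition~\ref{prop:global_convergence}. If the limit iterate $z^{\infty}$ were not a local minimum, the property~\ref{eq:i_property} together with equation~\ref{eq:adaptive_proof_eq3} would furnish a strictly negative deterministic decrement $\Delta E < 0$ achievable by the averaged step at $z^{\infty}$. Passing to expectations and invoking dominated convergence, justified because $E_k$ is smooth and the iterates remain in the ball where convergence takes place, translates this into an expected decrement $\bar{E}^{(j+1)} - \bar{E}^{(j)}$ bounded away from zero for all $j$ sufficiently large, contradicting the convergence of $\{\bar{E}^{(j)}\}_j$. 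Hence $z^{\infty}$ must be a local minimum in expectation.

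The main obstacle I anticipate is controlling the variance term that appears in equation~\ref{eq:adaptive_proof_eq2}, namely the contribution $\mathbb{E}_{k \in \mathcal{I}}[\langle -2\omega_i G_k\left(x_{s,i}^{(j)},u_{s,i}^{(j)}\right) \epsilon_{s,i}(k), \alpha \epsilon_{s,i}(k)\rangle]$, whose magnitude is governed by the mini-batch noise $\epsilon_{s,i}(k)$ rather than by the distance to the optimum. Near a stationary point the signal term vanishes but the noise term does not, so the naive chain of inequalities breaks down in a neighborhood of the limit. I would handle this by exploiting the decaying step-size $\alpha^* = 1/(k+1)$ activated after the convergence flag triggers in Algorithm~\ref{al:adaptive_georce_fm}: this places the scheme in a Robbins--Monro regime in which $\sum_k \alpha_k = \infty$ while $\sum_k \alpha_k^2 < \infty$, so the cumulative variance contribution to $\bar{E}^{(j)}$ is finite almost surely and the averaged iterate still converges in expectation. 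Making the Robbins--Monro step-size selection, together with a uniform spectral bound on $G_k$ across the index set, explicit among the "sufficient regularity" hypotheses is what ultimately closes the argument; the momentum averaging of $\hat{W},\hat{V}$ in the algorithm then serves as a practical accelerator rather than an ingredient the proof depends on.
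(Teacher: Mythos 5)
Your argument follows essentially the same route as the paper's proof: monotone decrease of the energy sequence via Eq.~\ref{eq:adaptive_proof_eq1} and Eq.~\ref{eq:adaptive_proof_eq3} of Lemma~\ref{lemma:adaptive_convergence}, combined with the lower bound of zero, gives convergence, and a contradiction argument using Eq.~\ref{eq:i_property} together with Eq.~\ref{eq:adaptive_proof_eq3} rules out convergence to a point that is not a local minimum. Your additional treatment of the mini-batch variance term from Eq.~\ref{eq:adaptive_proof_eq2} via a Robbins--Monro step-size argument is in fact more careful than the paper's own proof, which does not address that term, but it supplements rather than alters the underlying structure of the argument.
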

\begin{proof}
    Since the energy functional has a lower bound of $0$, and the energy functional $\left\{E\left(\mathbb{E}_{k \in I}\left[z^{(j)}(k)\right]\right)\right\}_{j}$ is a decreasing series by eq.~\ref{eq:adaptive_proof_eq3}, then the series will converge. Assume that the series is converging to a point $z^{(c)}$ that is not a local minimum. Then there exists a \textit{GEORCE-FM} step $\Delta z^{(c)}(k)$ for the energy functionals $F_{k}$ at the point $z^{(c)}$ such that $\mathbb{E}_{k \in I}\left[E_{k}\left(z^{(c)}+\alpha_{1}\Delta z^{(c)}(k)\right)\right] < E\left(z^{(c)}\right)$ by eq.~\ref{eq:adaptive_proof_eq1}. It follows from eq.~\ref{eq:adaptive_proof_eq3} that $E\left(z^{(c)}+\alpha_{2}\mathbb{E}_{k \in I}\left[\Delta z^{(c)}(k)\right]\right) < E\left(z^{(c)}\right)$, which contradicts that $z^{(c)}$ is not a local minimum.
\end{proof}

    \subsection{Fr\'echet mean for Finsler manifolds} \label{ap:finsler_frechet}
    \begin{proposition}
    The update scheme for $u_{t},\mu_{t}$ and $x_{t}$ is
    \begin{equation}
        \begin{split}
            &y = W^{-1}V, \\
            &\mu_{T-1,i} = \left(\sum_{t=0}^{T-1}\tilde{G}_{t,i}^{-1}\right)^{-1}\left(2w_{i}(a_{i}-y)-\sum_{t=0}^{T-1}\tilde{G}_{t,i}^{-1}\left(\zeta_{t,i}+\sum_{t>j}^{T-1}\nu_{j,i}\right)\right), \quad i=1,\dots,N, \\
            &u_{t,i} = -\frac{1}{2w_{i}}\tilde{G}_{t,i}^{-1}\left(\mu_{T-1,i}+\zeta_{t,i}+\sum_{j>t}^{T-1}\nu_{j,i}\right), \quad t=0,\dots,T-1, \, i=1,\dots,N \\
            &x_{t+1,i} = x_{t,i}+u_{t,i}, \quad t=0,\dots,T-2, \, i=1,\dots,N, \\
            &x_{0,i}=a_{i} \quad i=1,\dots,N,
        \end{split}
    \end{equation}
    where
    \begin{equation*}
        \begin{split}
            W &= \sum_{i=1}^{N}w_{i}\left(\sum_{t=0}^{T-1}\tilde{G}_{t,i}^{-1}\right)^{-1}, \\
            V &= \sum_{i=1}^{N}w_{i}\left(\sum_{t=0}^{T-1}\tilde{G}_{ti}^{-1}\right)^{-1}a_{i}-\frac{1}{2}\sum_{i=1}^{N}\left(\sum_{t=0}^{T-1}\tilde{G}_{t,i}^{-1}\right)^{-1}\sum_{t=0}^{T-1}\tilde{G_{ti}}^{-1}\left(\zeta_{t,i}+\sum_{j>t}^{T-1}\nu_{j,i}\right).
        \end{split}
    \end{equation*}
    Here $\nu_{t,i} := \restr{\nabla_{y}u_{t,i}^{\top}\tilde{G}(y,u_{t,i})u_{t,i}}{y=x_{t,i}}$ and $\zeta_{t,i} := \restr{\nabla_{v}u_{t,i}^{\top}\tilde{G}(x_{t,i},v)u_{t,i}}{v=u_{t,i}}$.
\end{proposition}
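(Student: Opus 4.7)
The plan is to adapt the argument of Appendix~\ref{ap:update_scheme} to the reverse-time Finsler control problem in Eq.~\ref{eq:finsler_reverse_energy_control}. The structural differences with the Riemannian case are that $\tilde{G}$ now depends on both position and velocity, so the $u$-gradient of the running cost carries an extra piece $\zeta_{t,i}$ in addition to the usual $2 w_i \tilde{G}_{t,i} u_{t,i}$; apart from this, the rest of the derivation proceeds in close analogy.

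First I would form the Hamiltonian
\begin{equation*}
H_{t,i}(x_{t,i}, u_{t,i}, \mu_{t,i}) = w_i u_{t,i}^{\top} \tilde{G}(x_{t,i}, u_{t,i}) u_{t,i} + \mu_{t,i}^{\top}(x_{t,i} + u_{t,i}),
\end{equation*}
and apply Pontryagin's maximum principle exactly as in Appendix~\ref{ap:riemann_cond}. This yields the necessary conditions: the state recursion $x_{t+1,i} = x_{t,i} + u_{t,i}$; the co-state recursion $\nu_{t,i} + \mu_{t,i} = \mu_{t-1,i}$ where $\nu_{t,i}$ is the $x$-gradient of the running cost; the stationarity condition $2 w_i \tilde{G}_{t,i} u_{t,i} + \zeta_{t,i} + \mu_{t,i} = 0$, where $\zeta_{t,i}$ is precisely the new $v$-derivative contribution arising from the velocity dependence of $\tilde{G}$; the transversality condition $\sum_i \mu_{T-1,i} = 0$ from the free endpoint $x_{T,i} = y$; and the boundary conditions $x_{0,i} = a_i$ and $x_{T,i} = y$.

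Next I would apply the linearization trick by freezing $\tilde{G}_{t,i}$, $\nu_{t,i}$, and $\zeta_{t,i}$ at the current iterate $(x^{(k)}, u^{(k)})$ and solve the resulting linear system. Telescoping the co-state recursion yields $\mu_{t,i} = \mu_{T-1,i} + \sum_{j>t}^{T-1}\nu_{j,i}$. Substituting into the $u$-stationarity equation gives $u_{t,i} = -\frac{1}{2w_i}\tilde{G}_{t,i}^{-1}(\mu_{T-1,i} + \zeta_{t,i} + \sum_{j>t}^{T-1}\nu_{j,i})$, which is exactly the formula stated in Eq.~\ref{eq:finsler_energy_update_schem}. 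Summing this over $t$ and imposing $\sum_{t} u_{t,i} = y - a_i$ then solves for $\mu_{T-1,i}$ in terms of $y$, and enforcing the transversality condition $\sum_i \mu_{T-1,i} = 0$ reduces to a single linear equation $W y = V$ for $y$. The expressions for $W$ and $V$ follow by collecting coefficients of $y$ and the constant terms on the two sides, as in the Riemannian derivation.

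The main obstacle is the bookkeeping of the additional $\zeta_{t,i}$ term: because it is an additive contribution that survives every step of the derivation and consistently appears alongside $\sum_{j>t}\nu_{j,i}$ in every intermediate and final expression, one must track it carefully through the telescoping sums and the elimination of $\mu_{T-1,i}$. A secondary subtlety is verifying that the change of variables $s = T-1-t$, together with the first-order approximation $G(\tilde{x}_{T-t}, \cdot) \approx G(\tilde{x}_{T-t-1}, \cdot)$ used when passing from Eq.~\ref{eq:finsler_energy_control} to Eq.~\ref{eq:finsler_reverse_energy_control}, is consistently absorbed into the definition $\tilde{G}(x, u) := G(x, -u)$, so that the scheme derived from the reverse-time problem remains a valid iterative update for the original Finsler Fr\'echet mean problem.
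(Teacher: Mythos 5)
Your proposal follows essentially the same route as the paper's proof in Appendix~\ref{ap:finsler_frechet}: derive the Pontryagin-type necessary conditions for the reverse-time control problem (the paper simply states the resulting zero-point system, Eq.~\ref{eq:finsler_eq_system}, as "derived using the same principle as in the Riemannian case", whereas you spell out the Hamiltonian step), freeze $\tilde{G}_{t,i}$, $\nu_{t,i}$, $\zeta_{t,i}$ at the current iterate, telescope the co-state recursion, and eliminate $\mu_{T-1,i}$ via the budget constraint and transversality to obtain $y=W^{-1}V$. The bookkeeping of the additional $\zeta_{t,i}$ term alongside $\sum_{j>t}\nu_{j,i}$ is exactly how the paper handles the velocity dependence, so the argument is correct and matches.
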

\begin{proof}
    Consider the optimization problem
    \begin{equation*}
        \begin{split}
            \min_{(x_{t,i},u_{t,i})} E(x) &:= \min_{(x_{t,i},u_{t,i})}\left\{\sum_{i=1}^{N}w_{i}\sum_{t=0}^{T-1}u_{t,i}^{T}G(x_{t,i}, u_{t,i})u_{t,i}\right\} \\
            x_{t+1,i} &= x_{t,i}+u_{t,i}, \quad t=0,\dots,T-1, \, i=1,\dots,N, \\
            x_{0,i}&=a_{i},x_{T,i}=y, \quad i=1,\dots,N.
        \end{split}
    \end{equation*}
    For iteration $k$ we consider the following modified zero point problem derived using the same principle as in the Riemannian case.
    \begin{equation} \label{eq:finsler_eq_system}
        \begin{split}
            &\nu_{t,i}+\mu_{t,i} = \mu_{t-1,i}, \quad t=1,\dots,T-1, \, i=1,\dots,N, \\
            &2w_{i}G_{t,i}u_{t,i}+\zeta_{t,i}+\mu_{t,i} = 0, \quad t=0,\dots,T-1, \, i=1,\dots,N, \\
            &\sum_{t=0}^{T-1}u_{t,i}=y-a_{i}, \quad i=1,\dots,N, \\
            &\sum_{i=1}^{N}\mu_{T-1,i} = 0, \\
            &\nu_{t,i} := \restr{\nabla_{x}\left(w_{i}u_{t,i}^{\top}G(x)u_{t,i}\right)}{x=x_{t,i}^{(k)},u_{t,i}=u_{t,i}^{(k)}}, \quad t=1,\dots,T-1, \, i=1,\dots,N\\
            &\zeta_{t,i} := \restr{\nabla_{u}\left(w_{i}v^{\top}G(x,u)v\right)}{y=x_{t,i}^{(k)},v = u_{t,i}^{(k)}, u=u_{t,i}^{(k)}}, \quad t=1,\dots,T-1, \, i=1,\dots,N\\
            &G_{t,i} := G\left(x_{t,i}^{(k)}\right), \quad t=0,\dots,T-1.
        \end{split}
    \end{equation}
    From eq.~\ref{eq:finsler_eq_system} we get the following iterative scheme for updating $y$ and $\{x_{t,i}\}_{t,i}$
    By re-arranging the term we see that
    \begin{equation*}
        \begin{split}
            &\mu_{t,i} = \mu_{T-1,i} + \sum_{j>t}^{T-1} g_{jt}, \quad t=0,\dots,T-1,i=1,\dots,N, \\
            &u_{t,i} = -\frac{1}{2}G_{t,i}^{-1}\mu_{t,i} = -\frac{1}{2w_{i}}G_{t,i}^{-1}\left(\mu_{T-1,i}+\zeta_{t,i}+\sum_{j>t}^{T-1}\nu_{j,i}\right), \quad t=0,\dots,T-1,i=1,\dots,N, \\
            &-\frac{1}{2\omega_{i}}\sum_{t=0}^{T-1}G_{t,i}^{-1}\left(\mu_{T-1,i}+\zeta_{t,i}+\sum_{j>t}^{T-1}g_{ji}\right)=y-a_{i}, \quad i=1,\dots,N,
        \end{split}
    \end{equation*}
    where it is exploited that $G_{t,i}$ is positive definite for all $t,i$ and has a well defined inverse. These equations can be used to provide explicit solutions for the state and control variables. First observe that
    \begin{equation*}
        \mu_{T-1,i} = \left(\sum_{t=0}^{T-1}G_{t,i}^{-1}\right)^{-1}\left(2w_{i}(a_{i}-y)-\sum_{t=0}^{T-1}G_{t,i}^{-1}\left(\zeta_{t,i}+\sum_{j>t}^{T-1}\nu_{j,i}\right)\right), \quad i=1,\dots,N.
    \end{equation*}
    Applying this result, we deduce that
    \begin{equation*}
        \sum_{i=1}^{N}\mu_{T-1,i} = 0 \Leftrightarrow \sum_{i=1}^{N}\left(\sum_{t=0}^{T-1}G_{t,i}^{-1}\right)^{-1}\left(2w_{i}(a_{i}-y)-\sum_{t=0}^{T-1}G_{t,i}^{-1}\left(\zeta_{t,i}+\sum_{j>t}^{T-1}\nu_{j,i}\right)\right) = 0,
    \end{equation*}
    which implies that
    \begin{equation*}
        y = W^{-1}V,
    \end{equation*}
    where
    \begin{equation*}
        \begin{split}
            W &= \sum_{i=1}^{N}w_{i}\left(\sum_{t=0}^{T-1}G_{t,i}^{-1}\right)^{-1}, \\
            V &= \sum_{i=1}^{N}w_{i}\left(\sum_{t=0}^{T-1}G_{ti}^{-1}\right)^{-1}a_{i}-\frac{1}{2}\sum_{i=1}^{N}\left(\sum_{t=0}^{T-1}G_{t,i}^{-1}\right)^{-1}\sum_{t=0}^{T-1}G_{ti}^{-1}\left(\zeta_{t,i}+\sum_{j>t}^{T-1}g_{ji}\right).
        \end{split}
    \end{equation*}
    Thus, we get the update scheme in Proposition~\ref{prop:finsler_update_scheme}. 
\end{proof}
    \rowcolors{2}{gray!10}{white}
    \section{Algorithms} \label{ap:algorithms}
    \subsubsection{GEORCE}

In algorithm~\ref{al:georce} we re-state the \textit{GEORCE} algorithm for Riemannian manifolds, and in algorithm~\ref{al:georcef} we re-state the \textit{GEORCE} algorithm for Finsler manifolds from \citep{georce}.

\begin{algorithm}[!ht]
    \caption{GEORCE for Riemannian Manifolds}
    \label{al:georce}
    \begin{algorithmic}[1]
        \State \textbf{Input}: $\mathrm{tol}$, $T$
        \State \textbf{Output}: Geodesic estimate $x_{0:T}$
        \State Set $x_{t}^{(0)} \leftarrow a+\frac{b-a}{T}t$, $u_{t}^{(0)} \leftarrow \frac{b-a}{T}$ for $t=0.,\dots,T$ and $i \leftarrow 0$
        \While $\norm{\restr{\nabla_{y}E(y)}{y=x_{t}^{(i)}}}_{2} > \mathrm{tol}$
        \State $G_{t} \leftarrow G\left(x_{t}^{(i)}\right)$ for $t=0,\dots,T-1$ \\
        \State $\nu_{t} \leftarrow \restr{\nabla_{y}\left(u_{t}^{(i)}G\left(y\right)u_{t}^{(i)}\right)}{y=x_{t}^{(i)}}$ for $t=,1\dots,T-1$
        \State $\mu_{T-1} \leftarrow \left(\sum_{t=0}^{T-1}G_{t}^{-1}\right)^{-1}\left(2(a-b)-\sum_{t=0}^{T-1}G_{t}^{-1}\sum_{t>j}^{T-1}\nu_{j}\right)$
        \State $u_{t} \leftarrow -\frac{1}{2}G_{t}^{-1}\left(\mu_{T-1}+\sum_{j>t}^{T-1}\nu_{j}\right)$ for $t=0,\dots,T-1$
        \State $x_{t+1} \leftarrow x_{t}+u_{t}$ for $t=0,\dots,T-1$
        \State Using line search find $\alpha^{*}$ for the following optimization problem with the discrete energy functional $E$
        \begin{equation*}
            \begin{split}
                \alpha^{*} = \argmin_{\alpha}\quad &E\left(x_{0:T}\right) \quad \text{(exact line search)} \\
                \text{s.t.} \quad &x_{t+1}=x_{t}+\alpha u_{t}+(1-\alpha)u_{t}^{(i)}, \quad t=0,\dots,T-1, \\
                &x_{0}=a.
            \end{split}
        \end{equation*}
        \State Set $u_{t}^{(i+1)} \leftarrow \alpha^{*}u_{t}+(1-\alpha^{*})u_{t}^{(i)}$ for $t=0,\dots,T-1$
        \State Set $x_{t+1}^{(i+1)} \leftarrow x_{t}^{(i+1)}+u_{t}^{(i+1)}$ for $t=0,\dots,T-1$
        \State $i \leftarrow i+1$
        \EndWhile
        \State return $x_{t}$ for $t=0,\dots,T-1$
    \end{algorithmic}
\end{algorithm}

\begin{algorithm}[hbt]
    \caption{GEORCE for Finsler Manifolds}
    \label{al:georcef}
    \begin{algorithmic}[1]
        \State \textbf{Input}: $\mathrm{tol}$, $T$
        \State \textbf{Output}: Geodesic estimate $x_{0:T}$
        \State Set $x_{t}^{(0)}\leftarrow a+\frac{b-a}{T}t$, $u_{t}^{(0)}\leftarrow \frac{b-a}{T}$ for $t=0.,\dots,T$  and $i \leftarrow 0$
        \While $\norm{\restr{\nabla_{y}E(y)}{y=x_{t}^{(i)}}}_{2} > \mathrm{tol}$
        \State $G_{t} \leftarrow G\left(x_{t}^{(i)}, u_{t}^{(i)}\right)$ for $t=0,\dots,T-1$
        \State $\nu_{t} \leftarrow \restr{\nabla_{y}\left(u_{t}^{(i)}G\left(y, u_{t}^{(i)}\right)u_{t}^{(i)}\right)}{y=x_{t}^{(i)}}$ for $t=1,\dots,T-1$
        \State $h_{t} \leftarrow \restr{\nabla_{y}\left(u_{t}^{(i)}G\left(x_{t}^{(i)}, y\right)u_{t}^{(i)}\right)}{y=u_{t}^{(i)}}$ for $t=1,\dots,T-1$
        \State $\mu_{T-1} \leftarrow \left(\sum_{t=0}^{T-1}G_{t}^{-1}\right)^{-1}\left(2(a-b)-\sum_{t=0}^{T-1}G_{t}^{-1}\left(h_{t}+\sum_{t>j}^{T-1}\nu_{j}\right)\right)$
        \State $u_{t} \leftarrow -\frac{1}{2}G_{t}^{-1}\left(\mu_{T-1}+h_{t}+\sum_{j>t}^{T-1}\nu_{j}\right)$ for $t=0,\dots,T-1$
        \State $x_{t+1} \leftarrow x_{t}+u_{t}$ for $t=0,\dots,T-1$
        \State Using line search find $\alpha^{*}$ for the following optimization problem with the discrete energy functional $E_{F}$
        \begin{equation*}
            \begin{split}
                \alpha^{*} = \argmin_{\alpha}\quad &E_{F}\left(x_{0:T}\right) \quad \text{(exact line-search)} \\
                \text{s.t.} \quad &x_{t+1}=x_{t}+\alpha u_{t}+(1-\alpha)u_{t}^{(i)}, \quad t=0,\dots,T-1, \\
                &x_{0}=a.
            \end{split}
        \end{equation*}
        \State Set $u_{t}^{(i+1)} \leftarrow \alpha^{*}u_{t}+(1-\alpha^{*})u_{t}^{(i)}$ for $t=0,\dots,T-1$
        \State Set $x_{t+1}^{(i+1)}\leftarrow x_{t}^{(i+1)}+u_{t}^{(i+1)}$ for $t=0,\dots,T-1$
        \State $i \leftarrow i+1$
        \EndWhile
        \State return $x_{t}$ for $t=0,\dots,T-1$
    \end{algorithmic}
\end{algorithm}

\subsubsection{GEORCE-FM for Finsler}

\begin{algorithm}[hbt]
    \caption{GEORCE-FM for Finsler Manifolds}
    \label{al:georcef_fm}
    \begin{algorithmic}[1]
        \State \textbf{Input}: $\mathrm{tol}$, $a_{1:N,i}$, $T$.
        \State \textbf{Output}: Geodesic estimate $x_{0:T}$.
        \State Set $y^{(0)} \leftarrow a_{0}$, $x_{t,i}^{(0)} \leftarrow a_{i}+\frac{y^{(0)}-a_{i}}{T}t$ and  $u_{t,i}^{(0)} \leftarrow \frac{y^{(0)}-a_{i}}{T}$ for $t=0.,\dots,T$ and $i=1,\dots,N$.
        \While stop criteria > $\mathrm{tol}$ 
        \State $G_{t,i} \leftarrow G\left(x_{t,i}^{(k)}\right)$ for $t=0,\dots,T-1$ and $i=1,\dots,N$.
        \State $g_{t,i} \leftarrow \restr{\nabla_{x}\left(u_{t,i}^{(k)}G\left(x\right)u_{t,i}^{(k)}\right)}{x=x_{t,i}^{(k)}}$ for $t=1,\dots,T-1$ and $i=1,\dots,N$.
        \State $y \leftarrow W^{-1}V$ using Eq.~\ref{eq:finsler_energy_update_schem}.
        \State $\mu_{T-1,i} \leftarrow \left(\sum_{t=0}^{T-1}G_{t,i}^{-1}\right)^{-1}\left(2w_{i}(a_{i}-y)-\sum_{t=0}^{T-1}G_{t,i}^{-1}\sum_{t>j}^{T-1}g_{j,i}\right)$ for $i=1,\dots,N$ and $t=1,\dots,T-1$.
        \State $u_{t,i} \leftarrow -\frac{1}{2w_{i}}G_{t,i}^{-1}\left(\mu_{T-1,i}+\sum_{j>t}^{T-1}g_{j,i}\right)$ for $t=0,\dots,T-1$ and  $i=1,\dots,N$.
        \State $x_{t+1,i} \leftarrow x_{t,i}+u_{t,i}$ for $t=0,\dots,T-2$ and $i=1,\dots,N$.
        \State Using line search find $\alpha^{*}$ for the following optimization problem for the discrete sum of energy $E$
        \begin{equation*}
            \begin{split}
                \min_{\alpha}\quad &E\left(x_{0:T,1:N}, \tilde{u}_{0:T,1:N}\right) \quad \text{(exact line search)} \\
                \text{s.t.} \quad &x_{t+1,i}=x_{t,i}+\alpha \tilde{u}_{t,i}+(1-\alpha)u_{t,i}^{(k)}, \quad t=0,\dots,T-1, \, i=1,\dots,N. \\
                &\tilde{u}_{t,i} = \alpha u_{t,i}+(1-\alpha)u_{t,i}^{(k)}, \quad t=0,\dots,T-1, \, i=1,\dots,N. \\
                &x_{0,i}=a_{i}.
            \end{split}
        \end{equation*} \\
        \State Set $u_{t,i}^{(k+1)} \leftarrow \alpha^{*}u_{t,i}+(1-\alpha^{*})u_{t,i}^{(k)}$ for $t=0,\dots,T-1$ and $i=1,\dots,N$.
        \State Set $x_{t+1,i}^{(k+1)} \leftarrow x_{t,i}^{(k+1)}+u_{t,i}^{(k+1)}$ for $t=0,\dots,T-1$ and $i=1,\dots,N$.
        \EndWhile
        \State return $x_{t,i}$ for $t=0,\dots,T-1$ for $i=1,\dots,N$. \\
    \end{algorithmic}
\end{algorithm}
    \section{Manifolds} \label{ap:manifold_description}
    In this section we give a brief background to the different manifolds used in the experiments in the paper. The manifolds are the same that can be found in \citep{georce}, and we re-state the definitions here.

\subsection{Riemannian Manifolds}

The Riemannian manifolds used in the paper are the same as in \citep{georce}, and we re-state the definitions from \citep{georce} in Table~\ref{tab:manifold_description}.

\begin{table}[ht]
    \centering
    \scriptsize
    \begin{tabular}{p{2cm} | p{5cm} | p{2cm} | p{4cm}}
    \textbf{Manifold} & \textbf{Description} & \textbf{Parameters} & \textbf{Local Coordinates} \\
    \hline
    $\mathbb{S}^{n}$ & The n-sphere, $\{x \in \mathbb{R}^{n+1} \;|\; ||x||_{2}=1\}$ & - & Stereographic coordinates \\
    \hline
    $E(n)$ & The n-Ellipsoid, $\{x \in \mathbb{R}^{n+1} \;|\; ||p\odot x||_{2}=1\}$ & Half-axes: $p=\left(0.5,\dots,1\right)$ equally distributed points. & Stereographic coordinates \\
    \hline
    $\mathbb{T}^{2}$ & Torus & Major radius $R=3.0$ and minor radius $r=1.0$ & Parameterized by $\left(\theta,\phi\right)$ with $x=(R+r\cos \theta)\cos\phi, y=(R+r\cos \theta) \sin \phi, z=r \sin \theta$. \\
    \hline
    $\mathbb{H}^{2}$ & 2 dimensional Hyperbolic Space embedded into Minkowski pace $\mathbb{R}^{3}$ & - & Parameterized by $\left(\alpha,\beta\right)$ with $x=\cosh\alpha, y=\sinh\alpha\cos\beta, z=\sinh\alpha\sin\beta$. \\
    \hline
    Paraboloid & Paraboloid of dimension $n$ & - & Parameterized by standard coordinates by $\left(x^{1},\dots,x^{n},\sum_{i=1}^{n}x_{i}^{2}\right)$ \\
    \hline
    Hyperbolic Paraboloid & Hyperbolic Paraboloid of dimension $2$ & - & Parameterized by standard coordinates by $\left(x^{1},\dots,x^{n},x_{1}^{2}-x_{2}^{2}\right)$ \\
    \hline
    $\mathcal{P}(n)$ & Symmetric positive definite matrices of size $n^{2}$ & - & Embedded into $\mathbb{R}^{n^{2}}$ by the mapping $f(x) = l(x)l(x)^{T}$, where $l: \mathbb{R}^{n(n+1)/2} \rightarrow \mathbb{R}^{n \times n}$ maps $x$ into a lower triangle matrix consisting of the elements in $x$. \\
    \hline
    Gaussian Distribution & Parameters of the Gaussian distribution equipped with the Fischer-Rao metric & - & Parametrized by $(\mu,\sigma) \in \mathbb{R} \times \mathbb{R}_{+}$. \\
    \hline
    Fr\'echet Distribution & Parameters of the Fr\'echet distribution equipped with the Fischer-Rao metric & - & Parametrized by $(\beta,\lambda) \in \mathbb{R}_{+} \times \mathbb{R}_{+}$. \\
    \hline
    Cauchy Distribution & Parameters of the Cauchy distribution equipped with the Fischer-Rao metric & - & Parametrized by $(\mu,\sigma) \in \mathbb{R} \times \mathbb{R}_{+}$. \\
    \hline
    Pareto Distribution & Parameters of the Pareto distribution equipped with the Fischer-Rao metric & - & Parametrized by $(\theta,\alpha) \in \mathbb{R}_{+} \times \mathbb{R}_{+}$. \\
    \hline
    VAE MNIST & Variational-Autoencoder equipped with the pull-back metric for MNIST-data & - & Parametrized by the encoded latent space. \\
    \hline
    VAE CelebA & Variational-Autoencoder equipped with the pull-back metric for CelebA-data & - & Parametrized by the encoded latent space. \\
    \hline
    \end{tabular}
    \caption{Description of Riemannian Manifolds.}
    \label{tab:manifold_description}
\end{table}

\subsection{Finsler Manifolds}

Let $\mathcal{M}$ be a differentiable manifold equipped with a Riemannian metric. Consider a force field $f: \mathcal{M} \rightarrow T_{x}\mathcal{M}$ acting on the manifold. If a microswimmer wants to find the shortest curve on the manifold, $\mathcal{M}$, under influence of $f$ with constrant velocity $v$, then this corresponds to finding geodesics on a Randers metric \citep{Piro_2021}, which is a special case of a Finsler manifold, where

\begin{equation}
    \begin{split}
        F &= \sqrt{a_{ij}\dot{r}^{i}\dot{r}^{j}}+b_{i}\dot{r}^{i} \\
        a_{ij} &= g_{ij} \lambda + f_{i}f_{j}\lambda^{2} \\
        b_{i} &= -f_{i}\lambda \\
        f_{i} &= g_{ij}f^{j} \\
        \lambda^{-1} &= v_{0}^{2}-g_{ij}f^{i}f^{j},
    \end{split}
\end{equation}

where $g_{ij}$ denotes the elements of the Riemannian background metric, $f^{j}$ denotes the $j$th element of the force field, while $v_{0}$ denotes the initial velocity Riemannian length of the microswimmer with constant velocity $v$ on the surface. The Finsler metric is well defined if and only if $||f||_{g} < v_{0}$ \citep{Piro_2021}.

\subsection{Variational-Autoencoders} 

The Variational-Autoencoders (VAE) \citep{kingma2022autoencodingvariationalbayes} is generative model that learns the data manifold, where the manifold can be equipped with the pull-back metric. A variational-autoencoder consists of an encoder, $h_{\Phi}: \mathcal{X} \subset \mathbb{R}^{D} \rightarrow \mathcal{Z} \subset \mathbb{R}^{d}$ that encodes the data $x \in \mathcal{R}^{D}$ into a latent representation $z \in \mathcal{Z}$ with $d < D$ parameters $\Phi$. The data is then decoded by $f_{\theta}: \mathcal{Z} \rightarrow \mathcal{X}$. The parameters, $\{\Phi, \theta\}$, are estimated by minimizing the Evidence Lower Bound (ELBO) \citep{kingma2022autoencodingvariationalbayes}. The VAE can be seen as a Riemannian manifold with one chart with the Riemannian pull-back metric \citep{arvanitidis2021latent, shao2017riemannian}
\begin{equation*}
    G(z) = J_{f}^{\top}J_{f},
\end{equation*}
where $J_{f}$ denotes the Jacobian of $f$. We train a VAE for the MNIST data \citep{deng2012mnist} and CelebA data \citep{liu2015faceattributes}. We use the exact same architecture as in \citep{georce}, which is a modification of the architecture in \citep{shao2017riemannian}. We re-state the architecture from \citep{georce} in Table~\ref{tab:mnist_vae_architecture} and Table~\ref{tab:celeba_vae_architecture}.

\begin{table}[!ht]
    \centering
    \begin{tabular}{c}
        \hline
        \multicolumn{1}{c}{\textbf{Encoder}} \\
        \hline
        $\mathrm{Conv}\left(\mathrm{output\_channels}=64, \mathrm{kernel\_shape}=4 \times 4, \mathrm{stride}=2, \mathrm{bias}=\mathrm{False}\right)$ \\ 
        $\mathrm{GELU}$ \\
        $\mathrm{Conv}\left(\mathrm{output\_channels}=64, \mathrm{kernel\_shape}=4 \times 4, \mathrm{stride}=2, \mathrm{bias}=\mathrm{False}\right)$ \\ 
        $\mathrm{GELU}$ \\
        $\mathrm{Conv}\left(\mathrm{output\_channels}=64, \mathrm{kernel\_shape}=4 \times 4, \mathrm{stride}=1, \mathrm{bias}=\mathrm{False}\right)$ \\
        $\mathrm{GELU}$ \\
        $\mathrm{Conv}\left(\mathrm{output\_channels}=64, \mathrm{kernel\_shape}=4 \times 4, \mathrm{stride}=2, \mathrm{bias}=\mathrm{False}\right)$ \\
        $\mathrm{GELU}$ \\
        \hline
        \multicolumn{1}{c}{\textbf{Latent Parameters}} \\
        \hline
        $\mu$: $\mathrm{Linear}(\mathrm{out\_feature}=8, \mathrm{bias}=\mathrm{True})$, $\mathrm{Identity}$ \\
        $\sigma$: $\mathrm{Linear}(\mathrm{out\_feature}=8, \mathrm{bias}=\mathrm{True})$, $\mathrm{Sigmoid}$ \\
        \hline
        \multicolumn{1}{c}{\textbf{Decoder}} \\
        \hline
        $\mathrm{Linear}\left(\mathrm{out\_feature}=50, \mathrm{bias}=\mathrm{True}\right)$ \\
        $\mathrm{GELU}$ \\
        $\mathrm{Conv2dTransposed}\left(\mathrm{output\_channels}=64, \mathrm{kernel\_shape}=4 \times 4, \mathrm{stride}=2, \mathrm{bias}=\mathrm{False}\right)$ \\
        $\mathrm{GELU}$ \\
        $\mathrm{Conv2dTransposed}\left(\mathrm{output\_channels}=32, \mathrm{kernel\_shape}=4 \times 4, \mathrm{stride}=1, \mathrm{bias}=\mathrm{False}\right)$ \\
        $\mathrm{GELU}$ \\
        $\mathrm{Conv2dTransposed}\left(\mathrm{output\_channels}=16, \mathrm{kernel\_shape}=4 \times 4, \mathrm{stride}=1, \mathrm{bias}=\mathrm{False}\right)$ \\
        $\mathrm{GELU}$ \\
        $\mathrm{Linear}\left(\mathrm{out\_feature}=784, \mathrm{bias}=\mathrm{True}\right)$ \\
        \hline
    \end{tabular}
    \caption{Architecture for MNIST}
    \label{tab:mnist_vae_architecture}
\end{table}

\begin{table}[!ht]
    \centering
    \begin{tabular}{c}
        \hline
        \multicolumn{1}{c}{\textbf{Encoder}} \\
        \hline
        $\mathrm{Conv}\left(\mathrm{output\_channels}=32, \mathrm{kernel\_shape}=4 \times 4, \mathrm{stride}=2, \mathrm{bias}=\mathrm{False}\right)$ \\ 
        $\mathrm{GELU}$ \\
        $\mathrm{Conv}\left(\mathrm{output\_channels}=32, \mathrm{kernel\_shape}=4 \times 4, \mathrm{stride}=2, \mathrm{bias}=\mathrm{False}\right)$ \\ 
        $\mathrm{GELU}$ \\
        $\mathrm{Conv}\left(\mathrm{output\_channels}=64, \mathrm{kernel\_shape}=4 \times 4, \mathrm{stride}=2, \mathrm{bias}=\mathrm{False}\right)$ \\
        $\mathrm{GELU}$ \\
        $\mathrm{Conv}\left(\mathrm{output\_channels}=64, \mathrm{kernel\_shape}=4 \times 4, \mathrm{stride}=2, \mathrm{bias}=\mathrm{False}\right)$ \\
        $\mathrm{GELU}$ \\
        \hline
        \multicolumn{1}{c}{\textbf{Latent Parameters}} \\
        \hline
        $\mu$: $\mathrm{Linear}(\mathrm{out\_feature}=32, \mathrm{bias}=\mathrm{True})$, $\mathrm{Identity}$ \\
        $\sigma$: $\mathrm{Linear}(\mathrm{out\_feature}=32, \mathrm{bias}=\mathrm{True})$, $\mathrm{Sigmoid}$ \\
        \hline
        \multicolumn{1}{c}{\textbf{Decoder}} \\
        \hline
        $\mathrm{Conv2dTransposed}\left(\mathrm{output\_channels}=64, \mathrm{kernel\_shape}=4 \times 4, \mathrm{stride}=2, \mathrm{bias}=\mathrm{False}\right)$ \\
        $\mathrm{GELU}$ \\
        $\mathrm{Conv2dTransposed}\left(\mathrm{output\_channels}=64, \mathrm{kernel\_shape}=4 \times 4, \mathrm{stride}=2, \mathrm{bias}=\mathrm{False}\right)$ \\
        $\mathrm{GELU}$ \\
        $\mathrm{Conv2dTransposed}\left(\mathrm{output\_channels}=32, \mathrm{kernel\_shape}=4 \times 4, \mathrm{stride}=2, \mathrm{bias}=\mathrm{False}\right)$ \\
        $\mathrm{GELU}$ \\
        $\mathrm{Conv2dTransposed}\left(\mathrm{output\_channels}=32, \mathrm{kernel\_shape}=4 \times 4, \mathrm{stride}=2, \mathrm{bias}=\mathrm{False}\right)$ \\
        $\mathrm{GELU}$ \\
        $\mathrm{Conv2dTransposed}\left(\mathrm{output\_channels}=3, \mathrm{kernel\_shape}=4 \times 4, \mathrm{stride}=4, \mathrm{bias}=\mathrm{False}\right)$ \\
        \hline
    \end{tabular}
    \caption{Architecture for CelebA}
    \label{tab:celeba_vae_architecture}
\end{table}
    \section{Data and methods} \label{ap:data_methods}
    \subsection{Data}

We apply \textit{GEORCE-FM} to the following real-world datasets
\begin{itemize}
    \item MNIST data \citep{deng2012mnist}: We train a VAE for MNIST, which consists of $28 \times 28$ images of handwritten digits between 0 and 9. The latent space is $8$ dimensional. We use $60,000$ images for training the VAE.
    \item CelebA data \citep{liu2015faceattributes}: We train a VAE for CelebA dataset, which consists of $202,599$ images of famous people of size $64 \times 64 \times 3$, where we use approximately $80\%$ for training similar to \citep{shao2017riemannian}.
\end{itemize}

In table~\ref{tab:manifold_data} we summarize the data generation, which is done in local coordinates except for the VAE's.

\begin{table}[!ht]
    \centering
    \scriptsize
    \begin{tabular}{p{2cm} | p{7cm}}
    \textbf{Manifold} & \textbf{Data Generation} \\
    \hline
    $\mathbb{S}^{n}$ & $N$ normally distributed points with variance $1$ and mean of $n$ equally spaced points between $0$ and $1$ \\
    \hline
    $E(n)$ & $N$ normally distributed points with variance $1$ and mean of $n$ equally spaced points between $0.5$ and $1$ (end point is excluded) \\
    \hline
    $\mathbb{T}^{2}$ & $N$ normally distributed points with variance $1$ and mean $(0,0)$ \\
    \hline
    $\mathbb{H}^{2}$ & $N$ normally distributed points with variance $1$ and mean $(0,0)$ \\
    \hline
    Paraboloid & $N$ normally distributed points with variance $0.1$ and mean $(1,1)$  \\
    \hline
    Hyperbolic Paraboloid & $N$ normally distributed points with variance $0.1$ and mean $(1,1)$  \\
    \hline
    $\mathcal{P}(n)$ & $N$ normally distributed points with variance $1.0$ and mean $10 I$, where $I$ denotes the identity matrix. \\
    \hline
    Gaussian Distribution & $N/2$ normally with variance $.1$ and mean $\left(-1.0,0.5\right)$ and $N/2$ normally with variance $0.1$ and mean $\left(1.0, 1.0\right)$ \\
    \hline
    Fréchet Distribution & $N/2$ normally with variance $.1$ and mean $\left(0.5,0.5\right)$ and $N/2$ normally with variance $0.1$ and mean $\left(1.0, 1.0\right)$  \\
    \hline
    Cauchy Distribution & $N/2$ normally with variance $.1$ and mean $\left(-1.0,0.5\right)$ and $N/2$ normally with variance $0.1$ and mean $\left(1.0, 1.0\right)$  \\
    \hline
    Pareto Distribution & $N/2$ normally with variance $.1$ and mean $\left(0.5,0.5\right)$ and $N/2$ normally with variance $0.1$ and mean $\left(1.0, 1.0\right)$  \\
    \hline
    MNIST & $N$ randomly selected images from the MNIST dataset. \\
    \hline
    CelebA & $N$ randomly selected images from the CelebA dataset. \\
    \end{tabular}
    \caption{The start and end point for the geodesics estimated for each manifold.}
    \label{tab:manifold_data}
\end{table}

\subsection{Methods and Hyper-Parameters}

In table~\ref{tab:hyper_parameters} we provide a description of the methods used in the paper, and what the hyper-parameters have been set to.

\begin{table}[!ht]
    \centering
    \scriptsize
    \begin{tabular}{p{2cm} |p{4cm} | p{4cm}}
    \textbf{Method} & \textbf{Description} & \textbf{Parameters} \\
    \hline
    \textit{GEORCE-FM} & See algorithm in main paper & Backtracking: $\rho=0.5$. \\
    \hline
    \textit{ADAM} \citep{kingma2017adam} & Moments based gradient descent method & $\alpha=0.01$ (step size), $\beta_{1}=0.9$, $\beta_{2}=0.999$, $\epsilon=10^{-8}$ \\
    \hline
    \textit{SGD} \citep{ruder2017overviewgradientdescentoptimization} & Stochastic gradient descent & $\gamma=0.01$ (step size) \\
    \textit{RMSprop Momentum} & Stochastic gradient descent & $\alpha=0.01$ (step size), $\mathrm{momentum}=0.9$., $\epsilon=10^{-8}$. \\
    \hline 
    \textit{RMSprop} \citep{ruder2017overviewgradientdescentoptimization} & Moments based gradient descent method  & $\alpha=0.01$ (step size), $\gamma=0.9$, $\epsilon=10^{-8}$ \\
    \hline
    \textit{Adamax} \citep{kingma2017adam} & Corresponds to \textit{ADAM} using the infinity norm in \textit{ADAM} & $\alpha=0.01$ (step size), $\beta_{1}=0.9$, $\beta_{2}=0.999$, $\epsilon=10^{-8}$. \\
    \hline
    \textit{Adagrad} \citep{duchi2011adaptive} & Stochastic gradient method with adaptive updates of the step size & $\alpha=0.01$ (step size), $\mathrm{momentum}=0.9$. \\
    \hline
    \end{tabular}
    \caption{The Hyper-parameters for estimating the Fr\'echet mean.}
    \label{tab:hyper_parameters}
\end{table}
    \section{Experiments} \label{ap:experiments}
    In this section we provide details on hardware and additional experiments in the paper.
    \subsection{Hardware} \label{ap:hardware}
    All figures have been computed on a \textit{HP} computer with Intel Core i9-11950H 2.6 GHz 8C, 15.6'' FHD, 720P CAM, 32 GB (2$\times$16GB) DDR4 3200 So-Dimm, Nvidia Quadro TI2004GB Discrete Graphics, 1TB PCle NVMe SSD, backlit Keyboard, 150W PSU, 8cell, W11Home 64 Advanced, 4YR Onsite NBD.

The runtime estimates and the training of the VAE's have been computed on a GPU for at most 24 hours with a maximum memory of $10$ GB. The $GPU$ consists of $4$ nodes on a \textit{Tesla V100}.
    \subsection{Additional experiments} \label{ap:additional_experiments}
    \subsubsection{Fr\'echet Mean for VAE}

In Fig.~\ref{fig:mnist_riemannian_8} we show the estimated Fr\'echet mean and geodesics using $10$ reconstructed with a VAE for the pull-back metric of the decoder using \textit{GEORCE-FM}. In Fig.~\ref{fig:mnist_finsler_8} and Fig.~\ref{fig:celeba_finsler_32} we show a similar plot, but where the pull-back metric of the decoder is a background with the generic force field in eq.~\ref{eq:generic_forcefield} acting on the manifold, and thus converting the navigation problem into finding geodesics under a Finsler metric as described in Appendix~\ref{ap:manifold_description}.

\begin{figure}[t!]
    \centering
    \includegraphics[width=1.0\textwidth]{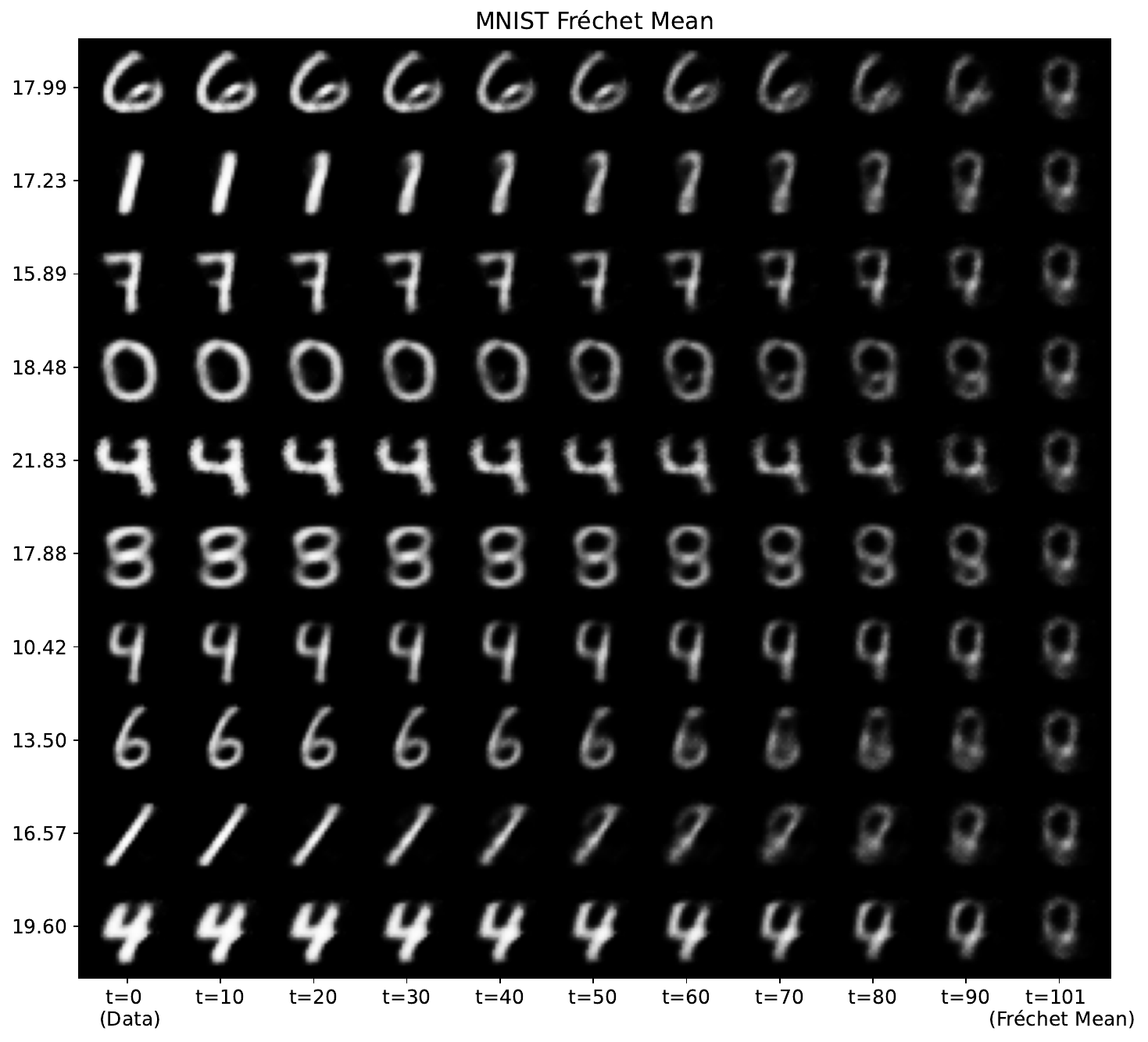}
    \caption{The application of \textit{GEORCE-FM} to manifold learning for a VAE estimating the Fr\'echet mean for $10$ images of the MNIST dataset reconstructed using a VAE \citep{deng2012mnist}. Each row shows $10$ points on the geodesic, while the left most image is the data and the right most image is the estimated Fr\'echet mean using \textit{GEORCE-FM}. The length of the estimated geodesic using \textit{GEORCE-FM} is shown to the left, while the grid point number is shown at the bottom.}
    \label{fig:mnist_riemannian_8}
    \vspace{-1.5em}
\end{figure}

\begin{figure}[t!]
    \centering
    \includegraphics[width=1.0\textwidth]{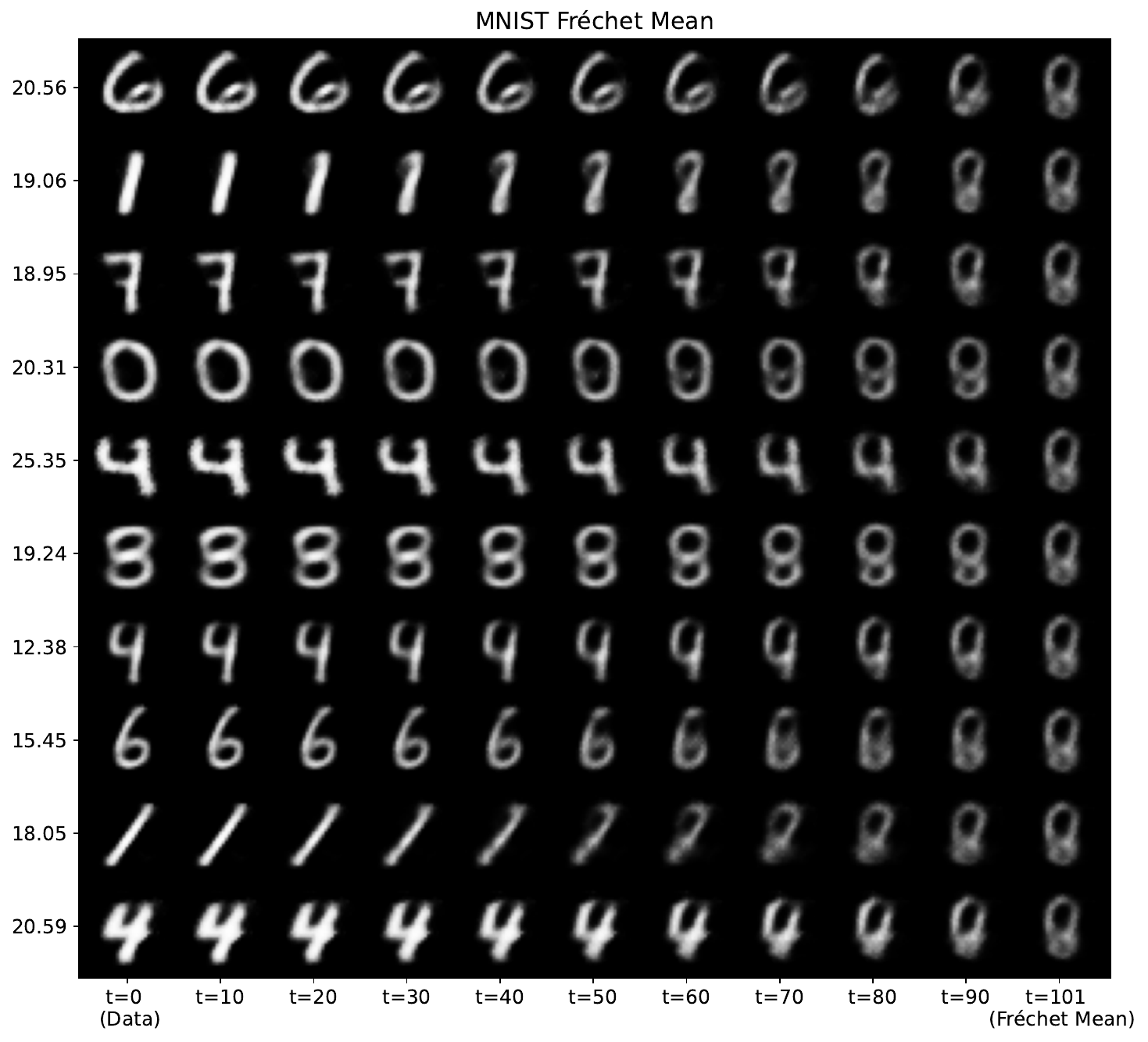}
    \caption{The application of \textit{GEORCE-FM} to manifold learning for a VAE estimating the Fr\'echet mean for $10$ images of the MNIST dataset reconstructed using a VAE \citep{deng2012mnist}. The learned manifold is equipped with the generic force field in eq,~\ref{eq:generic_forcefield} such that it is a Finsler manifold as described in Appendix~\ref{ap:manifold_description}. Each row shows $10$ points on the geodesic, while the left most image is the data and the right most image is the estimated Fr\'echet mean using \textit{GEORCE-FM}. The length of the estimated geodesic using \textit{GEORCE-FM} is shown to the left, while the grid point number is shown at the bottom.}
    \label{fig:mnist_finsler_8}
    \vspace{-1.5em}
\end{figure}

\begin{figure}[t!]
    \centering
    \includegraphics[width=1.0\textwidth]{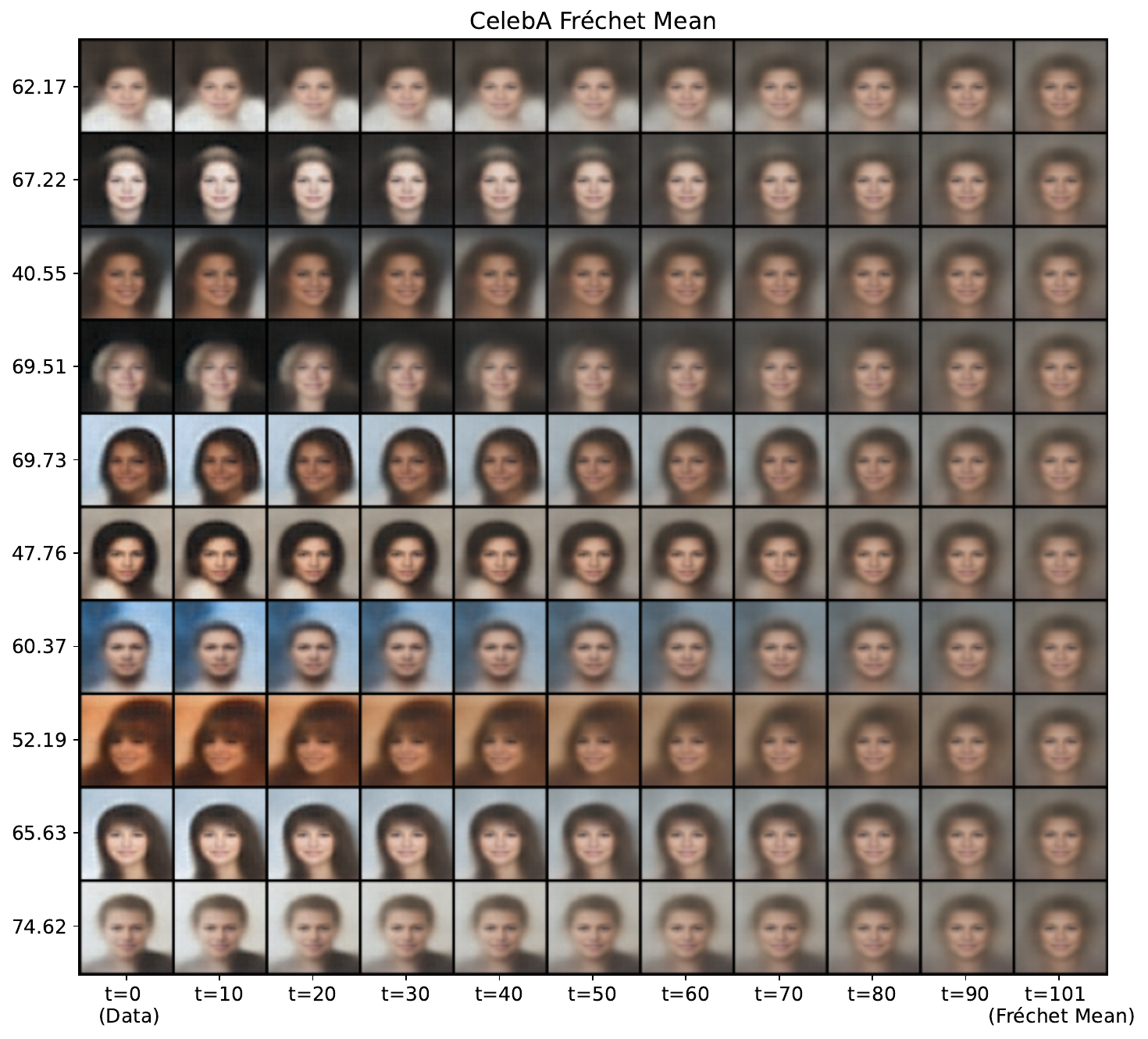}
    \caption{The application of \textit{GEORCE-FM} to manifold learning for a VAE estimating the Fr\'echet mean for $10$ images of the CelebA dataset reconstructed using a VAE \citep{liu2015faceattributes}. The learned manifold is equipped with the generic force field in eq,~\ref{eq:generic_forcefield} such that it is a Finsler manifold as described in Appendix~\ref{ap:manifold_description}. row shows $10$ points on the geodesic, while the left most image is the data and the right most image is the estimated Fr\'echet mean using \textit{GEORCE-FM}. The length of the estimated geodesic using \textit{GEORCE-FM} is shown to the left, while the grid point number is shown at the bottom.}
    \label{fig:celeba_finsler_32}
    \vspace{-1.5em}
\end{figure}

\subsubsection{Additional Runtime Estimates Riemannian Manifolds}

\begin{sidewaystable}
    \centering
    \scriptsize
    \begin{tabular*}{\textheight}{@{\extracolsep\fill}lcccccc}
        \toprule%
        & \multicolumn{2}{c}{\textbf{SGD (T=100)}} & \multicolumn{2}{c}{\textbf{RMSprop  (T=100)}} & \multicolumn{2}{c}{\textbf{Adamax  (T=100)}} \\\cmidrule{2-3}\cmidrule{4-5}\cmidrule{6-7}%
        Riemannian Manifold & Length & Runtime & Length & Runtime & Length & Runtime \\
        \midrule
        $\mathbb{S}^{2}$ & $\pmb{205.94}$ & $2.1838 \pm 0.0006$ & $224.29$ & $1.6724 \pm 0.0014$ & $227.18$ & $\pmb{1.6636} \pm \pmb{ 0.0019 }$ \\ 
        $\mathbb{S}^{3}$ & $\pmb{151.07}$ & $1.6920 \pm 0.0011$ & $183.56$ & $\pmb{1.6814} \pm \pmb{ 0.0011 }$ & $188.34$ & $1.7186 \pm 0.0010$ \\ 
        $\mathbb{S}^{5}$ & $194.97$ & $\pmb{1.7908} \pm \pmb{ 0.0011 }$ & $\pmb{179.87}$ & $1.8230 \pm 0.0010$ & $193.20$ & $1.8785 \pm 0.0017$ \\ 
        $\mathbb{S}^{10}$ & $203.93$ & $\pmb{2.8978} \pm \pmb{ 0.0014 }$ & $\pmb{95.59}$ & $2.9154 \pm 0.0012$ & $118.48$ & $2.9460 \pm 0.0017$ \\ 
        $\mathbb{S}^{20}$ & $143.37$ & $\pmb{5.4790} \pm \pmb{ 0.0013 }$ & $\pmb{48.80}$ & $5.5168 \pm 0.0012$ & $69.48$ & $5.5421 \pm 0.0013$ \\ 
        $\mathbb{S}^{50}$ & $64.22$ & $23.7887 \pm 0.0018$ & $\pmb{19.21}$ & $\pmb{22.8096} \pm \pmb{ 0.3671 }$ & $28.00$ & $23.9063 \pm 0.0006$ \\ 
        $\mathbb{S}^{100}$ & $32.36$ & $97.5666 \pm 4.6575$ & $\pmb{10.58}$ & $\pmb{84.6670} \pm \pmb{ 0.0665 }$ & $13.50$ & $84.8641 \pm 0.0702$ \\ 
        \hline
        $\mathrm{E}\left( 2 \right)$ & $137.83$ & $\pmb{1.6446} \pm \pmb{ 0.0015 }$ & $129.43$ & $1.6579 \pm 0.0016$ & $\pmb{124.54}$ & $1.6696 \pm 0.0009$ \\ 
        $\mathrm{E}\left( 3 \right)$ & $\pmb{117.87}$ & $\pmb{1.6822} \pm \pmb{ 0.0013 }$ & $120.71$ & $1.7087 \pm 0.0008$ & $120.92$ & $1.7110 \pm 0.0009$ \\ 
        $\mathrm{E}\left( 5 \right)$ & $\pmb{111.72}$ & $\pmb{1.8051} \pm \pmb{ 0.0012 }$ & $116.89$ & $1.8220 \pm 0.0014$ & $125.84$ & $1.8564 \pm 0.0018$ \\ 
        $\mathrm{E}\left( 10 \right)$ & $108.97$ & $2.9128 \pm 0.0011$ & $\pmb{68.20}$ & $\pmb{2.9124} \pm \pmb{ 0.0013 }$ & $75.54$ & $2.9482 \pm 0.0018$ \\ 
        $\mathrm{E}\left( 20 \right)$ & $82.70$ & $\pmb{5.5250} \pm \pmb{ 0.0013 }$ & $\pmb{37.48}$ & $5.5792 \pm 0.0009$ & $44.89$ & $5.5879 \pm 0.0011$ \\ 
        $\mathrm{E}\left( 50 \right)$ & $41.87$ & $\pmb{23.5703} \pm \pmb{ 0.0012 }$ & $\pmb{14.97}$ & $23.6562 \pm 0.0020$ & $18.62$ & $23.7257 \pm 0.0012$ \\ 
        $\mathrm{E}\left( 100 \right)$ & $22.31$ & $\pmb{84.1223} \pm \pmb{ 0.0829 }$ & $\pmb{7.95}$ & $85.0741 \pm 0.5396$ & $9.21$ & $84.4970 \pm 0.1002$ \\ 
        \hline
        $\mathbb{T}^{2}$ & $-$ & $\pmb{0.0219} \pm \pmb{ 0.0001 }$ & $1250.43$ & $1.6474 \pm 0.0014$ & $\pmb{1246.28}$ & $1.6429 \pm 0.0018$ \\ 
        \hline
        $\mathbb{H}^{2}$ & $\pmb{174.60}$ & $\pmb{0.0225} \pm \pmb{ 0.0001 }$ & $174.62$ & $1.6579 \pm 0.0013$ & $174.61$ & $1.6705 \pm 0.0014$ \\ 
        \hline
        Paraboloid & $-/-$ & $-$ & $112.73$ & $1.6066 \pm 0.0009$ & $\pmb{104.19}$ & $1.6068 \pm 0.0016$ \\ 
        Hyperbolic Paraboloid & $-/-$ & $-$ & $114.56$ & $1.6121 \pm 0.0010$ & $\pmb{114.40}$ & $1.6126 \pm 0.0014$ \\ 
        \hline
        Gaussian Distribution & $243.03$ & $0.2013 \pm 0.0008$ & $156.82$ & $\pmb{0.1893} \pm \pmb{ 0.0014 }$ & $\pmb{154.78}$ & $0.2141 \pm 0.0153$ \\ 
        Fr\'echet Distribution & $60189340.00$ & $\pmb{0.0039} \pm \pmb{ 0.0001 }$ & $32.89$ & $0.1971 \pm 0.0081$ & $\pmb{31.51}$ & $0.1932 \pm 0.0004$ \\ 
        Cauchy Distribution & $77.17$ & $\pmb{0.2181} \pm \pmb{ 0.0005 }$ & $67.21$ & $0.2299 \pm 0.0014$ & $\pmb{64.97}$ & $0.2357 \pm 0.0035$ \\ 
        Pareto Distribution & $25.48$ & $\pmb{0.1966} \pm \pmb{ 0.0009 }$ & $24.93$ & $0.1967 \pm 0.0073$ & $\pmb{24.31}$ & $0.2019 \pm 0.0001$ \\ 
        \hline
        VAE MNIST & $5687.01$ & $\pmb{0.3928} \pm \pmb{ 0.0030 }$ & $897.52$ & $39.1152 \pm 0.0014$ & $\pmb{894.32}$ & $39.1217 \pm 0.0033$ \\ 
        VAE CelebA & $25601932831752192.00$ & $\pmb{5.1663} \pm \pmb{ 0.0007 }$ & $9483.42$ & $857.7762 \pm 0.0208$ & $\pmb{9465.98}$ & $862.3471 \pm 0.0082$ \\ 
        \hline
    \end{tabular*}
    \caption{The table shows the sum of squared lengths of the estimated Fr\'echet mean on a GPU. When the computational time was longer than $24$ hours, the value is set to $-$.}
    \label{tab:riemmannian_comparison_table_extra1}
    \vspace{-2.5em}
\end{sidewaystable}

\begin{sidewaystable}
    \centering
    \scriptsize
    \begin{tabular*}{\textheight}{@{\extracolsep\fill}lcc}
        \toprule%
        & \multicolumn{2}{c}{\textbf{Adagrad (T=100)}} \\\cmidrule{2-3}%
        Riemannian Manifold & Length & Runtime \\
        \midrule
        \textbf{Manifold} & Length & Runtime \\
        $\mathbb{S}^{2}$ & $\pmb{226.54}$ & $\pmb{1.6704} \pm \pmb{ 0.0012 }$ \\ 
        $\mathbb{S}^{3}$ & $\pmb{184.13}$ & $\pmb{1.6923} \pm \pmb{ 0.0006 }$ \\ 
        $\mathbb{S}^{5}$ & $\pmb{178.95}$ & $\pmb{1.8459} \pm \pmb{ 0.0005 }$ \\ 
        $\mathbb{S}^{10}$ & $\pmb{106.87}$ & $\pmb{2.9221} \pm \pmb{ 0.0014 }$ \\ 
        $\mathbb{S}^{20}$ & $\pmb{64.50}$ & $\pmb{5.5231} \pm \pmb{ 0.0024 }$ \\ 
        $\mathbb{S}^{50}$ & $\pmb{26.57}$ & $\pmb{23.8409} \pm \pmb{ 0.0004 }$ \\ 
        $\mathbb{S}^{100}$ & $\pmb{12.87}$ & $\pmb{104.3970} \pm \pmb{ 4.9584 }$ \\ 
        \hline
        $\mathrm{E}\left( 2 \right)$ & $\pmb{124.39}$ & $\pmb{1.6623} \pm \pmb{ 0.0012 }$ \\ 
        $\mathrm{E}\left( 3 \right)$ & $\pmb{120.60}$ & $\pmb{1.6860} \pm \pmb{ 0.0012 }$ \\ 
        $\mathrm{E}\left( 5 \right)$ & $\pmb{120.13}$ & $\pmb{1.8351} \pm \pmb{ 0.0011 }$ \\ 
        $\mathrm{E}\left( 10 \right)$ & $\pmb{69.30}$ & $\pmb{2.9252} \pm \pmb{ 0.0013 }$ \\ 
        $\mathrm{E}\left( 20 \right)$ & $\pmb{39.57}$ & $\pmb{5.5755} \pm \pmb{ 0.0011 }$ \\ 
        $\mathrm{E}\left( 50 \right)$ & $\pmb{16.46}$ & $\pmb{23.6970} \pm \pmb{ 0.0009 }$ \\ 
        $\mathrm{E}\left( 100 \right)$ & $\pmb{7.99}$ & $\pmb{85.1028} \pm \pmb{ 0.4041 }$ \\ 
        \hline
        $\mathbb{T}^{2}$ & $\pmb{1248.16}$ & $\pmb{1.6418} \pm \pmb{ 0.0015 }$ \\ 
        \hline
        $\mathbb{H}^{2}$ & $\pmb{174.61}$ & $\pmb{1.6710} \pm \pmb{ 0.0017 }$ \\ 
        \hline
        Paraboloid & $\pmb{104.02}$ & $\pmb{1.6051} \pm \pmb{ 0.0013 }$ \\ 
        Hyperbolic Paraboloid & $\pmb{114.40}$ & $\pmb{1.6111} \pm \pmb{ 0.0022 }$ \\ 
        \hline
        Gaussian Distribution & $\pmb{155.03}$ & $\pmb{0.2229} \pm \pmb{ 0.0007 }$ \\ 
        Fr\'echet Distribution & $\pmb{31.44}$ & $\pmb{0.2235} \pm \pmb{ 0.0005 }$ \\ 
        Cauchy Distribution & $\pmb{64.36}$ & $\pmb{0.2480} \pm \pmb{ 0.0011 }$ \\ 
        Pareto Distribution & $\pmb{24.27}$ & $\pmb{0.1998} \pm \pmb{ 0.0003 }$ \\ 
        \hline
        VAE MNIST & $\pmb{884.30}$ & $\pmb{39.0508} \pm \pmb{ 0.0012 }$ \\ 
        VAE CelebA & $\pmb{9452.99}$ & $\pmb{835.0145} \pm \pmb{ 0.1027 }$ \\ 
        \hline
    \end{tabular*}
    \caption{The table shows the sum of squared lengths of the estimated Fr\'echet mean on a GPU. When the computational time was longer than $24$ hours, the value is set to $-$.}
    \label{tab:riemmannian_comparison_table_extra2}
    \vspace{-2.5em}
\end{sidewaystable}

\paragraph{Additional Runtime Estimates Finsler Manifolds}

\begin{sidewaystable}
    \centering
    \scriptsize
    \begin{tabular*}{\textheight}{@{\extracolsep\fill}lcccccc}
        \toprule%
        & \multicolumn{2}{c}{\textbf{SGD (T=100)}} & \multicolumn{2}{c}{\textbf{RMSprop  (T=100)}} & \multicolumn{2}{c}{\textbf{Adamax  (T=100)}} \\\cmidrule{2-3}\cmidrule{4-5}\cmidrule{6-7}%
        Finsler Manifold & Length & Runtime & Length & Runtime & Length & Runtime \\
        \midrule
        $\mathbb{S}^{2}$ & $86.83$ & $\pmb{2.3036} \pm \pmb{ 0.0018 }$ & $83.81$ & $2.3118 \pm 0.0014$ & $\pmb{83.68}$ & $2.3185 \pm 0.0015$ \\ 
        $\mathbb{S}^{3}$ & $79.70$ & $2.3912 \pm 0.0003$ & $79.16$ & $\pmb{2.3890} \pm \pmb{ 0.0012 }$ & $\pmb{78.49}$ & $2.4210 \pm 0.0014$ \\ 
        $\mathbb{S}^{5}$ & $96.69$ & $\pmb{3.1103} \pm \pmb{ 0.0013 }$ & $84.52$ & $3.1244 \pm 0.0010$ & $\pmb{84.43}$ & $3.1469 \pm 0.0014$ \\ 
        $\mathbb{S}^{10}$ & $76.13$ & $\pmb{5.4475} \pm \pmb{ 0.0019 }$ & $\pmb{50.06}$ & $5.4784 \pm 0.0018$ & $52.81$ & $5.4966 \pm 0.0032$ \\ 
        $\mathbb{S}^{20}$ & $52.70$ & $\pmb{12.3483} \pm \pmb{ 0.0015 }$ & $\pmb{26.17}$ & $12.3857 \pm 0.0019$ & $30.54$ & $12.4081 \pm 0.0013$ \\ 
        $\mathbb{S}^{50}$ & $23.45$ & $\pmb{54.9592} \pm \pmb{ 0.0016 }$ & $\pmb{10.91}$ & $55.0525 \pm 0.0108$ & $12.65$ & $55.1001 \pm 0.0013$ \\ 
        $\mathbb{S}^{100}$ & $11.78$ & $\pmb{179.9665} \pm \pmb{ 0.1075 }$ & $\pmb{6.16}$ & $180.1367 \pm 0.0069$ & $6.36$ & $180.2978 \pm 0.2581$ \\ 
        \hline
        $\mathrm{E}\left( 2 \right)$ & $52.09$ & $\pmb{2.3014} \pm \pmb{ 0.0023 }$ & $49.72$ & $2.3081 \pm 0.0005$ & $\pmb{46.92}$ & $2.3178 \pm 0.0034$ \\ 
        $\mathrm{E}\left( 3 \right)$ & $51.24$ & $\pmb{2.3818} \pm \pmb{ 0.0010 }$ & $50.88$ & $2.3839 \pm 0.0015$ & $\pmb{49.31}$ & $2.4363 \pm 0.0017$ \\ 
        $\mathrm{E}\left( 5 \right)$ & $56.06$ & $\pmb{3.0950} \pm \pmb{ 0.0020 }$ & $\pmb{53.31}$ & $3.0960 \pm 0.0021$ & $54.20$ & $3.1361 \pm 0.0014$ \\ 
        $\mathrm{E}\left( 10 \right)$ & $42.15$ & $\pmb{5.0520} \pm \pmb{ 0.0022 }$ & $\pmb{32.50}$ & $5.0801 \pm 0.0020$ & $32.98$ & $5.0919 \pm 0.0018$ \\ 
        $\mathrm{E}\left( 20 \right)$ & $30.65$ & $\pmb{12.4187} \pm \pmb{ 0.0018 }$ & $\pmb{18.66}$ & $12.4817 \pm 0.0010$ & $19.87$ & $12.4936 \pm 0.0014$ \\ 
        $\mathrm{E}\left( 50 \right)$ & $15.20$ & $\pmb{54.6004} \pm \pmb{ 0.0015 }$ & $\pmb{8.21}$ & $54.6795 \pm 0.0113$ & $8.76$ & $54.7518 \pm 0.0011$ \\ 
        $\mathrm{E}\left( 100 \right)$ & $8.09$ & $\pmb{176.7958} \pm \pmb{ 0.0389 }$ & $4.99$ & $176.9338 \pm 0.0083$ & $\pmb{4.84}$ & $177.0561 \pm 0.0270$ \\ 
        \hline
        $\mathbb{T}^{2}$ & $-/-$ & $-$ & $467.77$ & $2.4788 \pm 0.0014$ & $\pmb{467.67}$ & $2.4945 \pm 0.0010$ \\ 
        \hline
        $\mathbb{H}^{2}$ & $74.76$ & $\pmb{0.0296} \pm \pmb{ 0.0014 }$ & $74.79$ & $2.5035 \pm 0.0019$ & $\pmb{74.75}$ & $2.5172 \pm 0.0019$ \\ 
        \hline
        Paraboloid & $inf$ & $\pmb{0.0186} \pm \pmb{ 0.0001 }$ & $\pmb{61.26}$ & $2.2251 \pm 0.0007$ & $65.78$ & $2.2251 \pm 0.0008$ \\ 
        Hyperbolic Paraboloid & $-$ & $\pmb{0.0206} \pm \pmb{ 0.0001 }$ & $\pmb{52.62}$ & $2.2064 \pm 0.0004$ & $62.22$ & $2.2156 \pm 0.0007$ \\ 
        \hline
        Gaussian Distribution & $76.16$ & $0.7477 \pm 0.0019$ & $68.64$ & $\pmb{0.7352} \pm \pmb{ 0.0003 }$ & $\pmb{64.75}$ & $0.7513 \pm 0.0016$ \\ 
        Fr\'echet Distribution & $35.58$ & $\pmb{0.6141} \pm \pmb{ 0.0031 }$ & $\pmb{11.74}$ & $0.6541 \pm 0.0019$ & $15.06$ & $0.6393 \pm 0.0007$ \\ 
        Cauchy Distribution & $28.10$ & $0.7585 \pm 0.0044$ & $26.92$ & $\pmb{0.7365} \pm \pmb{ 0.0026 }$ & $\pmb{25.73}$ & $0.7628 \pm 0.0020$ \\ 
        Pareto Distribution & $27.73$ & $0.6621 \pm 0.0011$ & $\pmb{11.36}$ & $\pmb{0.6529} \pm \pmb{ 0.0072 }$ & $15.52$ & $0.6633 \pm 0.0035$ \\ 
        \hline
        VAE MNIST & $-/-$ & $-$ & $316.46$ & $463.9254 \pm 0.0712$ & $\pmb{313.52}$ & $462.4614 \pm 0.2502$ \\ 
        VAE CelebA & $-$ & $\pmb{9.6821} \pm \pmb{ 0.0034 }$ & $3213.31$ & $1602.4119 \pm 0.1665$ & $\pmb{3202.72}$ & $1597.8483 \pm 0.3720$ \\ 
        \hline
    \end{tabular*}
    \caption{The table shows the sum of squared lengths of the estimated Fr\'echet mean on a GPU. When the computational time was longer than $24$ hours, the value is set to $-$.}
    \label{tab:finsler_comparison_table_extra1}
    \vspace{-2.5em}
\end{sidewaystable}

\begin{sidewaystable}
    \centering
    \scriptsize
    \begin{tabular*}{\textheight}{@{\extracolsep\fill}lcc}
        \toprule%
        & \multicolumn{2}{c}{\textbf{Adagrad (T=100)}} \\\cmidrule{2-3}%
        Finsler Manifold & Length & Runtime \\
        \midrule
        $\mathbb{S}^{2}$ & $\pmb{83.67}$ & $\pmb{2.3119} \pm \pmb{ 0.0006 }$ \\ 
        $\mathbb{S}^{3}$ & $\pmb{78.23}$ & $\pmb{2.4114} \pm \pmb{ 0.0014 }$ \\ 
        $\mathbb{S}^{5}$ & $\pmb{81.83}$ & $\pmb{3.1352} \pm \pmb{ 0.0013 }$ \\ 
        $\mathbb{S}^{10}$ & $\pmb{49.19}$ & $\pmb{5.4872} \pm \pmb{ 0.0015 }$ \\ 
        $\mathbb{S}^{20}$ & $\pmb{28.47}$ & $\pmb{12.3839} \pm \pmb{ 0.0012 }$ \\ 
        $\mathbb{S}^{50}$ & $\pmb{12.06}$ & $\pmb{55.1034} \pm \pmb{ 0.0018 }$ \\ 
        $\mathbb{S}^{100}$ & $\pmb{6.14}$ & $\pmb{180.1266} \pm \pmb{ 0.0070 }$ \\ 
        \hline
        $\mathrm{E}\left( 2 \right)$ & $\pmb{46.76}$ & $\pmb{2.3096} \pm \pmb{ 0.0007 }$ \\ 
        $\mathrm{E}\left( 3 \right)$ & $\pmb{49.15}$ & $\pmb{2.3999} \pm \pmb{ 0.0022 }$ \\ 
        $\mathrm{E}\left( 5 \right)$ & $\pmb{53.28}$ & $\pmb{3.1178} \pm \pmb{ 0.0022 }$ \\ 
        $\mathrm{E}\left( 10 \right)$ & $\pmb{31.17}$ & $\pmb{5.0837} \pm \pmb{ 0.0031 }$ \\ 
        $\mathrm{E}\left( 20 \right)$ & $\pmb{17.89}$ & $\pmb{12.4912} \pm \pmb{ 0.0016 }$ \\ 
        $\mathrm{E}\left( 50 \right)$ & $\pmb{7.90}$ & $\pmb{54.7486} \pm \pmb{ 0.0008 }$ \\ 
        $\mathrm{E}\left( 100 \right)$ & $\pmb{5.22}$ & $\pmb{176.9808} \pm \pmb{ 0.0056 }$ \\ 
        \hline
        $\mathbb{T}^{2}$ & $\pmb{467.59}$ & $\pmb{2.4961} \pm \pmb{ 0.0019 }$ \\ 
        \hline
        $\mathbb{H}^{2}$ & $\pmb{74.73}$ & $\pmb{2.5105} \pm \pmb{ 0.0021 }$ \\ 
        \hline
        Paraboloid & $\pmb{65.71}$ & $\pmb{2.2253} \pm \pmb{ 0.0005 }$ \\ 
        Hyperbolic Paraboloid & $\pmb{61.45}$ & $\pmb{2.2111} \pm \pmb{ 0.0006 }$ \\ 
        \hline
        Gaussian Distribution & $\pmb{65.12}$ & $\pmb{0.7496} \pm \pmb{ 0.0003 }$ \\ 
        Fr\'echet Distribution & $\pmb{14.97}$ & $\pmb{0.6548} \pm \pmb{ 0.0014 }$ \\ 
        Cauchy Distribution & $\pmb{25.70}$ & $\pmb{0.7510} \pm \pmb{ 0.0018 }$ \\ 
        Pareto Distribution & $\pmb{16.07}$ & $\pmb{0.6515} \pm \pmb{ 0.0016 }$ \\ 
        \hline
        VAE MNIST & $\pmb{310.68}$ & $\pmb{462.2765} \pm \pmb{ 0.0565 }$ \\ 
        VAE CelebA & $\pmb{3194.40}$ & $\pmb{1644.1156} \pm \pmb{ 0.0458 }$ \\ 
        \hline
    \end{tabular*}
    \caption{The table shows the sum of squared lengths of the estimated Fr\'echet mean on a GPU. When the computational time was longer than $24$ hours, the value is set to $-$.}
    \label{tab:finsler_comparison_table_extra2}
    \vspace{-2.5em}
\end{sidewaystable}
\end{appendix}


\end{document}